\documentclass{article}
\usepackage{graphicx}
\usepackage{booktabs}
\usepackage{makecell}
\usepackage{adjustbox}
\usepackage{array}
\usepackage{multirow}
\usepackage{amsthm}
\usepackage{tabularx}
\usepackage{threeparttable}
\usepackage{subcaption}
\usepackage{arydshln}

\usepackage[preprint]{neurips_2026}

\usepackage[utf8]{inputenc} 
\usepackage[T1]{fontenc}    
\usepackage{hyperref}       
\usepackage{url}            
\usepackage{booktabs}       
\usepackage{amsfonts}       
\usepackage{amsmath}       
\usepackage{nicefrac}       
\usepackage{microtype}      
\usepackage{xcolor}         
\newtheorem{theorem}{Theorem}[section]
\newtheorem{lemma}[theorem]{Lemma}
\newtheorem{proposition}[theorem]{Proposition}

\usepackage{stmaryrd}
\usepackage[ruled, vlined]{algorithm2e}
\usepackage{wrapfig}
\usepackage{sidecap}
\usepackage{array}
\sidecaptionvpos{figure}{c}

\title{Let It Go or Learn to Self-Correct:\\Continuous Diffusion for Constrained Discrete Tasks}

\author{%
  Mariia Drozdova \\
  University of Geneva \\
  \texttt{mariia.drozdova@unige.ch} \\
  \And
  Stéphane Liem Nguyen \\
  University of Geneva \\
  \texttt{stephane.nguyen@unige.ch} \\
  \And
  François Fleuret \\
  University of Geneva \\
  \texttt{francois.fleuret@unige.ch} \\
}

\begin{document}

\maketitle

\begin{abstract}
Denoising Diffusion Probabilistic Models (DDPMs) generate samples by starting from noise and repeatedly denoising while keeping each update close to the current noisy state. This behavior is effective in many continuous domains, but its role is less clear for globally constrained discrete tasks, such as Sudoku,
graph connectivity, Latin squares, and N-queens. In such settings, early discrete errors can be difficult to undo. As a result, standard diffusion sampling may preserve early mistakes, even when the model’s clean predictions are informative. We compare standard samplers to sampling directly from the model’s clean prediction. Without retraining, this single change improves Sudoku validity from $31\%$ to $95\%$, with consistent gains across the other discrete tasks. We hypothesize that staying close to the current noisy state is harmful because the reverse trajectory can drift off the forward noising distribution the model was trained on. To reduce this train-test mismatch, we further introduce self-correction training, which exposes the model to its own predictions, improving robustness to errors that arise during inference. This substantially improves the performance of standard samplers. Our results suggest that continuous diffusion models can learn nontrivial global constraints, but discrete reasoning tasks require better alignment between training and inference: either through samplers that reduce commitment to early decisions, or through training that teaches the model to correct its own inference-time errors.
\end{abstract}

\newcommand{\alphaf}[1]{\alpha(#1)}
\newcommand{\betaf}[1]{\beta(#1)}
\newcommand{\diffusioncoeff}[1]{\delta(#1)}
\newcommand{\ttodataf}[1]{{#1-1}}
\newcommand{\ttonoisef}[1]{{#1+1}}
\newcommand{\sigmaf}[1]{\sigma(#1)}
\newcommand{\epsilonf}[1]{\epsilon(#1)}
\newcommand{\xf}[1]{x_{#1}}
\newcommand{\noise}{\epsilon}
\newcommand{\noisef}[1]{\epsilon_{#1}}

\newcommand{\timenoise}{T}
\newcommand{\timelast}{1}
\newcommand{\timedata}{0}

\newcommand{\dalphaf}[1]{\dot{\alpha}(#1)}
\newcommand{\dbetaf}[1]{\dot{\beta}(#1)}
\newcommand{\dsigmaf}[1]{\dot{\sigma}(#1)}

\newcommand{\xdata}{\xf{\timedata}}

\newcommand{\alphat}{\alphaf{t}}
\newcommand{\betat}{\betaf{t}}
\newcommand{\xt}{\xf{t}}
\newcommand{\epsilont}{\epsilonf{t}}
\newcommand{\dalphat}{\dalphaf{t}}
\newcommand{\dbetat}{\dbetaf{t}}
\newcommand{\xhatf}[1]{\hat{x}_{#1}}
\newcommand{\uhatf}[1]{\hat{u}_{#1}}
\newcommand{\shatf}[1]{\hat{s}_{#1}}
\newcommand{\xtildef}[1]{\tilde{x}_{#1}}
\newcommand{\epshatf}[1]{\hat{\epsilon}_{#1}}
\newcommand{\xhatdata}{\xhatf{\timedata}}
\newcommand{\xhatdataf}[2]{\xhatf{\timedata}(#1,#2)}

\newcommand{\tnow}{{t_i}}

\newcommand{\tnext}{s}
\newcommand{\xnow}{\xf{\tnow}}
\newcommand{\xnext}{\xf{\tnext}}
\newcommand{\istart}{{N-1}}
\newcommand{\isecond}{{N-2}}

\newcommand{\ilastiii}{3}
\newcommand{\ilastii}{2}
\newcommand{\ilast}{1}
\newcommand{\iend}{0}

\newcommand{\xstart}{\xf{t_{\istart}}}
\newcommand{\xlast}{\xf{t_{\ilast}}}
\newcommand{\xend}{\xf{t_{\iend}}}

\newcommand{\inow}{{n}}

\newcommand{\ibefore}{{\ttodataf{\inow}}}
\newcommand{\inext}{{\ttonoisef{\inow}}}
\newcommand{\xinow}{\xf{\inow}}
\newcommand{\xinext}{\xf{\inext}}
\newcommand{\xinextdata}{\xf{\ibefore}}
\newcommand{\xistart}{\xf{{\istart}}}           %
\newcommand{\xilastiii}{\xf{{\ilastiii}}}           %
\newcommand{\xilastii}{\xf{{\ilastii}}}           %
\newcommand{\xilast}{\xf{{\ilast}}}           %
\newcommand{\xiend}{\xf{{\iend}}}           %

\newcommand{\abarf}[1]{\bar{\alpha}({#1})}

\newcommand{\postvarf}[1]{\tilde{\beta}^{\mathrm{post}}_{#1}}

\newcommand{\ddpmstdf}[1]{\tau_{#1}}

\section{Introduction}

Diffusion models have become a dominant paradigm for generative modeling in continuous domains, achieving state-of-the-art performance on high-dimensional perceptual data such as images \cite{ho2020denoising, karras2022elucidating, song2020denoising, song2021scorebased, rombach2022high} and videos \cite{alonso2024diffusion,ho2022video, xing2024survey}.  Given this success, it is worth asking whether they can also handle globally constrained discrete tasks. Prior work has incorporated constraints into diffusion models through explicit mechanisms: projection \cite{christopher2024constrained, utkarsh2025physics}, guidance \cite{inoue2023layoutdm}, or search-based post-processing \cite{sun2023difusco}. Discrete diffusion methods built on D3PM \cite{austin2021structured} have been applied to combinatorial optimization \cite{sun2023difusco}, puzzle solving and satisfiability \cite{ye2024beyond}, and controlled text and molecule generation \cite{cardei2025constrained}.
On combinatorial problems, DIFUSCO \cite{sun2023difusco} evaluated both D3PM-based discrete diffusion and continuous diffusion \cite{chen2022analog}, reporting the latter as less effective. In this work, we investigate why standard continuous diffusion sampling can fail on these tasks and whether changes to sampling and training can mitigate these failures.

We study constrained discrete tasks where constraints are implicit in the data rather than provided explicitly to the model: Sudoku \cite{jolicoeur2025less,wang2025hierarchical,wewer2025spatial}, 
graph connectivity \cite{du2024learning}, Latin squares, and N-queens. These benchmarks are closely related to recent iterative-reasoning approaches: HRM/TRM use recursive refinement \cite{wang2025hierarchical,jolicoeur2025less}, SRM studies denoising-based spatial reasoning and sampling-order choices \cite{wewer2025spatial}, and IRED casts graph connectivity as iterative energy diffusion \cite{du2024learning}. Our goal is complementary: we use these tasks to diagnose standard continuous diffusion itself, asking when the denoiser learns useful global structure, why inference can fail to produce valid samples, and how it can be improved.

In DDPMs \cite{ho2020denoising}, each reverse step makes a small update conditioned on the current state, which is well suited to perceptual data but its effect on globally constrained discrete tasks is less clear.  In constrained discrete tasks, we observe a failure mode of standard DDPM
sampling: generated samples can look locally plausible (e.g. each cell resembles a valid symbol: one-hot encoding or valid MNIST image) but remain globally invalid (e.g. Sudoku constraints are not satisfied). This raises a question: does the denoiser fail to learn the constraints, or does the reverse update fail to use the denoiser's clean prediction effectively?

We first test this by modifying only the reverse update. Both DDPM and the modified sampler use the current state \(x_t\) as input to the denoiser, producing a clean proposal \(\hat x_0(x_t,t)\). The difference is what happens after this proposal is formed: DDPM carries a direct \(x_t\)-dependent residual into the reverse mean, keeping the next state tied to the current analog state. We remove this direct residual, yielding a limiting case of the generalized DDIM/DDPM sampling family~\cite{song2020denoising} that we call Tweedie reprojection. This inference-only change gives large gains on the same checkpoints, improving one-hot Sudoku validity from \(31\%\) to \(95\%\). Thus, the denoiser has learned useful constraint structure, but not always robustly enough to guide the DDPM trajectory once that trajectory begins to drift.

We interpret this as a form of training-inference mismatch
\cite{ning2023input, ningelucidating, deng2023markuptoimage, ren2024multi,
zhang2025anti, li2024error}. During training, the model sees forward-noised
valid samples. During sampling, however, the reverse process is driven by the
model's own imperfect predictions. Small denoising errors can move the
trajectory toward states that are locally plausible but globally invalid; the DDPM residual can then keep subsequent updates close to these states, causing errors to compound. This motivates us to also modify the training process with self-correction loss, where the model is exposed to noised versions
of its own intermediate predictions and trained to recover the original valid
target. This training modification improves standard samplers: for example,
DDPM's validity on one-hot Sudoku rises from \(31\%\) to \(87\%\). 

Our study focuses on controlled constrained-discrete benchmarks. Tweedie reprojection should be viewed as a diagnostic for decoded constraint satisfaction, not as a general sampler: in perceptually rich domains, the \(x_t\) residual may carry important detail and discarding it may harm generation quality. Self-correction only partially addresses the training-inference mismatch; more complete approaches remain future work.

In summary, our contributions are:
\begin{itemize}
\item We identify a failure mode of continuous diffusion on constrained discrete reasoning tasks, where inference can lead to locally plausible but globally inconsistent states.
\item We study the role of the direct \(x_t\)-dependent residual in DDPM sampling, and show that removing this residual can substantially improve constraint satisfaction without retraining.
\item We propose a self-correction training procedure exposing the denoiser to noisy versions of its own predictions. This reduces the training-inference mismatch and improves standard samplers (Euler, Euler-Maruyama, DDPM). \footnote{Code will be available at \url{https://github.com/MariiaDrozdova/continuous_diffusion_for_constrained_tasks}}
\end{itemize}

\section{Preliminaries}
\label{sec:prelim}

We briefly review the diffusion notation used throughout the paper. We use
\(t=\timedata\) for clean data and \(t=\timenoise\) for noise. In the DDPM
formulation, \(t\in\{\timedata,\timelast,\ldots,\timenoise\}\) is discrete and
\(\ttodataf{t}\) denotes one step toward data. For ODE/SDE samplers, \(t\) is
continuous in \([\timedata,\timenoise]\). We use the same symbols
\(\xf{t}\), \(\alphaf{t}\), and \(\betaf{t}\) in both cases.

\paragraph{Forward process.}
Let \(\xdata\in\mathbb{R}^d\) be a clean sample. We use the standard
variance-preserving diffusion path
\begin{equation}
    \xt
    =
    \alphaf{t}\xdata
    +
    \betaf{t}\noise,
    \qquad
    \noise\sim\mathcal{N}(0,I),
    \qquad
    \betaf{t}\doteq\sqrt{1-\alphaf{t}^{2}},
    \label{eq:forward_marginal}
\end{equation}
where $\xt\in\mathbb{R}^d$ is the resulting noisy sample, \(\alphaf{\timedata}=1\) and \(\alphaf{\timenoise}\approx 0\). Thus
\(q(\xt\mid\xdata)=
\mathcal{N}(\alphaf{t}\xdata,\betaf{t}^{2}I)\). The equivalent discrete Markov
chain realization is given in Appendix~\ref{app:sampling_0}.

\paragraph{DDPM reverse process.}
In the discrete formulation, the forward Markov chain gives a tractable Gaussian
posterior
\begin{equation}
    q(\xf{\ttodataf{t}}\mid \xt,\xdata)
    =
    \mathcal{N}\!\left(
        \mu_t(\xt,\xdata),
        \tilde{\beta}_t I
    \right),
    \label{eq:ddpm_posterior_short}
\end{equation}
with closed-form mean and variance given in Appendix~\ref{app:sampling_0}. Since
\(\xdata\) is unknown at sampling time, DDPM replaces it with the model's current
clean-sample estimate:
\begin{equation}
    p_\theta(\xf{\ttodataf{t}}\mid \xt)
    \doteq
    q\!\left(
        \xf{\ttodataf{t}}
        \mid
        \xt,
        \xhatf{\timedata}(\xt,t)
    \right).
    \label{eq:ddpm_step}
\end{equation}
In other words, each DDPM reverse step conditions on both the current noisy state
\(\xt\) and the model's current clean guess \(\xhatf{\timedata}(\xt,t)\).

\paragraph{Training objective and parameterization.}
Diffusion models can be written in several equivalent parameterizations,
including clean-sample prediction, noise prediction, score prediction, and
velocity prediction \cite{ho2020denoising,song2019generative,lipman2022flow,
lipman2024flow}. At the optimum, these quantities can be converted
between one another; the conversion formulas are given
in Appendix~\ref{app:sampling_0}. Following \cite{li2511back}, the prediction parameterization and the loss
target are separate design choices. In this work, we use \(x\)-prediction and
train directly against the clean sample:
\begin{equation}
    \mathcal{L}_{\mathrm{simple}}(\theta)
    =
    \mathbb{E}_{\xdata,t,\noise}
    \left[
        \left\|
            f_\theta(\xt,t)-\xdata
        \right\|^2
    \right],
    \qquad
    \xhatf{\timedata}(\xt,t)\doteq f_\theta(\xt,t).
    \label{eq:simple_loss}
\end{equation}
Under that loss, the optimal \(x\)-prediction is
\(\mathbb{E}[\xdata\mid \xt]\). By Tweedie's formula \cite{efron2011tweedie},
\begin{equation}
    \mathbb{E}[\xdata\mid \xt]
    =
    \frac{
        \xt+\betaf{t}^{2}\nabla_{\xt}\log q_t(\xt)
    }{
        \alphaf{t}
    },
    \label{eq:tweedie}
\end{equation}
where \(q_t\) is the marginal distribution of \(\xt\). We therefore interpret
\(\xhatf{\timedata}(\xt,t)\) as a plug-in Tweedie estimate of the clean sample.

\paragraph{Continuous-time samplers and noise scale.}
For continuous-time samplers, the same path can be described by a marginal vector
field \(u_t(x)\), yielding the probability-flow ODE
\( \mathrm{d}x
    =
 u_t(x)\,\mathrm{d}t.
\)

More generally, for any non-negative diffusion schedule \(\sigmaf{t}\), the reverse-time SDE
\begin{equation}
    \mathrm{d}x
    =
    \left[
        u_t(x)
        -
        \frac{\sigmaf{t}^{2}}{2}
        \nabla_x\log q_t(x)
    \right]\mathrm{d}t
    +
    \sigmaf{t}\,\mathrm{d}w_t
    \label{eq:sde_family}
\end{equation}
shares the same marginals $q_t$ \cite[Thm.~17]{holderrieth2025introduction}. Thus \(\sigmaf{t}\) changes the
stochasticity of the sampler without changing the target probability path
theoretically. The probability-flow ODE is recovered by setting
\(\sigmaf{t}=0\), while the variance-preserving reverse SDE of
\cite{song2021scorebased} corresponds to
\(\sigmaf{t}^{2}=-2\dalphaf{t}/\alphaf{t}\). In practice, learned denoisers and finite-step solvers introduce model and discretization error, so different choices of $\sigmaf{t}$ can lead to different empirical performance. We therefore treat $\sigmaf{t}$ as a sampler hyperparameter for Euler-Maruyama-based samplers. 

\section{Method}

\subsection{Inference under constraint violations}
\label{sec:constraint_violations}

Under the denoising objective \eqref{eq:simple_loss}, the model is only exposed to noisy versions $\xt$ of valid samples $\xdata$. The learned denoiser \(f_\theta(\xt,t)\) is then used as a plug-in estimate of the clean object in reverse updates of the form \(q(\xf{\ttodataf{t}} \mid \xt, \hat{x}_0)\).  However, at inference time on constrained tasks, imperfect score estimates can drive the sampling trajectory toward intermediate states that are unlikely under the forward noising trajectory of any valid solution. 

This creates a mismatch between the states encountered in training and those visited during inference (exposure bias) \cite{ning2023input, ningelucidating, deng2023markuptoimage, ren2024multi, zhang2025anti, li2024error}. In this regime, the standard DDPM reverse update $q(\xf{\ttodataf{t}} \mid \xt, \xhatf{\timedata})$ can preserve information from
the current state even when that state contains wrong discrete commitments,
leading to global constraint violations. We address this mismatch from two
directions: (i) a sampling rule that reduces direct dependence on the current
state after forming the clean proposal $\xhatf{\timedata}$
(Sec.~\ref{sec:tweedie_reprojection}), and (ii) a training objective that exposes
the model to its own predictions, improving robustness to the states encountered
during sampling (Sec.~\ref{sec:self_correction}).

\subsection{A stochastic anchor-free update}
\label{sec:tweedie_reprojection}

First, we investigate the simplified reverse update that reduces the influence of the current state $\xt$.

We denote \(\xhatf{\timedata}\equiv \xhatf{\timedata}(\xt,t)\), and set $B_t = \sqrt{\diffusioncoeff{t}}\,
(1-\alphaf{\ttodataf{t}}^{2}) / (1-\alphaf{t}^{2})$ where $\diffusioncoeff{t} = \frac{\alphaf{t}^{2}}{\alphaf{\ttodataf{t}}^{2}}$. 
The DDPM ancestral update can then be written as (See Appendices \ref{app:ddpm_derivation} and \ref{app:subsection:tweedie_ddpm})
\begin{equation}
\label{eq:ddpm_anchor_decomposition}
p_\theta^{\mathrm{DDPM}}(\xf{\ttodataf{t}}\mid \xt)
=
\mathcal N\!\Bigl(
\underbrace{\alphaf{\ttodataf{t}}\xhatf{\timedata}}_{\text{Tweedie center}}
+
\underbrace{
B_t\bigl(\xt-\alphaf{t}\xhatf{\timedata}\bigr)
}_{\text{$x_t$-dependent residual}},
\;
\underbrace{
(1-\alphaf{\ttodataf{t}}^{2})
}_{\text{marginal variance}}
\underbrace{
\frac{1-\diffusioncoeff{t}}{1-\alphaf{t}^{2}}
}_{\text{shrinking factor}}
I
\Bigr).
\end{equation}

We investigate the limiting case with no direct dependence on \(\xt\) in the reverse step. We refer to this update as \textsc{Tweedie
reprojection}:
\begin{equation}
\label{eq:tweedie_reprojection}
p_\theta^{\mathrm{Tw}}(\xf{\ttodataf{t}}\mid \xt)
=
\mathcal N\!\left(
\alphaf{\ttodataf{t}}\xhatf{\timedata},
(1-\alphaf{\ttodataf{t}}^{2})I
\right).
\end{equation}

This update removes the direct \(x_t\)-dependent residual from the reverse mean and re-noises the clean prediction using the forward marginal variance. Thus, \(x_t\) is only used through the denoiser prediction \(\hat x_0(x_t,t)\). Tweedie reprojection is an endpoint of the generalized DDIM sampler from \cite{song2020denoising} (see Appendix~\ref{sec:ddim_general}). We use it as an inference-time diagnostic to test whether constraint satisfaction has already been learned by the denoiser, but is not preserved by the standard sampler. 

\subsection{Self-correction training}
\label{sec:self_correction}

To improve constraint satisfaction of the final sample, we introduce \textsc{self-correction}, a training procedure that exposes the model to its own imperfect predictions.

\begin{wrapfigure}{r}{0.45\textwidth}
    \vspace{-10pt}
    \centering
    \includegraphics[width=0.5\textwidth]{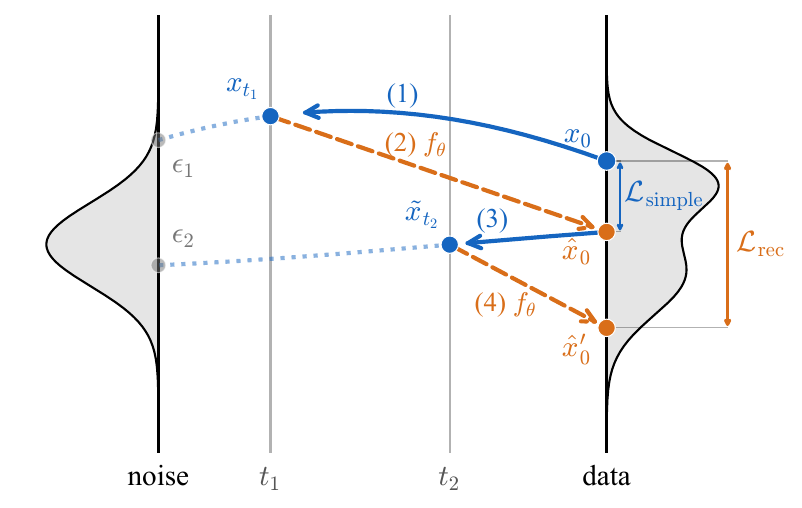}
    \caption{Self-correction training: Combining $\mathcal{L}_\mathrm{rec}$ and $\mathcal{L}_\mathrm{simple}$, training exposes the model to potentially encountered states during inference, which encourages it to correct constraint violations instead of only reinforcing locally consistent structure.}
\label{fig:self_correction}
\vspace{-4.1pt}
\end{wrapfigure}

Illustrated in Figure \ref{fig:self_correction}, given a clean sample $\xdata$, we first sample
a noise level $t_1 \sim \mathcal{U}[\timedata, \timenoise]$ and construct $\xf{t_1}$ as usual. We then compute an intermediate
prediction $\xhatf{\timedata} = f_\theta(\xf{t_1},  t_1)$.

This prediction can be imperfect and may violate constraints, especially when $t_1$ is far from data. We next noise $\xhatf{\timedata}$ (with gradients disabled) at a second level $t_2 \sim \mathcal{U}[\timedata, t_1]$ to obtain $\xtildef{t_2} = \alphaf{t_2} \xhatf{\timedata} + \betaf{t_2} \epsilon_2$. The model is trained to recover the original $\xdata$ from $\xtildef{t_2}$ using the loss $\mathcal{L}_{\mathrm{rec}} = \mathbb{E}[\|f_\theta(\xtildef{t_2}, t_2) - \xdata\|^2]$. This procedure generalizes naturally to multiple prediction steps, though longer unrolls were unstable when trained from scratch. A warm-started multi-step variant trained successfully but did not outperform one-step self-correction; see Appendix~\ref{app:loss_ablations}. The full training objective combines this recovery loss with the usual denoising objective $\mathcal{L}_{\mathrm{simple}}$ from \eqref{eq:simple_loss}:
\begin{equation}
 \mathcal{L}_{\textsc{sc}}  = \mathcal{L}_{\mathrm{rec}} + \lambda_{\mathrm{simple}} \mathcal{L}_{\mathrm{simple}},\quad\lambda_{\mathrm{simple}} \geq 0.
\end{equation}
Here, $\mathcal{L}_{\mathrm{simple}}$ acts as an anchor to the original diffusion objective, preserving single-step denoising on forward-noised data while $\mathcal{L}_{\mathrm{rec}}$ trains correction from self-induced states. Algorithm \ref{alg:self_correction} summarizes the training procedure.

For conditional generation, known values are pinned to their ground-truth after each step. Details on the pinning procedure for training and inference are provided in Appendices \ref{app:self_correction_algo} and \ref{app:sampling_process}, respectively.

\subsection{Local distributional interpretation of self-correction}
\label{sec:sc_distributional_view}

{For any fixed continuous \(x_0\)-prediction \(\bar x_0\in\mathbb R^d\), not necessarily a discrete or valid object, define
\[
q_t(\cdot;\bar x_0)
=
\mathcal N\!\left(
    \alphaf{t}\bar x_0,
    \betaf{t}^2 I
\right).
\]

\begin{proposition}[Local self-correction matching]
\label{prop:local_self_correction_matching}
Fix a valid target \(x_0\) and an arbitrary continuous prediction \(\bar x_0\). Let \(t_0>0\) denote the starting time of the local reverse window, with subsequent reverse times satisfying \(t<t_0\). Suppose that, over a local reverse window, the denoiser returns \(\bar x_0\), and the reverse process uses the DDPM posterior kernels induced by this prediction. If
\(Y_{t_0}\sim q_{t_0}(\cdot;\bar x_0)\), then
\(
Y_t\sim q_t(\cdot;\bar x_0)
\)
at every later reverse time \(t\) in the window. Consequently, for independent \(\epsilon,\epsilon'\sim\mathcal N(0,I)\), with
\[
X_t^{\mathrm{SC}}
=
\alphaf{t}\bar x_0+\betaf{t}\epsilon',
\qquad
X_t^{\mathrm{std}}
=
\alphaf{t}x_0+\betaf{t}\epsilon,
\]
for \(\betaf{t}>0\), we have
\[
D_{\mathrm{KL}}\!\left(
    \mathcal L(X_t^{\mathrm{SC}})
    \,\middle\|\,
    \mathcal L(Y_t)
\right)=0,
\qquad
D_{\mathrm{KL}}\!\left(
    \mathcal L(X_t^{\mathrm{std}})
    \,\middle\|\,
    \mathcal L(Y_t)
\right)
=
\frac{\alphaf{t}^{2}}{2\betaf{t}^{2}}
\lVert x_0-\bar x_0\rVert_2^2.
\]
\end{proposition}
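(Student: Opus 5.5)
The plan has two parts: first show that the DDPM posterior kernel preserves the family $q_t(\cdot;\bar x_0)$, then compute two Gaussian KL divergences in closed form.

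\textbf{Step 1: one reverse step preserves the family.} I would prove by induction over the reverse steps in the window that if $Y_t\sim q_t(\cdot;\bar x_0)$, then $Y_{t-1}\sim q_{t-1}(\cdot;\bar x_0)$. The cleanest route uses the fact that the posterior $q(x_{t-1}\mid x_t,x_0)$ is defined from the forward chain. Its defining property is the Bayes identity: for a \emph{fixed} vector $x_0$,
\[
\int q(x_{t-1}\mid x_t,x_0)\,q(x_t\mid x_0)\,dx_t = q(x_{t-1}\mid x_0).
\]
The forward kernels are Gaussian with means linear in the conditioning vector, so this identity is a purely Gaussian-algebra fact. It does not need $x_0$ to be a valid or discrete object, and so it holds verbatim with $x_0$ replaced by an arbitrary $\bar x_0\in\mathbb R^d$.

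Since the denoiser returns $\bar x_0$ throughout the window, the reverse kernel is exactly $q(\cdot\mid Y_t,\bar x_0)$. The identity then gives the claim, because $q(x_t\mid\bar x_0)=q_t(\cdot;\bar x_0)$.

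As a check, I would verify the identity directly from the decomposition \eqref{eq:ddpm_anchor_decomposition}. Write $Y_t=\alphaf{t}\bar x_0+\betaf{t}\epsilon$.
\begin{itemize}
\item The residual is $B_t\betaf{t}\epsilon$, so the mean of $Y_{t-1}$ is $\alphaf{t-1}\bar x_0$.
\item The variance is
\[
B_t^2(1-\alphaf{t}^2)+(1-\alphaf{t-1}^2)\frac{1-\delta(t)}{1-\alphaf{t}^2}.
\]
Substituting $B_t^2=\delta(t)(1-\alphaf{t-1}^2)^2/(1-\alphaf{t}^2)^2$, this simplifies to
\[
\frac{(1-\alphaf{t-1}^2)\bigl[\delta(t)(1-\alphaf{t-1}^2)+1-\delta(t)\bigr]}{1-\alphaf{t}^2}.
\]
The bracket equals $1-\delta(t)\alphaf{t-1}^2=1-\alphaf{t}^2$, so the variance is $1-\alphaf{t-1}^2=\betaf{t-1}^2$.
\end{itemize}
Induction from $t_0$ then covers every later time in the window. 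This variance simplification is the one step where an algebra slip could occur, so I would carry it out carefully against the appendix formulas.

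\textbf{Step 2: the KL values.}
\begin{itemize}
\item By construction $X_t^{\mathrm{SC}}\sim\mathcal N(\alphaf{t}\bar x_0,\betaf{t}^2I)$, which is exactly $\mathcal L(Y_t)$ by Step 1. Hence the first KL is $0$.
\item $X_t^{\mathrm{std}}\sim\mathcal N(\alphaf{t}x_0,\betaf{t}^2I)$. For two Gaussians with equal covariance $\betaf{t}^2I$ (nondegenerate since $\betaf{t}>0$), the KL reduces to the Mahalanobis term: the trace and log-determinant terms cancel. This gives
\[
\frac{\lVert\alphaf{t}x_0-\alphaf{t}\bar x_0\rVert^2}{2\betaf{t}^2}=\frac{\alphaf{t}^2}{2\betaf{t}^2}\lVert x_0-\bar x_0\rVert_2^2 .
\]
\end{itemize}

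I expect no real obstacle. The only subtlety is noting explicitly that the posterior-marginalization identity is pure Gaussian algebra, valid for any $\bar x_0\in\mathbb R^d$, not just data points.
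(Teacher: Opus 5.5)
Your proposal is correct and follows essentially the same route as the paper. The paper likewise argues that the DDPM posterior kernels induced by a fixed $\bar x_0$ are the exact reverse conditionals of the Gaussian path $q_t(\cdot;\bar x_0)$, so they preserve that path, and then obtains the two KL values from the equal-covariance Gaussian formula; your explicit mean and variance check via the anchored decomposition (which is algebraically correct) verifies a step the paper leaves implicit.
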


The proposition gives a local interpretation of self-correction. It supposes that, over a short reverse-time window, the denoiser repeatedly predicts the same continuous proposal \(\bar x_0\). In this regime, self-correction generates exactly the same local proposal-centered distribution as the idealized reverse process, whereas standard diffusion training remains centered on the original clean sample \(x_0\). Consequently, self-correction explicitly trains the denoiser on the proposal-centered states that it may revisit during inference, while still retaining \(x_0\) as the training target.
This fixed-proposal result is a local idealization; Appendix \ref{app:local_proposal_explained} extends it to the more realistic case where successive proposals vary but remain within a small neighborhood of a common proposal.
}

\section{Experiments}

We train and evaluate on the following benchmarks for constrained data generation and in-painting/completion with a focus on discrete-space reasoning tasks: Sudoku, Sudoku-Extreme, graph connectivity (GC), Latin squares, and \(N\)-queens (see Appendix~\ref{app:additional_tasks_info} for details). For Sudoku, we use completed boards from Sudoku-Extreme and randomly generate conditioning masks, evaluating on 21-clue puzzles and the Medium and Hard clue-count settings of SRM~\cite{wewer2025spatial}. For Sudoku-Extreme~\cite{jolicoeur2025less,wang2025hierarchical}, we instead use the original masks guaranteeing a unique solution for each puzzle. Following IRED~\cite{du2024learning}, we train GC on \(N=12\) and evaluate on \(N=12\) and \(N=18\). We additionally evaluate Latin squares (\(N=7\)) and \(N\)-queens (\(N=14\)) under random in-painting and unconditional generation.

\newcommand{\NA}{\multicolumn{1}{c}{-}}
\newcommand{\best}[1]{\textbf{#1}}
\newcommand{\pmv}[2]{#1$\pm$#2}

\begin{table}[t]
\centering
\caption{\textbf{Success rates across samplers and tasks.} Self-correction training uses $\lambda_\mathrm{simple}=0.1$. Entries report mean $\pm$ standard deviation over three independently trained seeds, each evaluated with an independent test seed. Bold values indicate the best result for each configuration separately.}
\label{tab:main_results}
\setlength{\tabcolsep}{3.8pt}
\renewcommand{\arraystretch}{1.15}
\begin{adjustbox}{max width=\textwidth}
\begin{tabular}{lccccccccccc}
\toprule
\multirow{2}{*}{\makecell[c]{Sampler}}
&
\multicolumn{3}{c}{Sudoku}
&
\multicolumn{2}{c}{Sudoku-Extreme}
&
\multicolumn{2}{c}{N-Queens}
&
\multicolumn{2}{c}{Latin}
&
\multicolumn{2}{c}{GC}
\\
\cmidrule(lr){2-4}
\cmidrule(lr){5-6}
\cmidrule(lr){7-8}
\cmidrule(lr){9-10}
\cmidrule(lr){11-12}
&
\makecell{21 clues}
&
\makecell{Medium}
&
\makecell{Hard}
&
\makecell{pass@1}
&
\makecell{pass@10}
&
\makecell{random}
&
\makecell{gen}
&
\makecell{random}
&
\makecell{gen}
&
\makecell{$N=12$}
&
\makecell{$N=18$}
\\
\midrule

\multicolumn{12}{c}{\textbf{Baseline} \textnormal{(no self-correction loss)}} \\
\addlinespace[1pt]

DDPM
& \pmv{.31}{.02}
& \pmv{.83}{.01}
& \pmv{.38}{.00}
& \pmv{.05}{.00}
& \pmv{.19}{.01}
& \pmv{.53}{.00}
& \pmv{.06}{.01}
& \pmv{.71}{.03}
& \pmv{.87}{.03}
& \pmv{.80}{.01}
& \pmv{.67}{.01}
\\

Euler
& \pmv{.20}{.01}
& \pmv{.78}{.00}
& \pmv{.23}{.00}
& \pmv{.04}{.00}
& \pmv{.13}{.01}
& \pmv{.45}{.02}
& \pmv{.04}{.01}
& \pmv{.58}{.02}
& \pmv{.75}{.03}
& \pmv{.74}{.01}
& \pmv{.58}{.02}
\\

EM
& \pmv{.21}{.01}
& \pmv{.76}{.01}
& \pmv{.28}{.00}
& \pmv{.04}{.00}
& \pmv{.13}{.01}
& \pmv{.45}{.02}
& \pmv{.04}{.00}
& \pmv{.59}{.02}
& \pmv{.79}{.07}
& \pmv{.73}{.00}
& \pmv{.56}{.02}
\\

\addlinespace[2pt]
\cdashline{1-12}
\addlinespace[1pt]

EM decay
& \pmv{.81}{.03}
& \pmv{.96}{.01}
& \pmv{.80}{.02}
& \pmv{.18}{.02}
& \pmv{.58}{.03}
& \pmv{.69}{.01}
& \pmv{.15}{.03}
& \pmv{.96}{.02}
& \pmv{.99}{.01}
& \pmv{.94}{.01}
& \pmv{.89}{.03}
\\

\makecell[l]{Tweedie\\reprojection\\(ours)}
& \best{\pmv{.95}{.01}}
& \best{\pmv{.99}{.00}}
& \best{\pmv{.93}{.01}}
& \best{\pmv{.26}{.01}}
& \best{\pmv{.64}{.02}}
& \best{\pmv{.83}{.00}}
& \best{\pmv{.25}{.01}}
& \best{\pmv{.99}{.01}}
& \best{\pmv{1.00}{.00}}
& \best{\pmv{.97}{.00}}
& \best{\pmv{.95}{.00}}
\\

\midrule

\multicolumn{12}{c}{\textbf{Self-correction loss} \textnormal{(with $\lambda_\mathrm{simple}=0.1$)}} \\
\addlinespace[1pt]

DDPM
& \pmv{.87}{.02}
& \pmv{.96}{.00}
& \pmv{.83}{.02}
& \pmv{.22}{.01}
& \pmv{.64}{.04}
& \pmv{.74}{.04}
& \pmv{.26}{.04}
& \pmv{.98}{.01}
& \pmv{.92}{.04}
& \pmv{.88}{.03}
& \pmv{.77}{.05}
\\

Euler
& \pmv{.79}{.02}
& \pmv{.94}{.00}
& \pmv{.67}{.01}
& \pmv{.22}{.02}
& \pmv{.61}{.03}
& \pmv{.68}{.02}
& \pmv{.14}{.04}
& \pmv{.94}{.02}
& \pmv{.64}{.07}
& \pmv{.84}{.03}
& \pmv{.68}{.04}
\\

EM
& \pmv{.82}{.02}
& \pmv{.93}{.01}
& \pmv{.73}{.03}
& \pmv{.21}{.02}
& \pmv{.61}{.03}
& \pmv{.66}{.02}
& \pmv{.22}{.04}
& \pmv{.97}{.02}
& \pmv{.78}{.07}
& \pmv{.81}{.08}
& \pmv{.68}{.04}
\\

\addlinespace[2pt]
\cdashline{1-12}
\addlinespace[1pt]

EM decay
& \best{\pmv{.98}{.00}}
& \best{\pmv{.99}{.00}}
& \best{\pmv{.98}{.00}}
& \best{\pmv{.53}{.04}}
& \best{\pmv{.90}{.02}}
& \pmv{.81}{.03}
& \best{\pmv{.47}{.04}}
& \best{\pmv{1.00}{.00}}
& \best{\pmv{1.00}{.00}}
& \pmv{.88}{.02}
& \pmv{.80}{.04}
\\

\makecell[l]{Tweedie\\reprojection\\(ours)}
& \best{\pmv{.98}{.01}}
& \best{\pmv{.99}{.00}}
& \pmv{.94}{.01}
& \pmv{.49}{.01}
& \pmv{.84}{.02}
& \best{\pmv{.89}{.01}}
& \pmv{.35}{.06}
& \best{\pmv{1.00}{.00}}
& \pmv{.98}{.02}
& \best{\pmv{.99}{.01}}
& \best{\pmv{.99}{.01}}
\\

\midrule

\multicolumn{12}{c}{
\textbf{Discrete diffusion (D3PM)}
\textnormal{reference}
} \\
\addlinespace[1pt]

Ancestral
& \pmv{.57}{.05}
& \pmv{.88}{.02}
& \pmv{.52}{.04}
& \pmv{.03}{.01}
& \pmv{.10}{.03}
& \pmv{.85}{.01}
& \pmv{.28}{.01}
& \pmv{.88}{.03}
& \pmv{.65}{.05}
& \pmv{.46}{.00}
& \pmv{.29}{.02}
\\

Confidence
& \best{\pmv{1.00}{.00}}
& \best{\pmv{1.00}{.00}}
& \best{\pmv{.99}{.01}}
& \best{\pmv{.24}{.02}}
& \best{\pmv{.44}{.05}}
& \best{\pmv{.97}{.00}}
& \best{\pmv{.48}{.01}}
& \best{\pmv{1.00}{.00}}
& \best{\pmv{.99}{.00}}
& \best{\pmv{.99}{.02}}
& \best{\pmv{1.00}{.00}}
\\

Remask
& {\pmv{.99}{.00}}
& \best{\pmv{1.00}{.00}}
& \pmv{.98}{.00}
& \pmv{.23}{.02}
& \pmv{.42}{.03}
& \best{\pmv{.97}{.00}}
& \best{\pmv{.48}{.01}}
& \best{\pmv{1.00}{.00}}
& \pmv{.96}{.01}
& \best{\pmv{.99}{.02}}
& \best{\pmv{1.00}{.00}}
\\

\bottomrule
\end{tabular}
\end{adjustbox}
\vspace{-2em}
\end{table}

As we use continuous diffusion models, we lift discrete configurations into a continuous space \cite{chen2022analog} by encoding each token as a one-hot vector, yielding $\xdata \in \mathbb{R}^{m\times d}$ (see Appendix \ref{app:one_hot}). For all tasks except MNIST Sudoku \cite{wewer2025spatial} and Graph Connectivity (GC) \cite{du2024learning}, our network $f_\theta$ is a Transformer \cite{vaswani2017attention} with shared 
hyperparameters, taking as input a continuous sample $\xt \in \mathbb{R}^{m \times d}$ and diffusion time $t$ (see Appendix \ref{app:main_architecture} for architecture details).
For Graph Connectivity, we adopt the Neural Logic Machine-based \cite{dong2018neural} architecture of IRED \cite[Table 11]{du2024learning} with their hyperparameters. As for MNIST Sudoku, we use the pre-trained SRM model \cite{wewer2025spatial} without any additional training.

Results are compared across different samplers: DDPM, Euler, Euler-Maruyama (EM), EM decay (EM with decaying \(\sigma_t\); \eqref{eq:sde_family}), and our Tweedie reprojection sampler. For EM, we tune a constant noise level \(\sigma\) separately for each configuration. We first select a checkpoint step, based on the validity rate on the validation set under deterministic Euler sampling. At this step, we partition the validation set into five folds and select a single \(\sigma\), by maximizing the pass@1 validity rate averaged over all folds. The fixed \((\text{step},\sigma)\) setting is then evaluated once on the untouched test set for each seed, and we report mean \(\pm\) std across the three seeds. For EM decay, we jointly tune \((\sigma,t_{\mathrm{decay}})\) using the same procedure. The sampler uses the selected noise level before \(t_{\mathrm{decay}}\) and linearly anneals it to zero toward the data endpoint. See Appendix \ref{app:hyperparam_hardware} for additional details on training and inference hyperparameters.

\paragraph{Results on discrete constrained tasks.}
Table~\ref{tab:main_results} compares the continuous samplers within each training regime. The D3PM rows use separately trained discrete-diffusion models and are included as an external reference. We first consider the baseline models, trained without self-correction. The standard samplers, DDPM and EM, give broadly similar performance across tasks, with DDPM requiring no sampler hyperparameter tuning and EM using the noise level chosen according to the protocol described above. We then compare against two ways of changing the stochastic reverse process: EM decay, which allows larger noise early in sampling and anneals it toward the data endpoint, and Tweedie reprojection, which changes the reverse-step center by removing the $x_t$ residual. Both modifications improve over the standard samplers under baseline, while Tweedie reprojection gives the largest and most consistent gains. For example, on 21-clue Sudoku, success improves from about $31\%$ with DDPM to $95\%$ with Tweedie reprojection. Similar improvements appear across other datasets. This suggests that, for one-hot encoded constrained discrete tasks, a model trained with the standard denoising objective can already make useful clean predictions.

The second block of Table~\ref{tab:main_results} shows the effect of self-correction training. Keeping the same sampler comparison, self-correction substantially improves DDPM and EM, narrowing the gap between standard sampling and Tweedie reprojection. For instance, DDPM rises from about $31\%$ to $87\%$ on 21-clue Sudoku and from about $19\%$ to $64\%$ on Sudoku-Extreme pass@10. This suggests that self-correction partially fixes the reverse-process failure by training the model on sampler-induced states rather than only on forward-noised valid samples. Finally, combining self-correction with the modified samplers gives the strongest overall results: Tweedie reprojection and EM decay are both highly competitive, with each being best or tied for best on different tasks. Overall, the table supports two conclusions: removing the $x_t$ residual is already highly beneficial at inference time for constrained discrete tasks represented as one-hot vectors, and self-correction further improves robustness by training the model to recover from errors encountered during sampling.

Table~\ref{tab:main_results} focuses on exact validity, a strict metric under which a sample fails if any constraint is violated. To provide a more graded view of the generated solutions, we report two additional diagnostics in Appendix~\ref{app:other_metrics}. Table~\ref{tab:commitment} measures how close the final continuous state is to its nearest one-hot configuration, while Table~\ref{tab:constraint-violations} reports the fraction of predicted cells involved in constraint violation. Together, these metrics help distinguish failures of discrete commitment from samples that are locally plausible but still violate a small number of global constraints.

D3PM's confidence-based and remasking samplers perform strongly on standard Sudoku, conditioned \(N\)-queens, and Latin squares. On Sudoku-Extreme, our best self-corrected continuous configurations achieve higher validity than the D3PM samplers evaluated here. For broader context, TRM~\cite{jolicoeur2025less} reports \(87\%\) with a special MLP variant on Sudoku-Extreme (\(74.7\%\) with an attention-based backbone), while IRED~\cite{du2024learning} reports \(99.1\%\) and \(93.8\%\) on graph connectivity at \(N=12\) and \(N=18\). Although not compute-matched, our self-corrected Tweedie sampler reaches \(84\%\) pass@10 on Sudoku-Extreme and nearly perfect accuracy on graph connectivity, while EM decay reaches \(90\%\) pass@10 on Sudoku-Extreme. Together, these results suggest that continuous diffusion remains promising for constrained discrete tasks once the training-inference mismatch is addressed.

\paragraph{MNIST-Sudoku.}
As an inference-only sanity check beyond one-hot grids, we evaluate Tweedie reprojection on the MNIST-Sudoku Hard split of SRM~\cite{wewer2025spatial}, where symbols are represented as MNIST digit images~\cite{lecun1998mnist} but success is still exact Sudoku validity. We use the released SRM diffusion-baseline checkpoint and change only the inference sampler, without retraining or self-correction. The original rectified flow baseline achieves $\mathbf{0.8\%}$ accuracy, while Tweedie reprojection raises accuracy to $\mathbf{65.7\%}$ ($N{=}1000$, 95\% CI $\pm 3$ pp), exceeding the best SRM sampling-order strategy at $51.6\%$.

The SRM experiment isolates the effect of inference alone, since the underlying model is kept fixed. To additionally test whether our conclusions depend on one-hot representations or argmax decoding, we train our own models using alternative continuous representations and decoders (Appendix~\ref{app:other_representations}). We replace one-hot vectors with analog-bit codes and threshold decoding, fixed random embeddings and nearest-neighbour decoding, and mini MNIST-Sudoku with a learned CNN decoder. Across four representations in the single-seed experiments, baseline DDPM validity of \(0.31,0.19,0.26,0.01\) increases to \(0.94,0.89,0.89,0.42\) with Tweedie reprojection and to \(0.87,0.79,0.76,0.39\) with self-correction, showing that the findings are not specific to one-hot representations or argmax decoding.

In addition to the self-correction ablations in Appendix~\ref{app:loss_ablations}, we report rectified-flow and consistency-model experiments in Appendices~\ref{app:rectified_flow_sudoku} and~\ref{app:consistency_models}.

\begin{figure}[t]
\centering

\makebox[\textwidth][c]{%
\begin{subfigure}[t]{0.45\textwidth}
    \centering
    \includegraphics[
        width=\linewidth,
        trim={0.3cm 0.2cm 0.3cm 0.84cm},
        clip
    ]{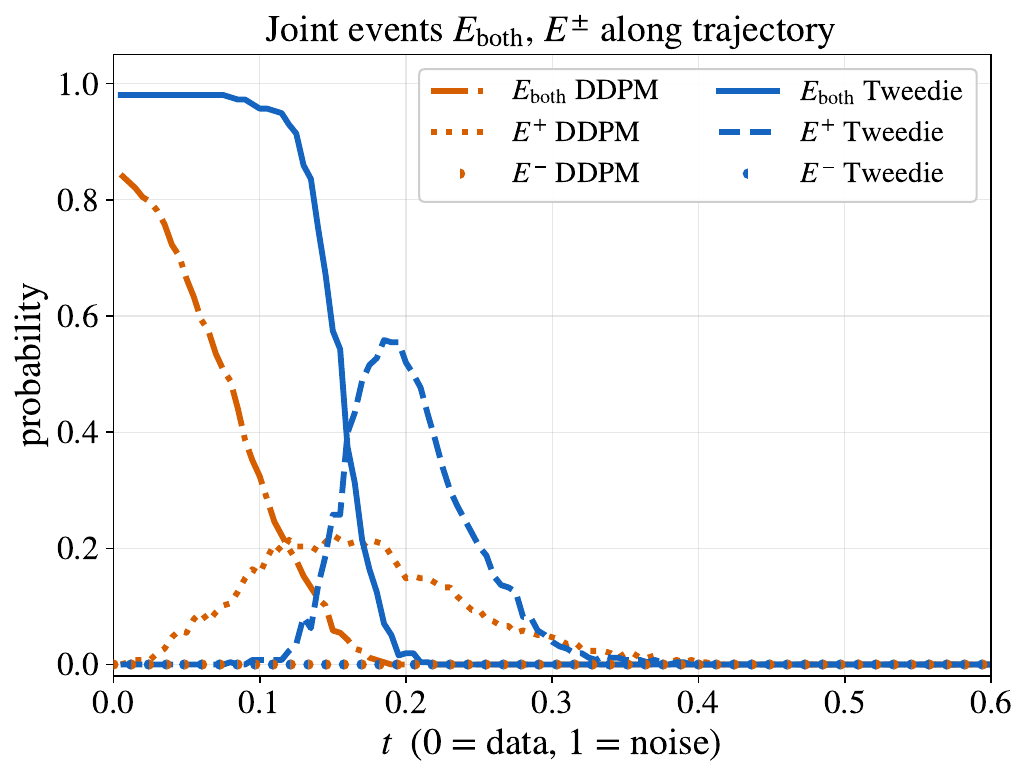}
    \caption{}
    \label{fig:analysis_eplus_eminus}
\end{subfigure}
\hspace{0.005\textwidth}
\begin{subfigure}[t]{0.45\textwidth}
    \centering
    \includegraphics[
        width=\linewidth,
        trim={0.2cm 0.2cm 0.25cm 0.84cm},
        clip
    ]{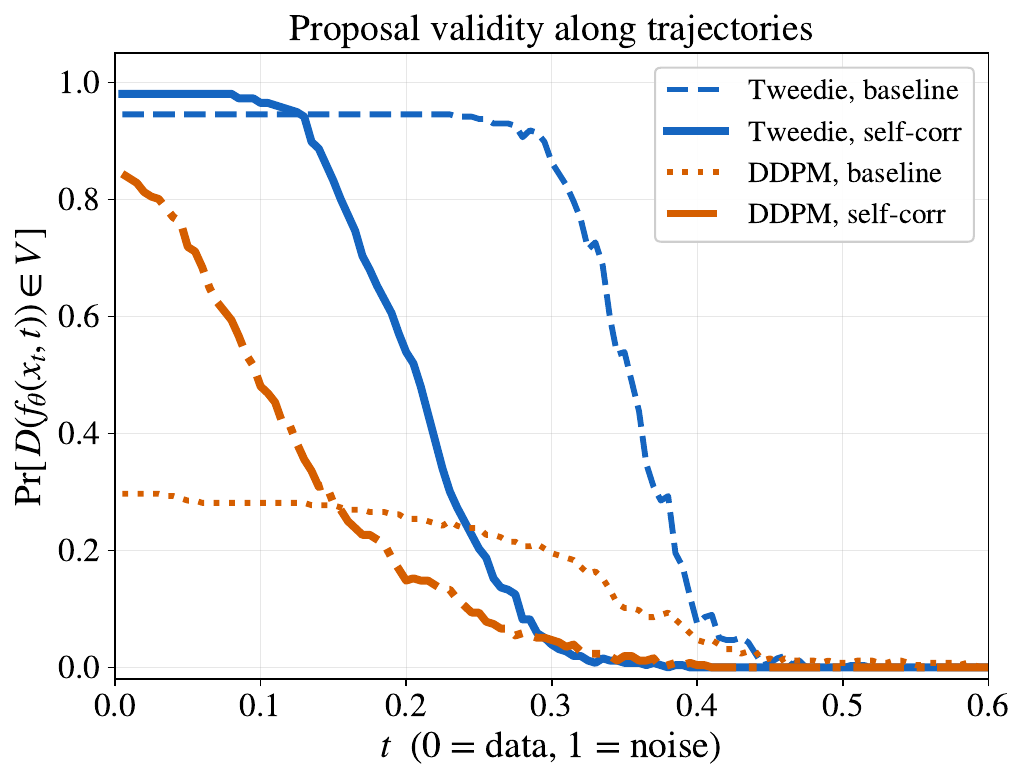}
    \caption{}
    \label{fig:analysis_proposals}
\end{subfigure}%
}
\caption{\textbf{DDPM-Tweedie gap on Sudoku.} \textbf{(a)} Decoded-center comparison at matched sampling times: \(E_{\mathrm{both}}\) means both centers are valid, \(E_+\) means only the Tweedie center is valid, and \(E_-\) means only the DDPM center is valid. \textbf{(b)} Validity of denoiser proposals \(D(f_\theta(x_t,t))\) along Tweedie and DDPM trajectories.}
\label{fig:analysis_mechanism}
\end{figure}

\section{Analysis}
\label{sec:analysis}

\paragraph{Locally plausible but globally invalid states.}
The results above show that the same denoiser can behave very differently under different samplers. The issue is not that the model never learns valid local symbols. The final continuous output can be close to a one-hot representation without its decoded grid being valid (Table~\ref{tab:commitment}). Similarly, in MNIST-Sudoku, individual cells can resemble recognizable digits while the full board violates Sudoku constraints.

This creates a specific training-inference mismatch. Standard denoising trains on forward-noised valid objects, \( x_t=\alpha(t)x_0+\beta(t)\epsilon, x_0\in\mathcal V. \) During inference, the model can instead visit noisy versions of its own imperfect proposals. These states may be locally plausible and globally invalid. Although Gaussian noise has full support, the structured invalid states produced by the sampler might receive little training mass under ordinary forward noising. The denoiser is therefore not directly trained to correct precisely the states that the reverse process may create.

\paragraph{DDPM can preserve globally invalid decoded states.}

Equation~\eqref{eq:ddpm_anchor_decomposition} shows that the DDPM reverse center contains an \(x_t\)-dependent residual in addition to the denoiser's clean prediction. This residual is not intrinsically harmful: it keeps the reverse update close to the current analog state, which is appropriate when the current state lies on a reliable trajectory and contains details that should persist. The problem in our setting is that \(x_t\) can already encode small mistakes that violate discrete constraints, and the denoiser has not necessarily been trained to correct such sampler-induced states. The residual can then keep the update near a locally plausible but globally invalid decoded configuration.

Figure~\ref{fig:analysis_mechanism}\subref{fig:analysis_eplus_eminus} compares the Tweedie and DDPM centers computed from the same current state \(x_t\) and denoiser prediction \(\hat x_0(x_t,t)\). We perform this comparison separately along trajectories generated by each sampler. We write \(E_{\mathrm{both}}\) for the event that both centers decode to valid configurations at the same reverse time, \(E_+\) for the event that only the Tweedie center is valid, and \(E_-\) for the event that only the DDPM center is valid. Thus the residual is harmless on \(E_{\mathrm{both}}\), helpful on \(E_-\), and harmful on \(E_+\). Across trajectories, \(E_+\) is much larger than \(E_-\). In this regime, the DDPM residual more often turns a valid clean proposal into an invalid centered update than it rescues an invalid proposal.
{
Tweedie reprojection is limited by proposal quality: it can exploit an informative clean proposal, but cannot compensate when the denoiser's proposals remain poor. Proposal quality is not the only source of failure, however: on Sudoku-Extreme, a valid proposal appeared at least once in \(62\%\) of failed Tweedie trajectories, but the final sample was still invalid (see Appendix \ref{app:tweedie_failures}).
}
\paragraph{Self-correction targets the missing training states.}
The same mechanism suggests a training-side fix. Instead of exposing the model only to noisy valid objects as standard diffusion does, self-correction also exposes the model to noisy versions of its own intermediate predictions while keeping the original valid object as the target. Thus the model is trained to map model-induced, possibly invalid states back to a valid solution. Figure~\ref{fig:analysis_mechanism}\subref{fig:analysis_proposals} shows this effect. Along DDPM trajectories, the baseline denoiser proposes valid grids much less often than along Tweedie trajectories. Self-correction substantially improves proposal validity on DDPM-induced states. This explains why self-correction narrows the DDPM-Tweedie gap: it does not remove the DDPM anchor, but it makes the denoiser more reliable on the states encountered during inference, improving the trajectory as well.

Additional ablations support this interpretation (see Appendix~\ref{app:loss_ablations}). Input perturbation~\cite{ning2023input} and self-conditioning~\cite{chen2022analog} are weaker than self-correction, suggesting that the important ingredient is not generic robustness to extra noise or an additional memory channel, but exposure to model-induced states. Random symbol corruptions (see Appendix~\ref{app:loss-variants}) also help DDPM, but remain weaker than self-correction, indicating that arbitrary invalid grids are less well matched to the inference sampling distribution than the model's own intermediate predictions.

\paragraph{The sampler gap is not only a noise-scale effect.}
On the sampling side, EM decay shows that additional stochasticity can also help: it injects more noise early in sampling and anneals this noise near the data endpoint. This can break some bad intermediate commitments while still allowing the sample to settle near the end. However, Appendix Figure~\ref{fig:analysis_noise} shows that changing the sampling variance alone does not close the DDPM-Tweedie gap. Scaling the DDPM noise changes how broadly the kernel samples, but not where it is centered. Tweedie reprojection instead changes the center by sampling around the clean proposal rather than continuing the same state-dependent update.

\section{Related work}

\paragraph{Diffusion models and samplers.}
Score-based and diffusion models \cite{sohl2015deep, ho2020denoising, song2021scorebased, song2020denoising, karras2022elucidating, lipman2024flow, holderrieth2025introduction, lai2025principles} have achieved strong results in high-dimensional generation. 
Beyond standard DDPM sampling, alternative inference schemes include DDIM \cite{song2020denoising}, standard ODE and SDE integrators (e.g. Euler, Heun, Euler-Maruyama) \cite{song2021scorebased}, stochastic predictor-corrector methods \cite{song2021scorebased}, and samplers with explicit Langevin-like noise injection steps to maintain correct marginals \cite{karras2022elucidating}. Some methods also self-condition their models on noisy and estimated clean samples to improve sample quality \cite{chen2022analog, watson2023novo}. 
All these methods differ in their trade-off between stability and stochastic exploration, and in how much each update is anchored to the previous noisy sample versus relying on a fresh denoised prediction.

\paragraph{Inference-time control and constrained generation.}
A large body of work incorporates constraints or objectives during or after sampling. Methods include hard conditioning by masking or inpainting \cite{janner2022diffuser, inoue2023layoutdm, mayet2025tdpaint}, Tweedie-based posterior sampling and guidance \cite{chung2023parallel,chung2023diffusion}, projection onto constraint sets \cite{christopher2024constrained, cardei2025constrained, utkarsh2025physics}, inference-time optimization \cite{utkarsh2025physics, christopher2024constrained, cardei2025constrained, li2025hardflow}, guidance \cite{janner2022diffuser, inoue2023layoutdm}, and search-based post-processing \cite{sun2023difusco}.
These approaches typically assume access to explicit constraints or a constraint violation signal. In contrast, we consider the setting where constraints must be learned implicitly from data \cite{du2024learning}, and study failure modes arising purely from inference dynamics.

\paragraph{Diffusion for structured and combinatorial tasks.}
Diffusion models have been applied to structured domains such as Sudoku, graphs, and combinatorial optimization \cite{sun2023difusco, du2024learning, wewer2025spatial, avdeyev2023dirichlet, ye2024beyond, kim2025train, li2025generation, pereira_2026_20037323}.
Discrete diffusion methods \cite{austin2021structured} and structured variants often outperform continuous diffusion \cite{chen2022analog} in such settings \cite{sun2023difusco}, highlighting the challenges of applying continuous diffusion to discrete-structured tasks. Continuous diffusion can be used by embedding discrete data in a continuous space \cite{hoogeboom2021argmax, chen2022analog, avdeyev2023dirichlet}, and these representations can be restricted to a bounded support such as the probability simplex \cite{avdeyev2023dirichlet}.
Our work explores standard continuous diffusion for highly structured generation and completion tasks.

\paragraph{Training-inference mismatch and exposure bias.}
Prior work has identified training-inference mismatch (exposure bias) as a source of error accumulation in sequential models \cite{bengio2015scheduled, ross2011reduction, huang2025self, bachmann2024pitfalls, chen2024diffusion, du2024learning}.
In diffusion models, similar effects arise because the model is trained on noisy ground-truth samples but receives its own predictions as inputs at inference time \cite{ning2023input, ningelucidating, deng2023markuptoimage, ren2024multi, zhang2025anti, li2024error}, and error has been shown to necessarily accumulate along the sampling trajectory of imperfect diffusion models under mild assumptions~\cite{li2024error}.
Recent approaches mitigate exposure bias by modifying the training process: perturbing inputs \cite{ning2023input}, minimizing cumulative errors as regularization \cite{li2024error}, or exposing the model to its own errors via truncated \cite{deng2023markuptoimage} or analytically simulated rollouts \cite{ren2024multi}. 
Other methods mitigate diffusion exposure bias by modifying training objectives, adding correction modules, or rescaling predictions at inference time~\cite{ren2024multi,zhang2025anti,ningelucidating}. Whereas the diffusion-specific approaches cited above address exposure bias primarily in visual generation, we study its effect on decoded globally constrained discrete tasks.

\paragraph{Non-diffusion solvers for constrained tasks.} OptNet \cite{amos2017optnet} and SATNet \cite{wang2019satnet} present differentiable optimization-based solver layers. More recent work explores recursive refinement models for reasoning tasks \cite{wang2025hierarchical, jolicoeur2025less}. Such models mitigate error compounding by training over rollouts of their refinement.
Our work is complementary: we do not aim to compete with these solvers, but to characterize a fundamental limitation of continuous diffusion-based inference in such settings.

\section{Limitations}
\label{sec:limitations}

Our experiments focus on constrained discrete tasks. Most tasks use one-hot encodings with argmax decoding, where validity is symbolic. MNIST-Sudoku, analog-bit and random-embedding experiments show that the same mechanism appears beyond exact one-hot vectors, but the evaluation is still symbolic Sudoku validity. We therefore do not claim that Tweedie reprojection is appropriate for perceptually rich domains, where success also requires modeling a diverse continuous distribution over texture, geometry, color, and fine details.

Tweedie reprojection changes the nature of the reverse process. It uses
\(x_t\) to form the proposal \(\hat x_0=f_\theta(x_t,t)\), but then discards the residual \(x_t-\alpha(t)\hat x_0\), so it no longer enforces the same proximity between noisy states as DDPM posterior updates. This can help when the state contains wrong symbolic commitments,  but in domains where meaningful variation exists within a decoded mode, or where fine continuous details are not fully captured by \(\hat x_0\), repeated reprojection may lose information that a state-preserving sampler would retain.

Finally, self-correction only partially addresses the training-inference mismatch. It exposes the model to one-step model-induced states, improving training coverage, but longer sampling trajectories can still visit states not well represented during training.  A more complete solution may require new training objectives or noise processes that better cover both forward-noised data and the sampler-induced states, which we leave for future work.

\section{Conclusion}
\label{sec:conclusion}

We studied continuous diffusion models for constrained discrete tasks represented
in continuous space. Across Sudoku, 
graph connectivity, Latin squares,
and \(N\)-queens, the same trained denoiser can behave very differently under
different reverse processes. This reveals a training-inference mismatch:
standard denoising trains on forward-noised valid objects, while inference
sampling can create locally plausible but globally invalid states that are not
well covered by the training distribution. Tweedie reprojection exposes this mismatch from the inference side by removing the direct
\(x_t\)-dependent residual from the reverse update. Improved validity indicates that the denoiser's clean proposals contain useful global structure
that the standard trajectory may fail to exploit. Self-correction addresses the
same problem from the training side by exposing the model to its own intermediate
predictions and training recovery to the original valid target.

Overall, our results suggest that continuous diffusion models can learn global
constraints, but constrained discrete reasoning requires better alignment between
the states used for training and the states produced during sampling. 


{
\small

\bibliographystyle{plain}   
\bibliography{literature}   
}

\appendix
\newpage
\section{Diffusion and parameterizations details}\label{app:sampling_0}

This appendix expands the diffusion notation, parameterization conversions, and
continuous-time derivations summarized in Sec.~\ref{sec:prelim}.

\subsection{Forward Markov chain}

The forward marginal used in the main text is
\[
    \xf{t}
    =
    \alphaf{t}\xdata
    +
    \betaf{t}\noise,
    \qquad
    \noise\sim\mathcal{N}(0,I),
    \qquad
    \betaf{t}=\sqrt{1-\alphaf{t}^{2}}.
\]
It induces
\begin{equation}
    q(\xf{t}\mid \xdata)
    =
    \mathcal{N}
    \left(
        \alphaf{t}\xdata,
        (1-\alphaf{t}^{2})I
    \right).
    \label{eq:forward_closed}
\end{equation}
Equivalently, the same marginals can be generated by the discrete Markov chain
\begin{equation}
    \xf{{t}}
    =
    \sqrt{\diffusioncoeff{t}}\,
    \xf{\ttodataf{t}}
    +
    \sqrt{1-\diffusioncoeff{t}}\,
    \noisef{t},
    \qquad
    \noisef{t}\sim\mathcal{N}(0,I),
    \label{eq:fwd_markov}
\end{equation}
with transition kernel
\[
    q(\xf{t}\mid \xf{\ttodataf{t}})
    =
    \mathcal{N}
    \left(
        \sqrt{\diffusioncoeff{t}}\xf{\ttodataf{t}},
        (1-\diffusioncoeff{t})I
    \right).
\]
The one-step coefficient is chosen so that Eq.~\eqref{eq:fwd_markov} reproduces
the marginals in Eq.~\eqref{eq:forward_closed}:
\begin{equation}
    \diffusioncoeff{t}
    =
    \frac{\alphaf{t}^{2}}{\alphaf{\ttodataf{t}}^{2}},
    \qquad
    \alphaf{t}^{2}
    =
    \prod_{s=1}^{t}\diffusioncoeff{s}.
    \label{eq:delta_def}
\end{equation}
\subsection{DDPM posterior}

Since the forward process is Gaussian, the posterior conditional on the clean
sample is tractable \cite{ho2020denoising}:
\begin{equation}
    q(\xf{\ttodataf{t}}\mid \xt,\xdata)
    =
    \mathcal{N}
    \left(
        \mu_t(\xt,\xdata),
        \tilde{\beta}_t I
    \right),
    \label{eq:posterior}
\end{equation}
with mean and variance given by:
\begin{align}
    \mu_t(\xt,\xdata)
    &=
    \alphaf{\ttodataf{t}}\xdata
    \frac{1-\diffusioncoeff{t}}{1-\alphaf{t}^{2}}
    +
    \frac{
        \sqrt{\diffusioncoeff{t}}
        \left(1-\alphaf{\ttodataf{t}}^{2}\right)
    }{
        1-\alphaf{t}^{2}
    }
    \xt,
    \label{eq:posterior_mean}
    \\
    \tilde{\beta}_t
    &=
    \left(1-\alphaf{\ttodataf{t}}^{2}\right)
    \frac{1-\diffusioncoeff{t}}{1-\alphaf{t}^{2}}.
    \label{eq:posterior_var}
\end{align}
Thus DDPM sampling replaces the unknown \(\xdata\) with the model prediction
\(\xhatf{\timedata}(\xt,t)\) and samples from
\[
    p_\theta(\xf{\ttodataf{t}}\mid \xt)
    \doteq
    q
    \left(
        \xf{\ttodataf{t}}
        \mid
        \xt,
        \xhatf{\timedata}(\xt,t)
    \right).
\]

The posterior mean can also be written in an anchored form:
\begin{equation}
    \mu_t(\xt,\xdata)
    =
    \alphaf{\ttodataf{t}}\xdata
    +
    B_t
    \left(
        \xt-\alphaf{t}\xdata
    \right),
    \qquad
    B_t
    \doteq
    \frac{
        \sqrt{\diffusioncoeff{t}}
        \left(1-\alphaf{\ttodataf{t}}^{2}\right)
    }{
        1-\alphaf{t}^{2}
    }.
    \label{eq:posterior_anchor_form}
\end{equation}
This decomposition separates the clean-sample component from the residual
inherited from the current noisy state. In the main paper, this residual is the
source of the anchoring effect that distinguishes DDPM-style updates from pure
Tweedie reprojection.

\subsection{Prediction parameterizations}
\label{app:parameterizations}

A diffusion model can be parameterized to predict the clean sample \(\xdata\),
the noise \(\noise\) \cite{ho2020denoising}, the marginal score
\(s_t(\xt)=\nabla_{\xt}\log q_t(\xt)\) \cite{song2019generative}, or the
marginal velocity \(u_t(\xt)\) in the flow-matching formulation
\cite{lipman2022flow,lipman2024flow}. The prediction parameterization and the
loss target are separate choices \cite{li2511back}. At the population optimum,
these quantities are equivalent up to deterministic conversions induced by the
forward process \cite{lai2025principles}.

Let
\[
    x^*_{\timedata}(\xt,t)
    \doteq
    \mathbb{E}[\xdata\mid \xt].
\]
Then the corresponding optimal noise, score, and velocity predictions are
\begin{equation}
\begin{gathered}
    \epsilon^*(\xt,t)
    =
    \frac{
        \xt-\alphaf{t}x^*_{\timedata}(\xt,t)
    }{
        \betaf{t}
    },
    \qquad
    s^*(\xt,t)
    \doteq
    \nabla_{\xt}\log q_t(\xt)
    =
    -\frac{\epsilon^*(\xt,t)}{\betaf{t}},
    \\
    u^*(\xt,t)
    =
    \frac{\dbetaf{t}}{\betaf{t}}\xt
    +
    \left(
        \dalphaf{t}
        -
        \frac{\alphaf{t}\dbetaf{t}}{\betaf{t}}
    \right)
    x^*_{\timedata}(\xt,t).
\end{gathered}
\label{eq:reparam}
\end{equation}
Tweedie's formula gives
\[
    x^*_{\timedata}(\xt,t)
    =
    \mathbb{E}[\xdata\mid \xt]
    =
    \frac{
        \xt+\betaf{t}^{2}\nabla_{\xt}\log q_t(\xt)
    }{
        \alphaf{t}
    }.
\]
Therefore an \(x\)-prediction model can be used in score-, noise-, or
velocity-based samplers by applying Eq.~\eqref{eq:reparam}. In our experiments,
we use \(x\)-prediction and the \(x\)-prediction loss in
Eq.~\eqref{eq:simple_loss}, following the practical distinction between
prediction type and loss target highlighted by \cite{li2511back}.

\subsection{Continuous-time derivation}
\label{sec:ct}

For continuous-time samplers, we view the variance-preserving forward process as
a probability path \(\{q_t\}_{t\in[\timedata,\timenoise]}\):
\begin{equation}
    \xf{t}
    =
    \alphaf{t}\xdata
    +
    \betaf{t}\noise,
    \qquad
    q_t(x\mid \xdata)
    =
    \mathcal{N}
    \left(
        \alphaf{t}\xdata,
        \betaf{t}^{2}I
    \right),
    \qquad
    \noise\sim\mathcal{N}(0,I).
    \label{eq:ct_path}
\end{equation}
Here \(t\) is continuous, \(\alphaf{\timedata}=1\),
\(\alphaf{\timenoise}\approx 0\), and
\(\betaf{t}=\sqrt{1-\alphaf{t}^{2}}\).

\paragraph{Conditional and marginal vector fields.}
Differentiating Eq.~\eqref{eq:ct_path} with respect to \(t\), for a fixed pair
\((\xdata,\noise)\), gives the conditional velocity field
\begin{equation}
    u_t(x\mid \xdata)
    =
    \dalphaf{t}\xdata
    +
    \dbetaf{t}\noise
    =
    \frac{\dbetaf{t}}{\betaf{t}}x
    +
    \left(
        \dalphaf{t}
        -
        \alphaf{t}
        \frac{\dbetaf{t}}{\betaf{t}}
    \right)
    \xdata,
    \label{eq:cond_vf}
\end{equation}
where we used
\[
    \noise
    =
    \frac{x-\alphaf{t}\xdata}{\betaf{t}}.
\]
Since \(u_t(x\mid\xdata)\) is linear in \(\xdata\), the marginal vector field is
obtained by replacing \(\xdata\) with
\(\mathbb{E}[\xdata\mid \xt=x]\). Using Tweedie's formula gives
\begin{equation}
    u_t(x)
    =
    \frac{\dalphaf{t}}{\alphaf{t}}x
    +
    \left(
        \betaf{t}^{2}
        \frac{\dalphaf{t}}{\alphaf{t}}
        -
        \betaf{t}\dbetaf{t}
    \right)
    \nabla_x\log q_t(x).
    \label{eq:marg_vf}
\end{equation}

\paragraph{Probability-flow ODE and SDE family.}
The deterministic dynamics with marginals \(q_t\) are given by the
probability-flow ODE
\begin{equation}
    \mathrm{d}x
    =
    u_t(x)\,\mathrm{d}t.
\label{eq:pf_ode}
\end{equation}
More generally, for any non-negative diffusion schedule \(\sigma(t)\), the reverse-time SDE driven by a standard Wiener process \(w_t\),
\[
    \mathrm{d}x
    =
    \left[
        u_t(x)
        -
        \frac{\sigmaf{t}^{2}}{2}
        \nabla_x\log q_t(x)
    \right]\mathrm{d}t
    +
    \sigmaf{t}\,\mathrm{d}w_t
\]
has the same marginals \(q_t\) in the exact-score and exact-solver limit
\cite[Thm.~17]{holderrieth2025introduction}. The probability-flow ODE is the
special case \(\sigmaf{t}=0\). The variance-preserving reverse SDE of
\cite{song2021scorebased} corresponds to
\[
    \sigmaf{t}^{2}
    =
    -2\frac{\dalphaf{t}}{\alphaf{t}}.
\]

\paragraph{Numerical samplers.}
The continuous-time formulation allows sampling by numerically solving either
the probability-flow ODE or the SDE family. We use ODE solvers such as Euler and
Heun, and SDE solvers such as Euler--Maruyama
\cite{song2021scorebased,karras2022elucidating}. In all cases, we plug the
network prediction \(\xhatf{\timedata}(\xt,t)\) into the required score or
velocity expression using the conversions in Eq.~\eqref{eq:reparam}. DDPM and
DDIM can also be derived from this continuous-time perspective
\cite{song2020denoising,song2021scorebased}.

\section{Samplers}\label{app:sampling}

We explain various sampling update rules below and then describe the overall sampling process for both generation and completion/in-painting with the pinning procedure.

\subsection{DDIM generalized formula: DDIM, DDPM and Tweedie reprojection}\label{sec:ddim_general}

DDIM \cite{song2020denoising} introduces a family of forward processes with the same marginal distributions \eqref{eq:forward_marginal} as DDPM \cite{ho2020denoising}. Their generalized sampling update takes the form:
\begin{align}
q_\kappa(\xf{\ttodataf{t}} \mid \xt, \xhatf{\timedata}) &= \mathcal{N}\!\bigl(\alphaf{\ttodataf{t}} \xhatf{\timedata} + \sqrt{1 - \alphaf{t-1}^2 - \kappa_t^2} \cdot \frac{\xt - \alphaf{t} \xhatf{\timedata}}{\sqrt{1-\alphaf{t}^2}}, \kappa_t^2 I\bigr)\label{eq:ddim_general}\\
&=\mathcal{N}\!\bigl(\alphaf{\ttodataf{t}} \xhatf{\timedata} + \sqrt{1 - \alphaf{t-1}^2 - \kappa_t^2} \cdot \epshatf{t}, \kappa_t^2 I\bigr)\label{eq:ddim_simplified_general}
\end{align}
Different choices of $\kappa_t^2$ recover known samplers: $\kappa_t^2 = \tilde{\beta}_t = (1-\alphaf{\ttodataf{t}}^{2}) \cdot \frac{1-\diffusioncoeff{t}}{1-\alphaf{t}^{2}}$ yields DDPM \cite{ho2020denoising}, while $\kappa_t^2 = 0$ gives the deterministic DDIM update \cite{song2020denoising}.

Our Tweedie reprojection corresponds to $\kappa_t^2 = 1 - \alphaf{t-1}^2$. This endpoint relation does not imply exact marginal preservation after replacing the true clean sample \(x_0\) by the learned prediction \(\hat x_0(x_t,t)\), even if that prediction equals the exact conditional mean.

\paragraph{Linear-path sampling for SRM.}
For the linear path \(x_t=(1-t)x_0+t\epsilon\), with normalized time \(t\in[0,1]\), write \(v_\theta(x_t,t)\) for the corresponding velocity prediction. The clean and noise estimates are
\[
\hat x_0=x_t-tv_\theta(x_t,t),
\qquad
\hat\epsilon=x_t+(1-t)v_\theta(x_t,t).
\]
The deterministic generalized DDIM update to \(s<t\) is therefore
\[
x_s=(1-s)\hat x_0+s\hat\epsilon
    =x_t+(s-t)v_\theta(x_t,t),
\]
which is an Euler step on the linear path. Tweedie reprojection instead uses \(x_s=(1-s)\hat x_0+s\epsilon'\), with fresh \(\epsilon'\sim\mathcal N(0,I)\).

\subsection{Time discretizations}

Whereas we use uniform / linear time discretization (equidistant timesteps) throughout our work, others can also be used in practice \cite{karras2022elucidating}.

We will denote a discretization of $N \in \mathbb{N}^+$ time points with:
\begin{equation}
\{t_{\istart}, t_{\isecond}, \dots, t_{\iend}\}, \qquad t_{\istart} = \timenoise, t_{\iend} = \timedata.
\end{equation}
where we also use $h_i \doteq \tnow - t_{i-1}$ as positive step sizes since timesteps are decreasing.

To compare the generalized sampling update \eqref{eq:ddim_general} (including DDIM, DDPM, and Tweedie reprojection) with other samplers, we can replace it as follows:
\begin{equation}
\xf{\tnext} = \alphaf{\tnext} \xhatf{\timedata} + \sqrt{1 - \alphaf{\tnext}^2 - \kappa_{\tnow}^2} \cdot \frac{\xnow - \alphaf{\tnow} \xhatf{\timedata}}{\sqrt{1-\alphaf{\tnow}^2}} + \kappa_{\tnow} \cdot \epsilonf{\tnow}
\label{eq:ddim_general_general_timesteps}
\end{equation}
with $\epsilonf{\tnow} \sim \mathcal{N}(0, I)$ and $i=\istart,\dots,\ilast$.

\subsection{Numerical ODE and SDE solvers: Euler, Heun, and Euler-Maruyama}\label{sec:numerical_solvers}

The Euler and Heun (aka improved Euler) methods are first- and second-order ODE solvers, popular for their low Number of Function Evaluations (NFE) and ease of implementation.

They become deterministic samplers once used to solve the PF-ODE \eqref{eq:pf_ode}. Starting from $\xf{\timenoise} \sim \mathcal{N}(0, I)$, Euler and Heun iterate from $i=\istart$ to $i=\ilast$, using the estimate of the marginal vector field, retrieved from the model's $x$-prediction.

\paragraph{Euler update rule}
\begin{equation}
\xnext = \xnow - h_i \cdot \uhatf{\tnow}(\xnow),\label{eq:euler}
\end{equation}

\paragraph{Euler-Maruyama (EM) update rule}

Recall that for \textit{any non-negative} diffusion schedule $\sigma_t$, the stochastic dynamics
\begin{equation}
    \mathrm{d}x
    = \Bigl[u_t(x) - \tfrac{\sigma_t^{2}}{2} \nabla_x \log q_t(x)\Bigr]\mathrm{d}t
    + \sigma_t \mathrm{d}w.
\end{equation}
share the same marginals $q_t$ \cite[Thm.~17]{holderrieth2025introduction}, reducing to the deterministic case \eqref{eq:pf_ode} for $\sigma_t = 0$.

Euler-Maruyama (EM) is a numerical SDE solver that can be used as a stochastic sampler. Starting from $\xf{\timenoise} \sim \mathcal{N}(0, I)$, EM iterates from $i=\istart$ to $i=\ilast$, using the estimate of the marginal vector field and score, retrieved from the model's $x$-prediction.
\begin{equation}
\xnext = \xnow + h_i \cdot \hat{f}(\xnow, \tnow) + g(\tnow) \sqrt{h_i} \cdot \epsilonf{\tnow},\qquad \epsilonf{\tnow} \sim \mathcal{N}(0, I)
\end{equation}
where we denote $\hat{f}(x, t) \doteq - \uhatf{t}(x) + \tfrac{\sigma_t^{2}}{2} \shatf{t}(x)$ as the \emph{drift} coefficient and $g(t) \doteq \sigma_t$ as the \emph{diffusion} coefficient.

In our work, we refer to EM as using a fixed $g(\tnow) = \sigma$, and EM decay  as using a \emph{decreasing} diffusion schedule $\sigma_t$.

\paragraph{Euler--Maruyama noise scale.}
The SDE family \eqref{eq:sde_family} motivates treating \(\sigmaf{t}\) as a sampler hyperparameter.
Theoretically, changing \(\sigmaf{t}\) changes the stochastic dynamics but not the marginal path \(q_t\), assuming exact scores and exact integration. In practice, finite-step solvers and learned denoisers introduce discretization and model error, so the choice of \(\sigmaf{t}\) affects empirical performance. We therefore tune the EM noise scale by five-fold cross-validation. For EM decay, we also tune the time at which \(\sigmaf{t}\) begins linearly annealing to zero near the data endpoint.

\subsection{Sampling process and pinning procedure}\label{app:sampling_process}

The algorithm \ref{alg:sampling_process} summarizes the overall sampling process for both unconditional and conditional generation (completion or in-painting), with pinning procedures for the latter. Recall that we go from $\xf{\timenoise}$ to $\xdata$ when sampling.

\begin{algorithm}[H]
\caption{(Un-)conditional generation. 
Inputs are the time discretization, optional conditioning $c$, its corresponding mask $m$, and soft-to-hard conditioning threshold $t^*$}
\label{alg:sampling_process}
\DontPrintSemicolon

\SetKwProg{Proc}{Procedure}{:}{}
\SetKwFunction{FSample}{sample}
\SetKwFunction{FPin}{pin}
\SetKwFunction{FSoftPin}{soft\_pinning}
\SetKwFunction{FHardPin}{hard\_pinning}

\Proc{\FSample{$\{t_\istart, t_\isecond, \dots, t_\iend\}$, $c$, $m$, $t^*$}}
{
    Sample initial tensor $\xf{\timenoise} \sim \mathcal{N}(0, I)$ \;
    Initialize buffer to store trajectory $\{\xf{t_i}\}_{i}$\;
    \For{$i=\istart$ \KwTo $\ilast$}{
        $s = t_{i-1}$ \;
        Compute step size $h_i = \tnow - s$ \;
        Sample $\xnext = \texttt{update\_rule($\xnow, \tnow, h_i, \dots$)}$ \; 
        \If{$m \neq \emptyset$}{
            
            $\texttt{pin($\xnext, \tnext, t^*, c, m$)}$ \; 
        }
        Append $\xnext$ to buffer \;
    }
}

\Proc{\FPin{$\xnext$, $\tnext$, $t^*$, $c$, $m$}}
{   
    \If{$\tnext \geq t^*$}{
        $\texttt{soft\_pinning($\xnext, c, m$)}$ \; 
    }
    \Else{
        $\texttt{hard\_pinning($\xnext, c, m$)}$ \; 
    }
}

\Proc{\FSoftPin{$\xnext$, $c$, $m$}}
{   
    Sample $c_\tnext \sim q(\xnext \mid c)$ \;
    $\xnext[m] = c_\tnext[m]$ \; 
}

\Proc{\FHardPin{$\xnext$, $c$, $m$}}
{
    $\xnext[m] = c[m]$ \; 
}
\end{algorithm}

\section{\textsc{self-correction} algorithm}
\label{app:self_correction_algo}

We summarize the \textsc{self-correction} training procedure in Algorithm \ref{alg:self_correction}. We denote $\mathrm{sg}(.)$ as the stop-gradient operator, and the rest are described in previous sections or are self-explanatory.

\begin{algorithm}[H]
\caption{\textsc{self-correction} training algorithm for (un-)conditional generation for a full batch. Inputs are the regularization coefficient $\lambda_\mathrm{simple}$, optional conditioning $c$ and its corresponding mask $m$}
\label{alg:self_correction}
\DontPrintSemicolon

\SetKwProg{Proc}{Procedure}{:}{}
\SetKwFunction{FTrain}{training\_procedure}
\SetKwFunction{FUpdateDiffusionModel}{update\_diffusion\_model}

\Proc{\FTrain{}}
{
    \For{steps}{
        \texttt{update\_diffusion\_model()} \;
    }
}
\Proc{\FUpdateDiffusionModel{$\lambda_\mathrm{simple}$, $c$, $m$}}
{
    Sample $\xdata \sim q_\timedata(x) $ \; 
    Sample $t_1 \sim \mathcal{U}[\timedata, \timenoise]$ and $t_2 \sim \mathcal{U}[\timedata, t_1]$\;
    Sample $\xf{t_1} \sim q(\xf{t_1} \mid \xdata)$\;
    \If{$m \neq \emptyset$}{
        $\texttt{hard\_pinning($\xf{t_1}, c, m$)}$ \;
    }
    Compute $\xhatf{\timedata} = f_\theta(\xf{t_1}, t_1)$ \;
    \If{$m \neq \emptyset$}{
        $\texttt{hard\_pinning($\xhatf{\timedata},  c, m$)}$ \;
    }
    Sample $\xtildef{t_2} \sim q(\xf{t_2} \mid \text{sg}(\xhatf{\timedata})$) \;
    \If{$m \neq \emptyset$}{
        $\texttt{hard\_pinning($\xtildef{t_2}, c, m$)}$ \;
    }
    Compute $\xhatf{\timedata}'=f_\theta(\xtildef{t_2}, t_2)$ \; 
    \If{$m \neq \emptyset$}{
        $\texttt{hard\_pinning($\xhatf{\timedata}', c, m$)}$ \;
    }
    Compute simple loss $\mathcal{L}_{\mathrm{simple}}(\theta)
    =
    \mathrm{MSE}(\xhatf{\timedata}, \xdata)$ \;
    Compute recovery loss $\mathcal{L}_{\mathrm{rec}}(\theta) = \mathrm{MSE}(\xhatf{\timedata}', \xdata)$ \;
    Compute final loss $\mathcal{L}_{\mathrm{SC}}(\theta)  = \mathcal{L}_{\mathrm{rec}}(\theta) + \lambda_{\mathrm{simple}} \mathcal{L}_{\mathrm{simple}}(\theta)$ \;
    Update model $f_\theta$ \;
}
\end{algorithm}

This procedure generalizes naturally to multiple prediction steps. Longer unrolls were unstable when trained from scratch; the warm-started multi-step experiment in Appendix~\ref{app:loss_ablations} trained successfully but did not outperform one-step self-correction.

\paragraph{Computational overhead of self-correction training.}
  Standard denoising training performs one gradient forward pass and one
  backward pass per optimizer step. Self-correction training adds (i)~one
  additional \emph{gradient-free} forward pass, which produces the model
  prediction that is re-noised to the second time point, and (ii)~one gradient
  forward pass on a sub-batch of $\lfloor \lambda_\mathrm{simple} B \rfloor$
  samples for $\mathcal{L}_\mathrm{simple}$; a single backward pass is taken on
  the combined objective. Counting a backward pass as twice the cost of a
  forward pass, this predicts a per-step training cost of
  $(4 + 3\lambda_\mathrm{simple})/3 \approx 1.43\times$ the baseline for
  $\lambda_\mathrm{simple}=0.1$. Both variants are trained for the
  same number of optimizer steps ($2\times10^6$), and inference cost is
  unchanged, as self-correction modifies only the training objective and not
  the sampler.

\section{Additional tasks details}\label{app:additional_tasks_info}

\newcolumntype{C}[1]{>{\centering\arraybackslash}p{#1}}

\begin{table}[t]
\centering
\caption{\textbf{Dataset summary.} Throughout, clues denote partial observations the model 
conditions on: randomly positioned revealed cells with count $k \sim 
\mathcal{U}\llbracket a, b \rrbracket$ where $\llbracket a, b \rrbracket = \{a, \dots, b\}$ (Sudoku, Latin square, $N$-queens); or the full adjacency matrix (GC, always given at 
train and inference). $^\dagger$ Out-of-distribution.}
\label{tab:dataset_summary}
\setlength{\tabcolsep}{4pt}
\renewcommand{\arraystretch}{1.18}

\begin{threeparttable}
\begin{adjustbox}{max width=\textwidth}
\begin{tabular}{l C{3.1cm} C{3.35cm} C{3.35cm} C{3.2cm} C{2.2cm}}
\toprule
 & Sudoku
 & Sudoku-Extreme
 & GC
 & Latin square
 & $N$-queens
\\
\midrule

Grid size
& $9 \times 9$
& $9 \times 9$
& $N \times N$
& $7 \times 7$
& $14 \times 14$
\\

Cells $m$
& $81$
& $81$
& $N^2$
& $49$
& $14$
\\

Vocab $d$
& $9$
& $9$
& $2$
& $7$
& $14$
\\

\midrule

Train clues
& $\llbracket 0,80 \rrbracket$
& $\llbracket 17,35 \rrbracket$
& \makecell[c]{Adj. mat.\\$N=12$}
& $\llbracket 0,N^2-1 \rrbracket$
& $\llbracket 0,N-1 \rrbracket$
\\

\midrule

Eval clues
& \makecell[c]{
$21$\\
$\llbracket 27,53 \rrbracket$ (Med.)\\
$\llbracket 0,26 \rrbracket$ (Hard)\\
}
& $\llbracket 17,36 \rrbracket$
& \hfill \makecell[l]{
Adj. mat., $N=12$\\
Adj. mat., $N=18^\dagger$
} \hfill
& \makecell[c]{
random: $14$\\
generation: $0$
}
& \makecell[c]{
random: $7$\\
generation: $0$
}
\\

\midrule

Validity
& \makecell[c]{Sudoku\\rules}
& \makecell[c]{Sudoku\\rules}
& \makecell[c]{Connectivity\\matches\\ground truth}
& \makecell[c]{One symbol\\per row/col}
& \makecell[c]{No two\\queens attack}
\\

\bottomrule
\end{tabular}
\end{adjustbox}
\end{threeparttable}
\end{table}

We train and evaluate on the following benchmarks for constrained data generation and in-painting/completion with a focus on discrete-space reasoning tasks: Sudoku-Extreme \cite{wang2025hierarchical, jolicoeur2025less}, MNIST Sudoku \cite{wewer2025spatial}, graph-connectivity (GC, \cite{du2024learning})  and two datasets we introduce: Latin square and N-queens. Throughout, clues denote randomly positioned revealed cells.

\paragraph{Sudoku.}
We consider two separate Sudoku training regimes, both based on the Sudoku-Extreme dataset~\cite{wang2025hierarchical,jolicoeur2025less}, which provides partial \(9\times9\) grids together with their completions. For the \emph{Sudoku} setting, we train on completed boards from the training split and generate conditioning masks on the fly, with the number of clues sampled uniformly from \(\{0,\ldots,80\}\). We evaluate this model with 21 clues, as well as the Medium and Hard clue-count settings of SRM~\cite{wewer2025spatial}, corresponding to clue counts sampled from \(\{27,\ldots,53\}\) and \(\{0,\ldots,26\}\), respectively. For the \emph{Sudoku-Extreme} setting, we train a separate model using the dataset-provided partial grids from the Sudoku-Extreme training split and evaluate it on the corresponding test split.

For evaluation, we sample \(2000\) puzzles per seed from the corresponding test split, using three seeds. For the Sudoku clue-count settings, masks are generated on the fly according to the evaluation regime, whereas for Sudoku-Extreme we use the dataset-provided partial grids. Since the Sudoku-Extreme test set is large, we evaluate on a uniformly sampled subset and report binomial confidence intervals; the standard error is at most \(1.12\) percentage points, corresponding to a 95\% confidence interval of approximately \(\pm2.19\) percentage points.

\paragraph{Graph connectivity.} Graph connectivity \cite{du2024learning} is a dataset for graphs with $N$ nodes, consisting of both $N\times N$ binary adjacency and $N\times N$ binary connectivity matrices. The latter describes the existence of a path between nodes. Conditioned on adjacency matrices, we train and evaluate models to generate connectivity matrices. We define a connectivity matrix as valid if it matches the true connectivity matrix. We do not use any random number of clues, and following prior work \cite{du2024learning}, we only train with at most $N=12$ nodes, and evaluate on $N=12$ and $N=18$. For training and evaluation, we use their train and test dataset generators (seeds are different), which continually provide samples, meaning that there is no fixed dataset size.

\paragraph{MNIST Sudoku.} We use the MNIST-Sudoku dataset of \cite{wewer2025spatial}, where each \(9\times9\) puzzle is rendered as a \(252\times252\) grayscale image. Every cell contains a \(28\times28\) MNIST \cite{lecun1998mnist} digit image sampled from a random instance of the corresponding digit class, requiring the model to jointly perform digit recognition and Sudoku constraint reasoning directly from raw pixels. Following the original split convention, we reserve the last 1{,}000 puzzles for evaluation. Inputs are scaled to \([-1,1]\) and treated as continuous-valued images without tokenization; the diffusion model predicts the full \(252\times252\) board. This experiment is separate from the mini MNIST-Sudoku models trained with \(8\times8\) digits in Appendix~\ref{app:other_representations}.

\paragraph{Latin square.} Latin squares of size $N \times N$ are arrays with values taken among $N$ different symbols. Arrays are valid if each symbol occurs exactly once per row and once per column. In our case, we use $N=7$ and randomized backtracking to create $210000$ valid Latin squares, where $80\%$ is for the training set. We train with a number of clues uniformly drawn from $\{0, \dots, N^2-1\}$ and evaluate in random in-painting and unconditional generation. 

\paragraph{\(N\)-Queens.}
An \(N\)-Queens board contains \(N\) queens, with no two sharing a row, column, or diagonal. We represent each board as a vector in \(\{1,\ldots,N\}^N\), whose \(i\)-th entry gives the queen's column in row \(i\). We use \(N=14\) and construct the dataset by sampling uniformly from the complete set of \(365{,}596\) valid boards. The training split contains \(168{,}000\) distinct boards. During training, the number of revealed queens is sampled uniformly from \(\{0,\ldots,N-1\}\). We evaluate both completion from randomly revealed queens and unconditional generation.

\subsection{Continuous representations from discrete configurations}\label{app:one_hot}

From discrete configurations of $m$ cells, we convert each token into their one-hot representation \cite{chen2022analog}, giving $m \times d$ tensors where the last dimension is the one-hot dimension. Please refer to Table \ref{tab:dataset_summary} for their values.

\section{Model architectures and hyperparameters}

\subsection{Architecture for Sudoku, N-Queens, and Latin}\label{app:main_architecture}

For all tasks except MNIST Sudoku \cite{wewer2025spatial} and Graph Connectivity (GC) \cite{du2024learning} (see Appendix \ref{app:mnist_sudoku_gc_architecture}), our network $f_\theta$ is a Transformer \cite{vaswani2017attention}, taking as input a continuous sample $\xt$ and diffusion time $t$:

\begin{itemize}
    \item The sample $\xt$ is linearly embedded in $\mathbb{R}^{d_\mathrm{emb}}$ and added to its learnable positional embeddings. For Sudoku, we use three additive learnable positional embeddings (row, column, and block) instead of one.
    \item The diffusion time $t$ is embedded using
Fourier features $[\sin(2\pi\omega_k \frac{t}{T}), \cos(2\pi\omega_k \frac{t}{T})]_{k=1}^{d_\mathrm{time}/2}$ with log-spaced frequencies $\omega_k$, followed by a two-layer MLP with SiLU activations mapping the embedding into $\mathbb{R}^{d_\mathrm{emb}}$.
    \item We then add them together and feed them through a sequence of $L$ Transformer blocks, followed by a layer normalization and linear layer.
\end{itemize}
We use pre-norm Transformer blocks \cite{radford_gpt2} with \emph{full} self-attention \cite{vaswani2017attention} and per-position MLP sublayers, each wrapped with residual connections. Within each Transformer block, we apply dropout after each attention and MLP sublayer, as well as attention weight dropout inside the multi-head attention.

Except for the number of continuous tokens (i.e., number of cells $m$) and vocabulary size (i.e., one-hot dimension $d$) (see Table \ref{tab:dataset_summary}), all hyper-parameters are shared across tasks described in this subsection. Please refer to Table \ref{tab:architecture_details} for their values and to Algorithm \ref{alg:self_correction} for the training procedure.

\begin{table}[h]
  \caption{Architecture details}
  \label{tab:architecture_details}
  \centering
  \begin{tabular}{lll}
    \toprule
    Name    & Short-hand & Value \\
    \midrule
    Embedding dimension         & ${d_\mathrm{emb}}$    & $128$         \\
    Time embedding dimension    & $d_\mathrm{time}$     & $64$          \\
    Time embedding MLP          & --                    & $[d_\mathrm{time}, {d_\mathrm{emb}}, {d_\mathrm{emb}}]$          \\
    Depth                       & $L$                   & $4$           \\
    Attention heads             & $n_\mathrm{heads}$    & $8$           \\
    Dropout probability         & $p_\mathrm{drop}$     & $0.01$        \\
    MLP hidden dimension& $d_\mathrm{mlp}$      & $4 \times {d_\mathrm{emb}} = 512$, GeLU        \\
    \bottomrule
  \end{tabular}
\end{table}

\subsection{Architectures for Graph Connectivity and MNIST Sudoku}\label{app:mnist_sudoku_gc_architecture}

For Graph Connectivity, we adopt the Neural Logic Machine-based \cite{dong2018neural} architecture of IRED \cite[Table 11]{du2024learning} and use the same hyperparameters as them.

As for MNIST Sudoku, we use the pre-trained SRM model \cite{wewer2025spatial} without any additional training, and therefore have no architecture hyperparameters to report.

\section{Hyper-parameters and hardware}\label{app:hyperparam_hardware}

We show in the following the list of hyper-parameters and hardware used. We use $t^*$ to denote the soft-to-hard conditioning threshold in conditional generation tasks, and the rest are described in previous sections or are self-explanatory.

\begin{table}[h]
  \caption{\textbf{Training and evaluation hyperparameters.} All main VP-continuous diffusion tasks share optimizer type, learning rate, and step budget; batch size is the only task-specific training knob,
  and sampler step budgets differ only for Sudoku-Extreme and SRM. }
  \label{tab:hparams}
  \centering
  \begin{tabular}{ll}
    \toprule
    Hyperparameter & Value \\
    \midrule
    \multicolumn{2}{l}{\textbf{Training}} \\
    \addlinespace[2pt]
    \multicolumn{2}{c}{\textit{Training loop}} \\
    \addlinespace[1pt]
    Optimizer steps                          & $2 \times 10^{6}$ \\
    Batch size (default)                     & $256$ \\
    Batch size (GC)                    & $64$ \\
    Mixed precision                          & AMP, float16 \\
    Hardware                                 & NVIDIA RTX 3090 \\
    \addlinespace[2pt]
    \multicolumn{2}{c}{\textit{Optimization}} \\
    \addlinespace[1pt]
    Optimizer                                & Adam \\
    Learning rate                            & $1 \times 10^{-3}$ \\
    \addlinespace[2pt]
    \multicolumn{2}{c}{\textit{Diffusion}} \\
    \addlinespace[1pt]
    Model parameterization                   & $x$-prediction \\
    Loss target                              & $x$-loss \\
    Noise schedules                          & VP-cosine \\
    \addlinespace[2pt]
    \multicolumn{2}{c}{\textit{Self-correction loss}} \\
    \addlinespace[1pt]
    $\lambda_\mathrm{simple}$ (default)      & $0.1$ \\
    \midrule
    \multicolumn{2}{l}{\textbf{Evaluation}} \\
    \addlinespace[2pt]
    \multicolumn{2}{c}{\textit{Diffusion sampling}} \\
    \addlinespace[1pt]
    Sampler steps (default)                  & $200$ \\
    Sampler steps (Sudoku-Extreme)           & $1000$ \\
    Sampler steps (SRM)                      & $500$ \\
    \bottomrule
  \end{tabular}
\end{table}

\subsection{Discrete-diffusion reference}
\label{app:d3pm}

We additionally train absorbing-state discrete-diffusion models~\cite{austin2021structured} as external references for Table~\ref{tab:main_results}. For Sudoku, \(N\)-Queens, and Latin squares, we use a four-layer Transformer of width \(128\), approximately matching the size of our continuous model. For graph connectivity, we use the same NLM backbone, replacing the reachability input by a categorical \(0/1/\texttt{[MASK]}\) representation. Training randomly masks target tokens and minimizes clean-token cross-entropy at masked positions; conditioning clues (or the graph adjacency matrix) remain fixed. 

We evaluate three inference procedures: ancestral unmasking, confidence-based unmasking, and remasking, which adds four refinement sweeps that re-predict the lowest-confidence \(20\%\) of non-clue tokens. We use \(256\) sampling steps for Sudoku, \(N\)-Queens, and Latin squares, \(1{,}000\) for Sudoku-Extreme, and \(64\) for graph connectivity. Checkpoints are selected on held-out validation data and evaluated on test data. 

\subsection{Experiments compute resources}\label{app:compute}

Training was done on a single RTX 3090 with 24 GiB of VRAM with a single worker.
Each training run took approximately 17 hours on a Sudoku task, 44 hours on a GC task, 14 hours on a Latin task, 10 hours on a N-Queens task.

Standard paper evaluations complete within approximately 5–15 minutes per checkpoint across all samplers and regimes. Cross-validation experiments for graph connectivity are more computationally intensive, usually requiring around 12–25 minutes for 5-fold evaluation with EM and EM decay samplers. The most expensive setting is Sudoku Extreme pass@10 evaluation, which can take approximately 30–50 minutes per sampler due to the large number of samples and denoising steps.

\subsection{Sensitivity of EM decay to sampler hyperparameters}
\label{app:em_decay_sensitivity}

We examine the dependence of EM decay on its initial noise scale
\(\sigma\) and the reverse-progress value \(\tau_{\mathrm{start}}\)
at which the noise begins to decay, with \(\tau=0\) at noise and
\(\tau=1\) at data. The following values are validation results
averaged across training seeds and validation folds.

On Sudoku-Extreme, the baseline model achieves \(0.177\) validity
with \((\sigma,\tau_{\mathrm{start}})=(14,0.8)\), but only \(0.010\)
when \(\sigma\) is increased to \(20\) at the same decay time.
For the self-corrected model, validity is \(0.531\) at
\((20,0.7)\), but falls to \(0.138\) when decay begins at \(0.8\)
instead. Thus, increasing the noise scale or delaying its decay
does not consistently improve validity. EM decay requires careful
validation-based selection of these parameters, whereas Tweedie
reprojection has neither of these two sampler parameters.

\begin{table}[t]
\centering
\caption{\textbf{Best settings in the averaged EM-decay validation sweeps.}
Scores are averaged across training seeds and validation folds.
These are validation diagnostics, not test results.}
\label{tab:em_decay_validation_summary}
\small
\begin{tabular}{llccc}
\toprule
Regime & Training & \(\sigma\)
& \(\tau_{\mathrm{start}}\) & Validation validity \\
\midrule
Sudoku-Extreme & Baseline        & 14 & 0.8 & 0.177 \\
Sudoku-Extreme & Self-correction & 20 & 0.7 & 0.531 \\
Sudoku 21 clues & Baseline        & 7 & 0.7 & 0.843 \\
Sudoku 21 clues & Self-correction & 7 & 0.7 & 0.982 \\
\bottomrule
\end{tabular}
\end{table}

\section{Existing assets and licenses}
We use existing benchmark datasets, checkpoints, and reference implementations only for research evaluation. 
Table~\ref{tab:asset_licenses} summarizes the license information available to us. 
When a dataset license is not specified separately, we report the license of the associated code or release.

\begin{table}[h]
\centering
\small
\caption{Existing assets used in this work.}
\begin{tabular}{ll}
\toprule
Asset & License / terms \\
\midrule
HRM~\cite{wang2025hierarchical}  / Sudoku-Extreme / Maze-Hard
& Apache-2.0 code release; datasets/checkpoints via HRM \\
IRED~\cite{du2024learning}  / Graph Connectivity
& MIT code release \\
SRM~\cite{wewer2025spatial} / MNIST-Sudoku 
& MIT code release; datasets/checkpoints via SRM \\
\bottomrule
\end{tabular}
\label{tab:asset_licenses}
\end{table}

\section{Maze experiments}

Maze-Hard \cite{jolicoeur2025less, wang2025hierarchical} is a dataset consisting of $30\times30$ mazes. Each completed maze is an array filled with values corresponding to a wall, a corridor, the start, the goal or a shortest-path cell. During training and inference, walls, start, and goal are fixed and the model predicts only the path. We define a grid as \textit{valid} if the predicted path connects the start and goal cell; \textit{length} (see Table \ref{tab:maze_results}) means that the path is valid and has the shortest possible length.

Maze results are shown separately in Table~\ref{tab:maze_results}: we use the same architecture as our other tasks (see  \ref{app:main_architecture}), models are trained with $\lambda_{\mathrm{simple}}=0.5$, batch size $64$, and are evaluated on conditional path completion. The same qualitative pattern holds, with modified samplers outperforming DDPM.

\begin{table}[h]
\centering
\caption{Single-run Maze results, reporting path validity (\textit{valid}) and shortest-path recovery (\textit{length}).}
\label{tab:maze_results}
\setlength{\tabcolsep}{3.8pt}
\renewcommand{\arraystretch}{1.15}
\begin{tabular}{lcccc}
\toprule
\multirow{2}{*}{Sampler}
& \multicolumn{2}{c}{pass@1}
& \multicolumn{2}{c}{pass@10} \\
\cmidrule(lr){2-3}\cmidrule(lr){4-5}
& valid & length & valid & length \\
\midrule
\multicolumn{5}{c}{\textbf{Baseline} \textnormal{(no self-correction loss)}} \\
\addlinespace[1pt]
DDPM & .131 & .026 & .448 & .103 \\
\addlinespace[2pt]
\cdashline{1-5}
\addlinespace[1pt]
EM decay & .315 & .108 & .658 & .359 \\
Tweedie reprojection (ours) & .408 & .205 & .621 & .371 \\
\midrule
\multicolumn{5}{c}{\textbf{Self-correction loss} \textnormal{(with $\lambda_\mathrm{simple}=0.5$)}} \\
\addlinespace[1pt]
DDPM & .071 & .032 & .436 & .239 \\
\addlinespace[2pt]
\cdashline{1-5}
\addlinespace[1pt]
EM decay & .842 & .558 & \best{.991} & .900 \\
Tweedie reprojection (ours) & \best{.905} & \best{.661} & .990 & \best{.911} \\
\bottomrule
\end{tabular}
\end{table}

\begin{figure}[t]
  \centering
  \includegraphics[width=0.95\linewidth]{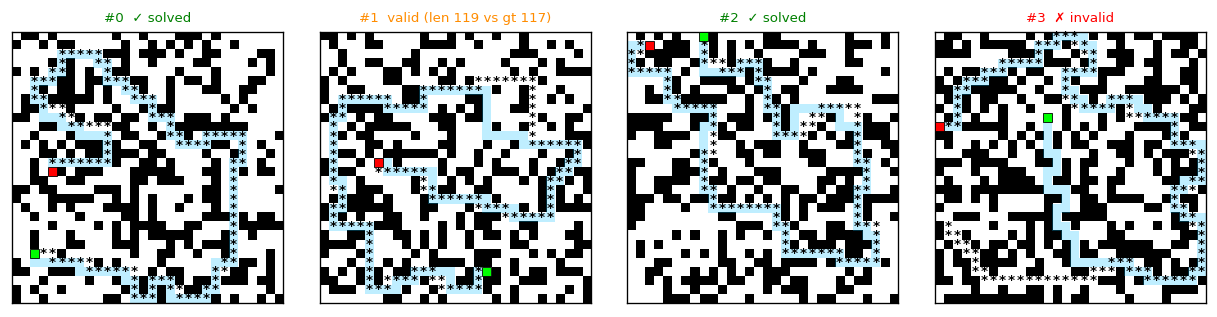}
  \caption{Qualitative samples from our best maze model on the held-out
  \textsc{Maze-Hard} test set. The model is trained with self-correction loss
  ($\lambda_{\text{simple}}=0.5$), and sampled with Tweedie Reprojection at $T=200$. Each panel shows a 30$\times$30 maze with the
  shortest path in \textcolor[HTML]{00B0F0}{transparent cyan} and the predicted
  path as black~$\boldsymbol{\ast}$ markers.}
  \label{fig:maze-samples}
\end{figure}

\section{DDPM derivations}
\label{app:ddpm_derivation}

Detailed derivation of the DDPM formulas from the main part. This appendix is fully discrete. To avoid confusion with the continuous-time notation in Sec.~\ref{sec:ct}, we use \(n\) for the discrete grid index. The main text formulas are recovered by setting \(t=\ttonoisef{n}\), so that \(\xinext=\xt\) and \(\xinow=\xf{\ttodataf{t}}\).

\subsection{Setup}

We use a time grid
\[
\timenoise=t_{\istart}>t_{\isecond}>\cdots> t_{\iend}=\timedata,
\]
and write $\xinow \doteq x_{t_n}$ to simplify notation.
In this section, our convention is:
\[
\boxed{\xistart\ \text{is noise}, \qquad \xiend\ \text{is data}.}
\]
So decreasing $n$ (decreasing $t_n$) means {moving from noise to data}.
Conversely, moving from $n$ to $\ttonoisef{n}$ means {adding noise}.

We assume a prescribed conditional marginal path:
\begin{equation}
\boxed{
q(\xinow \mid \xiend)
=\mathcal{N}\!\Big(\alphaf{\inow}\,\xiend,\; (1-\alphaf{\inow}^2)\,I\Big),
\qquad \alphaf{\istart}=0,\ \alphaf{\iend}=1.
}
\label{eq:app_ddpm_marginals}
\end{equation}

\subsection{Noise-adding Markov step} 

We can write the noise-adding Gaussian Markov kernel as
\begin{equation}
\boxed{
q(\xinext\mid \xinow)
=\mathcal{N}\!\Big(\sqrt{\diffusioncoeff{\inext}}\,\xinow,\ (1-\diffusioncoeff{\inext})\,I\Big),
\qquad n=\iend,\dots,\isecond.
}
\label{eq:app_ddpm_markov_noise_add}
\end{equation}
equivalently the reparameterization
\begin{equation}
\xinext=\sqrt{\diffusioncoeff{\inext}}\,\xinow+\sqrt{1-\diffusioncoeff{\inext}}\,\epsilon_{\inext},
\qquad \epsilon_{\inext}\sim\mathcal{N}(0,I)\ \text{i.i.d.}
\label{eq:app_ddpm_one_step}
\end{equation}

\subsection{Choosing coefficient so the chain matches the marginals} 

We take the conditional expectation of \eqref{eq:app_ddpm_one_step} given $\xiend$:
\[
\mathbb{E}[\xinext \mid \xiend]
=\sqrt{\diffusioncoeff{\inext}}\,\mathbb{E}[\xinow\mid \xiend].
\]
Using the marginal mean from \eqref{eq:app_ddpm_marginals}, $\mathbb{E}[\xf{k}\mid \xiend]=\alphaf{k} \xiend$, we obtain
\[
\alphaf{\ttonoisef{n}}\xiend = \sqrt{\diffusioncoeff{\inext}}\,\alphaf{\inow} \xiend
\quad\Longrightarrow\quad
\boxed{\sqrt{\diffusioncoeff{\inext}}=\frac{\alphaf{\ttonoisef{n}}}{\alphaf{\inow}}}
\]
and hence
\begin{equation}
\boxed{
\diffusioncoeff{\inext}=\frac{\alphaf{\ttonoisef{n}}^2}{\alphaf{\inow}^2}
=\frac{\abarf{\ttonoisef{n}}}{\abarf{\inow}},
\qquad \abarf{\inow}\doteq\alphaf{\inow}^2.
}
\label{eq:app_ddpm_delta_def}
\end{equation}

\subsection{Unrolling and variance identity}

Iterating \eqref{eq:app_ddpm_one_step} gives

\begin{align*}
\xilast
&=
\sqrt{\diffusioncoeff{\ilast}}\,\xiend
+
\sqrt{1-\diffusioncoeff{\ilast}}\,\epsilon_{\ilast}
\\[4pt]
\xilastii
&=
\sqrt{\diffusioncoeff{\ilastii}}\,\xilast
+
\sqrt{1-\diffusioncoeff{\ilastii}}\,\epsilon_{\ilastii}
\\
&=
\sqrt{\diffusioncoeff{\ilastii}\diffusioncoeff{\ilast}}\,\xiend
+
\sqrt{\diffusioncoeff{\ilastii}(1-\diffusioncoeff{\ilast})}\,\epsilon_{\ilast}
+
\sqrt{1-\diffusioncoeff{\ilastii}}\,\epsilon_{\ilastii}
\\[4pt]
\xilastiii
&=
\sqrt{\diffusioncoeff{\ilastiii}}\,\xilastii
+
\sqrt{1-\diffusioncoeff{\ilastiii}}\,\epsilon_{\ilastiii}
\\
&=
\sqrt{\diffusioncoeff{\ilastiii}\diffusioncoeff{\ilastii}\diffusioncoeff{\ilast}}\,\xiend
+
\sqrt{\diffusioncoeff{\ilastiii}\diffusioncoeff{\ilastii}(1-\diffusioncoeff{\ilast})}\,\epsilon_{\ilast}
\\
&\quad
+
\sqrt{\diffusioncoeff{\ilastiii}(1-\diffusioncoeff{\ilastii})}\,\epsilon_{\ilastii}
+
\sqrt{1-\diffusioncoeff{\ilastiii}}\,\epsilon_{\ilastiii}
\\[6pt]
\xinow
&= \left(
\prod_{j=\ilast}^{\inow}\sqrt{\diffusioncoeff{j}}
\right)\xiend
+
\sum_{k=\ilast}^{\inow}
\left(
\sqrt{1-\diffusioncoeff{k}}
\prod_{j=\ttonoisef{k}}^{\inow}\sqrt{\diffusioncoeff{j}}
\right)\epsilon_k
\end{align*}

\begin{align}
\xinow
&=\frac{\alphaf{\inow}}{\alphaf{\iend}}\xiend
+
\sum_{k=\ilast}^{\inow}
\left(
\sqrt{1-\diffusioncoeff{k}}
\prod_{j=\ttonoisef{k}}^{\inow}\sqrt{\diffusioncoeff{j}}
\right)\epsilon_k
\label{eq:app_ddpm_unroll}
\end{align}
Since $\alphaf{\iend}=1$, the signal term is $\alphaf{\inow} \xiend$.

Define the accumulated noise term
\[
\eta_n\doteq
\sum_{k=\ilast}^{\inow}
\left(
\sqrt{1-\diffusioncoeff{k}}\prod_{j=\ttonoisef{k}}^{\inow}\sqrt{\diffusioncoeff{j}}
\right)\epsilon_k.
\]
Then $\mathbb{E}[\eta_n\mid \xiend]=0$ and, because the $\epsilon_k$ are independent,
\begin{align}
\mathbb{V}\left[\eta_n\mid \xiend\right]
&=
\sum_{k=\ilast}^{\inow}
\left(
(1-\diffusioncoeff{k})\prod_{j=\ttonoisef{k}}^{\inow}\diffusioncoeff{j}
\right)I.
\label{eq:app_ddpm_var_sum}
\end{align}
This sum telescopes. Using $\diffusioncoeff{k}=\abarf{k}/\abarf{\ttodataf{k}}$ from \eqref{eq:app_ddpm_delta_def},
one checks the identity
\[
(1-\diffusioncoeff{k})\prod_{j=\ttonoisef{k}}^{\inow}\diffusioncoeff{j}
=
\frac{\abarf{n}}{\abarf{k}}-\frac{\abarf{n}}{\abarf{\ttodataf{k}}}
\]
so summing from $k=\ilast$ to $\inow$ gives
\[
\sum_{k=\ilast}^{\inow}
(1-\diffusioncoeff{k})\prod_{j=\ttonoisef{k}}^{\inow}\diffusioncoeff{j}
=
\frac{\abarf{\inow}}{\abarf{\inow}}-\frac{\abarf{\inow}}{\abarf{\iend}}
=
1-\abarf{\inow},
\]
because $\abarf{\iend}=\alphaf{\iend}^2=1$.
Therefore
\begin{equation}
\boxed{
\mathbb{V}(\eta_n\mid \xiend)=(1-\alphaf{\inow}^2)I,
}
\label{eq:app_ddpm_eta_var}
\end{equation}
and hence
\[
\xinow=\alphaf{\inow} \xiend+\eta_n
\quad\Longrightarrow\quad
q(\xinow\mid \xiend)=\mathcal{N}(\alphaf{\inow} \xiend,(1-\alphaf{\inow}^2)I),
\]
which matches the prescribed marginals \eqref{eq:app_ddpm_marginals}.

\subsection{Posterior}

Now we derive the {denoising} conditional used for DDPM-style reverse simulation:
\[
q(\xinow \mid \xinext, \xiend).
\]
Because the noise-adding chain is Markov in the direction $\xinow\to \xinext$,
the joint factorization is
\[
q(\xinext,\xinow\mid \xiend)
=
q(\xinext\mid \xinow, \xiend)\,q(\xinow\mid \xiend)
=
q(\xinow\mid \xinext, \xiend)\,q(\xinext\mid \xiend).
\]
Thus, as a function of $\xinow$,
\begin{equation}
\boxed{
q(\xinow\mid \xinext,\xiend)\ \propto\ q(\xinext\mid \xinow)\,q(\xinow\mid \xiend).
}
\label{eq:app_ddpm_posterior_propto}
\end{equation}
The normalization constant $q(\xinext\mid \xiend)$ does not depend on $\xinow$.

From \eqref{eq:app_ddpm_markov_noise_add},
\begin{equation}
q(\xinext\mid \xinow)
\propto
\exp\!\left(
-\frac{\|\xinext-\sqrt{\diffusioncoeff{\inext}}\,\xinow\|^2}{2(1-\diffusioncoeff{\inext})}
\right).
\label{eq:app_ddpm_term1}
\end{equation}
From the marginal \eqref{eq:app_ddpm_marginals},
\begin{equation}
q(\xinow\mid \xiend)
\propto
\exp\!\left(
-\frac{\|\xinow-\alphaf{\inow} \xiend\|^2}{2(1-\alphaf{\inow}^2)}
\right).
\label{eq:app_ddpm_term2}
\end{equation}

Multiplying \eqref{eq:app_ddpm_term1} and \eqref{eq:app_ddpm_term2} and writing the exponent in quadratic form gives
\[
q(\xinow \mid \xinext, \xiend)
\propto
\exp\!\left(
-\frac{1}{2}\left(A \|\xinow\|^2 - 2 \langle B,\xinow\rangle\right)
\right),
\]
with
\begin{align*}
A
&=
\frac{1}{1-\alphaf{\inow}^2}
+
\frac{\diffusioncoeff{\inext}}{1-\diffusioncoeff{\inext}}, \\[4pt]
B
&=
\frac{\alphaf{\inow}}{1-\alphaf{\inow}^2}\,\xiend
+
\frac{\sqrt{\diffusioncoeff{\inext}}}{1-\diffusioncoeff{\inext}}\,\xinext.
\end{align*}

Hence the posterior variance and mean are
\[
\postvarf{\inext} = A^{-1},
\qquad
\tilde\mu_{\inow\mid\inext} = \postvarf{\inext}\,B .
\]

\begin{equation}
\boxed{
q(\xinow\mid \xinext,\xiend)=\mathcal{N}(\tilde\mu_{\inow\mid\inext},\ \postvarf{\inext} I).
}
\label{eq:app_ddpm_posterior_gauss}
\end{equation}

Then
\begin{align}
\postvarf{\inext}
&=
\left(
\frac{\diffusioncoeff{\inext}}{1-\diffusioncoeff{\inext}}+\frac{1}{1-\alphaf{\inow}^2}
\right)^{-1}
=
\frac{(1-\diffusioncoeff{\inext})(1-\alphaf{\inow}^2)}{1-\diffusioncoeff{\inext}\alphaf{\inow}^2}.
\label{eq:app_ddpm_beta_tilde_mid}
\end{align}
Using $\alphaf{\ttonoisef{n}}^2=\diffusioncoeff{\inext}\alphaf{\inow}^2$ (equivalent to \eqref{eq:app_ddpm_delta_def}), we have
\[
1-\diffusioncoeff{\inext}\alphaf{\inow}^2
=
1-\alphaf{\inext}^2.
\]
Therefore,
\begin{equation}
\boxed{
\postvarf{\inext}
=
\frac{1-\alphaf{\inow}^2}{1-\alphaf{\inext}^2}\,(1-\diffusioncoeff{\inext}).
}
\label{eq:app_ddpm_beta_tilde_final}
\end{equation}

The posterior mean is
\begin{align*}
\tilde\mu_{\inow\mid\inext}
&=
\frac{(1-\alphaf{\inow}^2)(1-\diffusioncoeff{\inext})}{1-\alphaf{\inext}^2}
\left(
\frac{\alphaf{\inow}}{1-\alphaf{\inow}^2}\,\xiend
+
\frac{\sqrt{\diffusioncoeff{\inext}}}{1-\diffusioncoeff{\inext}}\,\xinext
\right) \\[6pt]
&=
\frac{\alphaf{\inow}(1-\diffusioncoeff{\inext})}{1-\alphaf{\inext}^2}\,\xiend
+
\frac{\sqrt{\diffusioncoeff{\inext}}(1-\alphaf{\inow}^2)}{1-\alphaf{\inext}^2}\,\xinext.
\end{align*}

Thus
\[
\boxed{
\tilde\mu_{\inow\mid\inext}
=
\frac{\alphaf{\inow}(1-\diffusioncoeff{\inext})}{1-\alphaf{\inext}^2}\,\xiend
+
\frac{\sqrt{\diffusioncoeff{\inext}}(1-\alphaf{\inow}^2)}{1-\alphaf{\inext}^2}\,\xinext.
}
\]

Next, substitute the estimator of \(\xiend\) obtained from the marginal at step \(\inext\):
\[
\xiend
=
\frac{\xinext-\sqrt{1-\alphaf{\inext}^2}\,\epsilon_{\inext}}{\alphaf{\inext}}.
\]

Using \(\alphaf{\inow}/\alphaf{\inext}=1/\sqrt{\diffusioncoeff{\inext}}\), we obtain
\begin{align*}
\tilde\mu_{\inow\mid\inext}
&=
\frac{1}{\sqrt{\diffusioncoeff{\inext}}}\,\xinext
-
\frac{1-\diffusioncoeff{\inext}}{\sqrt{\diffusioncoeff{\inext}}\sqrt{1-\alphaf{\inext}^2}}\,
\epsilon_{\inext}.
\end{align*}

\[
\boxed{
\tilde\mu_{\inow\mid\inext}
=
\frac{\xinext}{\sqrt{\diffusioncoeff{\inext}}}
-
\frac{1-\diffusioncoeff{\inext}}{\sqrt{\diffusioncoeff{\inext}}\sqrt{1-\alphaf{\inext}^2}}\,
\epsilon_{\inext}
}
\]

For the equidistant time steps $N=T + 1$, set \(t=\ttonoisef{n}\). Then
\[
\xinext=\xt,\qquad
\xinow=\xf{\ttodataf{t}},\qquad
\diffusioncoeff{\inext}=\diffusioncoeff{t},\qquad
\alphaf{\inow}=\alphaf{\ttodataf{t}}.
\]
This recovers the main-text DDPM posterior
\[
q(\xf{\ttodataf{t}}\mid \xt,\xdata)
=
\mathcal N\!\left(
\alphaf{\ttodataf{t}} \xdata \cdot \frac{1-\diffusioncoeff{t}}{1-\alphaf{t}^2}
+
\frac{\sqrt{\diffusioncoeff{t}}(1-\alphaf{\ttodataf{t}}^2)}{1-\alphaf{t}^2}\xt,
\;
\postvarf{t}I
\right),
\]
where
\[
\postvarf{t}
=
(1-\alphaf{\ttodataf{t}}^{2})
\frac{1-\diffusioncoeff{t}}{1-\alphaf{t}^{2}}.
\]

\section{Toy example: DDPM anchoring vs.\ Tweedie reprojection}
\label{app:tweedie_reprojection_toy}

This toy example is only meant to illustrate the difference between a DDPM-style
anchored update and Tweedie reprojection. It is not intended as evidence that
the failures in our discrete tasks are caused by the same mechanism.

In this example the data support is the union of two intervals (see Figure \ref{fig:no_mans_land}),
\[
\mathcal V_{\mathrm{toy}}=[A,B]\cup[-B,-A],
\qquad
A=2,\quad B=3,\quad M=(A+B)/2=2.5 .
\]
We run reverse trajectories from Gaussian noise and count a sample as valid if
the final point lies in $\mathcal V_{\mathrm{toy}}$. We use an ``oracle'' denoiser that predicts \(M\) when \(y>0\) and \(-M\) otherwise.

\begin{wrapfigure}{r}{0.42\textwidth}
    \vspace{-1.0em}
    \centering
    \includegraphics[width=0.40\textwidth]{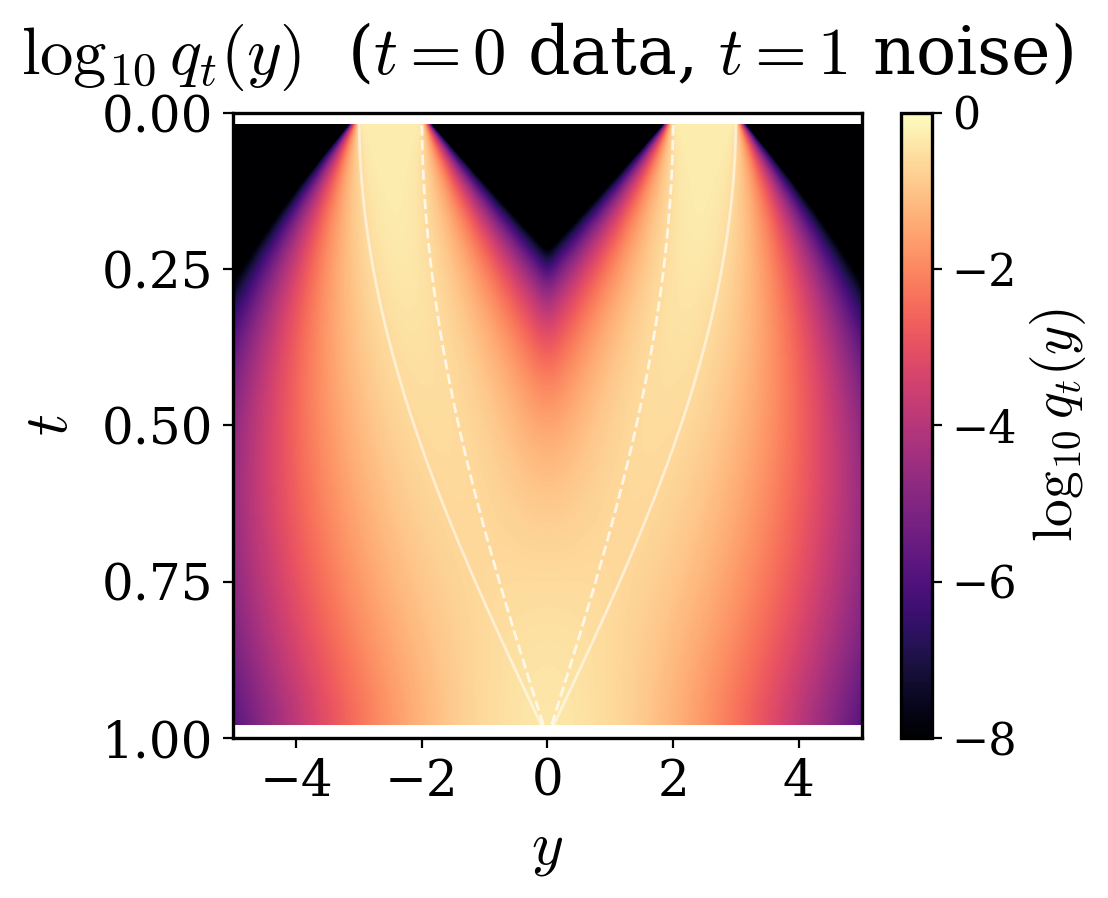}
    \caption{Forward marginal $\log_{10} q_t(y)$ for the toy two-uniform data law
    $\tfrac12 U[A,B]+\tfrac12 U[-B,-A]$. The center is a low-density
    no-man's-land.}
    \label{fig:no_mans_land}
    \vspace{1.0em}

\end{wrapfigure}

Tweedie reprojection updates the mean directly from the
denoiser prediction,
\[
X_s = \alpha_s \hat x_0,
\qquad
\hat x_0=f_\theta(X_t,t),
\]
so the current state \(X_t\) influences the next state only through
\(\hat x_0\). A DDPM-style update also keeps a residual anchor to the current
mean,
\[
X_s = \alpha_s \hat x_0
+
b_t\bigl(X_t-\alpha_t \hat x_0\bigr).
\]
Thus, even when the denoiser predicts a point near the valid set, DDPM can retain
part of the current off-manifold location.

To visualize this effect, we use simple corrupted denoisers whose sign is unreliable (smooth, input-noised, and adversarial sign variants). In this toy,
both signs \(\pm M\) are valid modes. Therefore, a sampler that reprojects
directly through \(\hat x_0\) can remain valid even when the predicted sign is
wrong. In contrast, as shown in Figure \ref{fig:wrong_angle_traj} an anchored DDPM update can stay trapped near the current
state when the denoiser direction is decorrelated from, or anti-aligned with,
\(X_t\). 

\begin{figure}[ht]
    \centering
    \includegraphics[width=\textwidth]{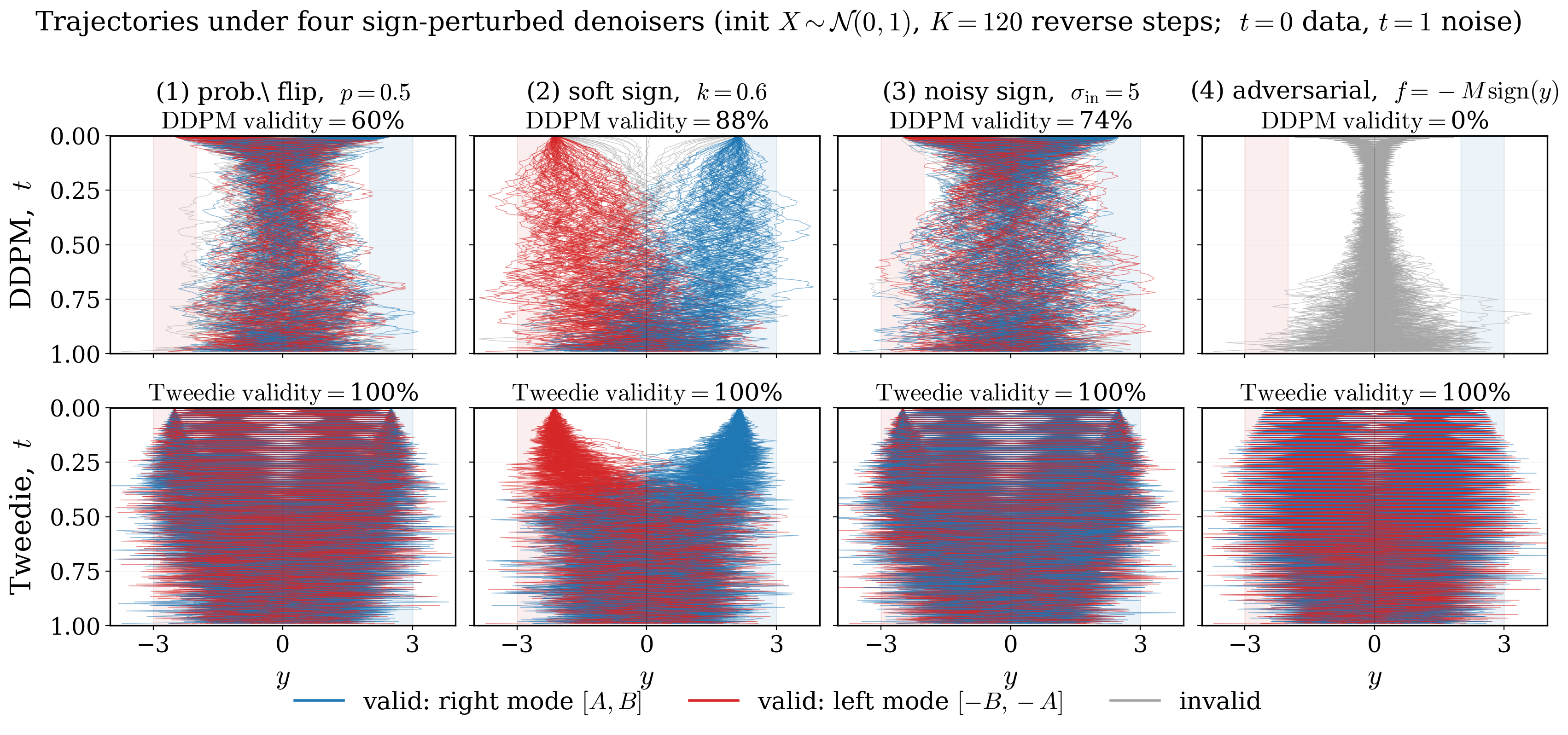}
\caption{Reverse trajectories under sign-perturbed toy denoisers. Tweedie
reprojects through \(\hat x_0=f_\theta(x_t,t)\), whereas DDPM retains an explicit
anchor to \(x_t\). This anchor can keep trajectories in the low-density region
when the denoiser direction is unreliable.}
    \label{fig:wrong_angle_traj}
\end{figure}

To summarize, Tweedie reprojection does not
carry an explicit residual path from \(X_t\) to \(X_s\); DDPM does. Hence, when
\(\hat x_0\) is already valid or close to valid, Tweedie can exploit that
prediction directly, whereas DDPM may still inherit part of the current
off-manifold state.

\section{Mechanistic Analysis of DDPM-Tweedie Sampler Gap}

This appendix gives a mechanistic explanation of the DDPM-Tweedie gap. Decoded validity depends on cellwise argmax decisions: Gaussian perturbations preserve a decoded state when the sampler center has sufficient margin relative to the noise scale. We use this margin view to compare the centers of Tweedie reprojection and DDPM ancestral sampling. Tweedie recenters the next state around the denoiser’s prediction, while DDPM also retains a residual component from the current noisy state. We show that this residual turns out to be harmful for discrete constrained problems.

\subsection{Setup and decoded stability}

We use the convention \(\timedata\) for data and \(\timenoise\) for noise. Let
\[
\alphaf{t}
=
\cos\left(\frac{\pi t}{2 T}\right),
\qquad
\betaf{t}
=
\sin\left(\frac{\pi t}{2 T}\right),
\qquad
\alphaf{t}^2+\betaf{t}^2=1.
\]
We write
\[
\xhatf{0,t}=f_\theta(\xt,t).
\]
For one-hot tasks, let
\[
D:\mathbb R^{m\times d}\to\{1,\dots,d\}^m
\]
be the cellwise argmax decoder, where \(m\) is the number of active/free cells
and \(d\) is the number of symbols per cell. For Sudoku with 21 clues,
\(m=60\); for a 28-clue subset, \(m=53\). Let
\[
\mathcal V\subseteq\{1,\dots,d\}^m
\]
be the set of valid decoded configurations and
\[
\mathcal I=\{1,\dots,d\}^m\setminus\mathcal V
\]
be the set of invalid configurations.

For \(x\in\mathbb R^{m\times d}\), define
\[
\operatorname{gap}(x)
=
\min_i
\left[
x_{i,D_i(x)}
-
\max_{a\neq D_i(x)}x_{i,a}
\right].
\]

\subsection{One-hot representations corrupted with Gaussian noise}

\begin{figure}[t]
\centering
\includegraphics[width=\textwidth]{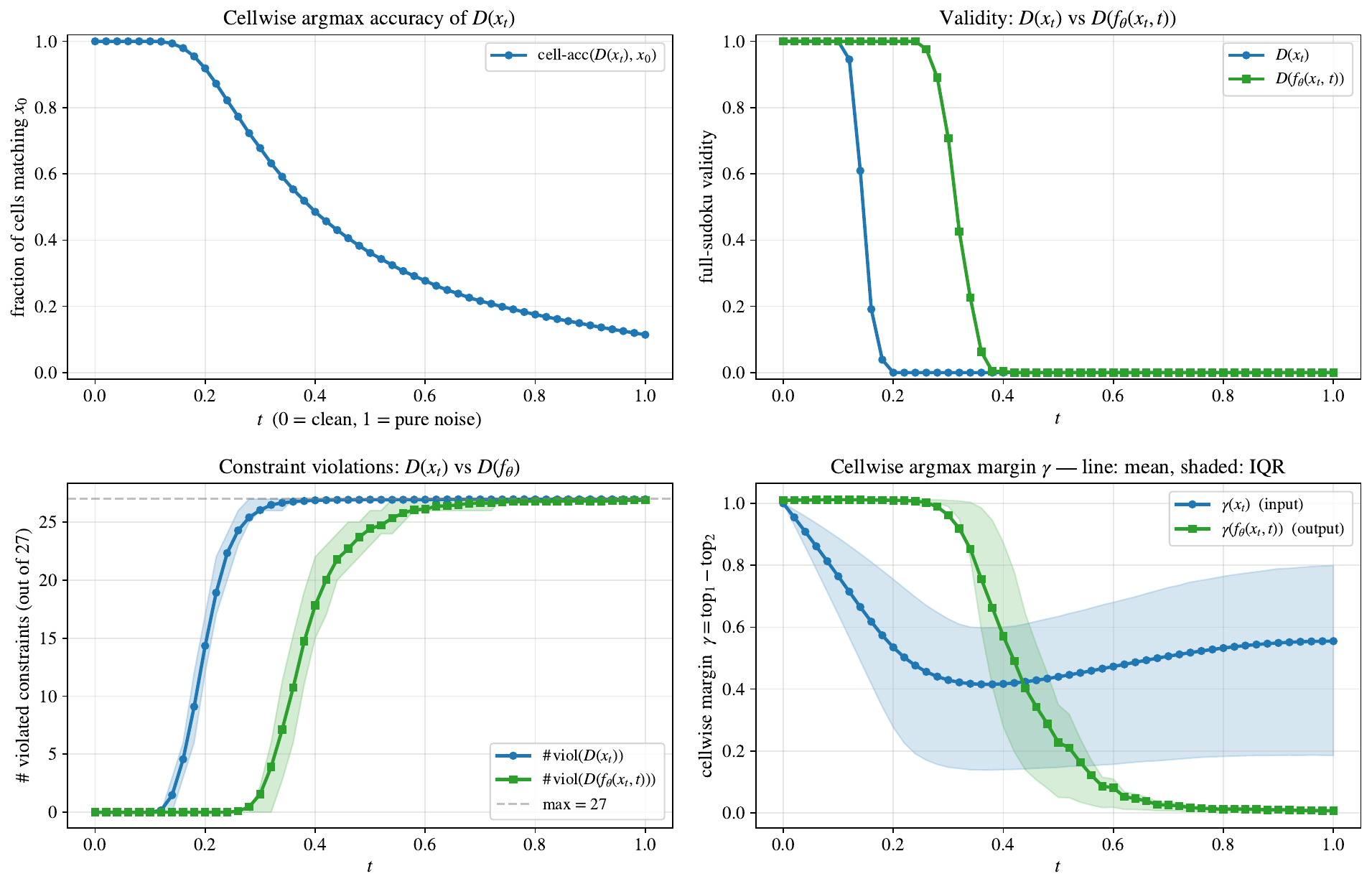}
\caption{\textbf{Forward-noise diagnostics for the baseline model.} We evaluate \(N=64\) Sudoku puzzles with \(n_{\rm rep}=4\) noise draws. Panels show: cellwise argmax accuracy of \(D(\xt)\), full-grid validity of \(D(\xt)\) and \(D(f_\theta(\xt,t))\), number of violated Sudoku constraints, and argmax margin \(\gamma=\mathrm{top}_1-\mathrm{top}_2\). Lines are means; shaded bands show IQR across puzzles and noise draws.}
\label{fig:fig_forward_noise_diagnostics_baseline}
\end{figure}

For one-hot encoded tasks, the decoded output
\[
\xt=c_t+\nu_t\noise
\]
is stable when the Gaussian center \(c_t\) has a sufficiently large cellwise argmax margin relative to the Gaussian noise scale \(\nu_t\). 

Figure~\ref{fig:fig_forward_noise_diagnostics_baseline} shows this effect for forward-noised one-hot Sudoku solutions
\[
\xt=\alphaf{t}\xdata+\betaf{t}\noise.
\]
Cellwise accuracy of the raw decoder \(D(\xt)\) decreases gradually with \(t\) (top left), but full-grid validity collapses much earlier (top right): a single flipped cell is enough to invalidate the decoded Sudoku. The trained denoiser extends the useful noise range. Its decoded output \(D(f_\theta(\xt,t))\) (top right) remains valid for larger \(t\), has fewer constraint violations (bottom left), and maintains a larger argmax margin than the raw noisy input (bottom right).

For any Gaussian sampler the following is true:

\begin{lemma}[Argmax stability]
\label{lem:argmax_stability}
Let \(c\in\mathbb R^{m\times d}\) satisfy
\[
\operatorname{gap}(c)\ge\gamma>0,
\]
and let
\[
\noise\sim\mathcal N(0,\nu^2I_{md}).
\]
Then
\[
\mathbb P(D(c+\noise)\neq D(c))
\le
m(d-1)\Phi\left(-\frac{\gamma}{\sqrt2\,\nu}\right),
\]
where \(\Phi\) is the standard Gaussian CDF.
\end{lemma}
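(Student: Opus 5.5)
The plan is a union bound over cells and competing symbols, combined with a one-dimensional Gaussian tail estimate for each pairwise comparison. Write \(a_i=D_i(c)\) for the decoded symbol of cell \(i\) at the center. The decoded state can change only if, for some cell \(i\) and some \(b\neq a_i\), the perturbed entries satisfy
\[
c_{i,b}+\noise_{i,b}\ \ge\ c_{i,a_i}+\noise_{i,a_i}.
\]
With weak inequality this event also covers ties, whatever tie-breaking rule \(D\) uses. So I will first record the set inclusion
\[
\{D(c+\noise)\neq D(c)\}\subseteq\bigcup_{i=1}^{m}\bigcup_{b\neq a_i}E_{i,b},
\qquad
E_{i,b}=\{\noise_{i,b}-\noise_{i,a_i}\ge c_{i,a_i}-c_{i,b}\}.
\]
This union has exactly \(m(d-1)\) events, which explains the prefactor.

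Next I bound each \(E_{i,b}\). By the definition of \(\operatorname{gap}\),
\[
c_{i,a_i}-c_{i,b}\ \ge\ c_{i,a_i}-\max_{a\neq a_i}c_{i,a}\ \ge\ \operatorname{gap}(c)\ \ge\ \gamma .
\]
The noise entries are i.i.d.\ \(\mathcal N(0,\nu^2)\), so the difference \(\noise_{i,b}-\noise_{i,a_i}\) is distributed as \(\mathcal N(0,2\nu^2)\). Therefore
\[
\mathbb P(E_{i,b})\ \le\ \mathbb P\bigl(\mathcal N(0,2\nu^2)\ge\gamma\bigr)=1-\Phi\Bigl(\tfrac{\gamma}{\sqrt2\,\nu}\Bigr)=\Phi\Bigl(-\tfrac{\gamma}{\sqrt2\,\nu}\Bigr).
\]
The first inequality holds because the tail probability is monotone in the threshold.

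Summing over the \(m(d-1)\) pairs \((i,b)\) with the union bound then gives the claimed inequality. No independence across cells or symbols is needed at this stage; this matters because the events \(E_{i,b}\) within one cell share the coordinate \(\noise_{i,a_i}\) and so are correlated.

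There is no real obstacle. The only point needing care is the first step: the event that the argmax changes must be contained in the pairwise events. This requires the weak inequality to handle ties, and it requires that the reference symbol \(a_i\) be fixed by \(c\), not by \(c+\noise\). If \(D\) is meant to range only over the free cells (\(m\) active cells, as in the setup), the same argument applies with \(m\) counting those cells.
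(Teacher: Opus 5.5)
Your proposal is correct and follows essentially the same argument as the paper: in each cell you compare the center's decoded symbol against one competitor, use \(\noise_{i,b}-\noise_{i,a_i}\sim\mathcal N(0,2\nu^2)\) together with the gap bound \(c_{i,a_i}-c_{i,b}\ge\gamma\), and then take a union bound over the \(m(d-1)\) pairs. Your explicit treatment of ties through the weak inequality, and your remark that the reference symbol is fixed by \(c\) rather than by \(c+\noise\), add a little rigor that the paper leaves implicit.
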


\begin{proof}
Fix a cell \(i\), let \(j=D_i(c)\), and consider a competitor \(a\neq j\). Since
\[
\operatorname{gap}(c)\ge\gamma,
\]
we have
\[
c_{i,j}-c_{i,a}\ge\gamma.
\]
The competitor overtakes \(j\) after adding noise only if
\[
c_{i,a}+\noise_{i,a}\ge c_{i,j}+\noise_{i,j},
\]
or equivalently
\[
\noise_{i,a}-\noise_{i,j}\ge c_{i,j}-c_{i,a}\ge\gamma.
\]
Since
\[
\noise_{i,a}-\noise_{i,j}\sim\mathcal N(0,2\nu^2),
\]
this event has probability at most
\[
\Phi\left(-\frac{\gamma}{\sqrt2\,\nu}\right).
\]
A union bound over all \(m(d-1)\) competitors gives the claim.
\end{proof}

The lemma formalizes the idea that a continuous point with large cellwise margin decodes stably under small Gaussian perturbations.

\subsection{DDPM equals Tweedie plus state memory}
\label{app:subsection:tweedie_ddpm}

\begin{proposition}[DDPM and Tweedie-reprojection relation]
\label{prop:ddpm_anchored_tweedie}
At step \(t\), the DDPM ancestral Gaussian kernel has center
\[
c_t^{\mathrm{DDPM}}
=
c_t^{\mathrm{Tw}}
+
B_t\bigl(\xt-\alphaf{t}\xhatf{0,t}\bigr),
\qquad
c_t^{\mathrm{Tw}}
=
\alphaf{\ttodataf{t}}\xhatf{0,t},
\]
where
\[
B_t
=
\frac{
\sqrt{\diffusioncoeff{t}}\left(1-\alphaf{\ttodataf{t}}^2\right)
}{
1-\alphaf{t}^2
},
\qquad
\alphaf{t}
=
\sqrt{\diffusioncoeff{t}}\alphaf{\ttodataf{t}}.
\]
Thus, relative to Tweedie, DDPM adds a state-anchor residual. The associated
noise scales are
\[
\nu_t^{\mathrm{Tw}}=\betaf{\ttodataf{t}},
\qquad
\nu_t^{\mathrm{DDPM}}=\tau_t,
\qquad
\tau_t^2
=
\frac{
\left(1-\diffusioncoeff{t}\right)
\left(1-\alphaf{\ttodataf{t}}^2\right)
}{
1-\alphaf{t}^2
}.
\]
\end{proposition}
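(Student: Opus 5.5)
We prove the identity by direct algebra on the DDPM posterior mean derived in Appendix~\ref{app:ddpm_derivation}, with \(\xdata\) replaced by the plug-in prediction \(\xhatf{0,t}\). Recall from \eqref{eq:posterior_mean} that the ancestral kernel is centered at
\[
c_t^{\mathrm{DDPM}}
=
\alphaf{\ttodataf{t}}\frac{1-\diffusioncoeff{t}}{1-\alphaf{t}^{2}}\,\xhatf{0,t}
+
B_t\,\xt ,
\]
with \(B_t\) exactly the coefficient of \(\xt\) in \eqref{eq:posterior_mean}. Subtracting the target expression \(\alphaf{\ttodataf{t}}\xhatf{0,t}+B_t(\xt-\alphaf{t}\xhatf{0,t})\), the \(\xt\)-terms cancel. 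It therefore suffices to check that the coefficients of \(\xhatf{0,t}\) agree:
\[
\alphaf{\ttodataf{t}}\frac{1-\diffusioncoeff{t}}{1-\alphaf{t}^{2}}
=
\alphaf{\ttodataf{t}}-B_t\alphaf{t}.
\]

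To verify this, we use \(\alphaf{t}=\sqrt{\diffusioncoeff{t}}\,\alphaf{\ttodataf{t}}\), which is \eqref{eq:delta_def} with positive square roots. This gives
\[
B_t\alphaf{t}=\alphaf{\ttodataf{t}}\,\frac{\diffusioncoeff{t}(1-\alphaf{\ttodataf{t}}^2)}{1-\alphaf{t}^2}.
\]
Factoring out \(\alphaf{\ttodataf{t}}\), the claim reduces to
\[
1-\alphaf{t}^2-\diffusioncoeff{t}+\diffusioncoeff{t}\alphaf{\ttodataf{t}}^2=1-\diffusioncoeff{t},
\]
and this holds because \(\diffusioncoeff{t}\alphaf{\ttodataf{t}}^2=\alphaf{t}^2\). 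This algebraic step is the only place where anything must be checked. The main potential pitfall is sign and indexing conventions, namely that \(\alphaf{\ttodataf{t}}\) is the less-noisy level. I would handle this by working in the appendix's \(n\)/\(n+1\) indexing and then substituting \(t=n+1\).

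For the noise scales, the DDPM variance is the posterior variance \eqref{eq:posterior_var}, or equivalently \eqref{eq:app_ddpm_beta_tilde_final}, so \(\tau_t^2\) is as stated and \(\nu_t^{\mathrm{DDPM}}=\tau_t\). The Tweedie kernel \eqref{eq:tweedie_reprojection} has variance \(1-\alphaf{\ttodataf{t}}^2=\betaf{\ttodataf{t}}^2\), using \(\betaf{s}=\sqrt{1-\alphaf{s}^2}\) from \eqref{eq:forward_marginal}. This gives \(\nu_t^{\mathrm{Tw}}=\betaf{\ttodataf{t}}\).

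Finally, as a consistency check, the result matches the generalized DDIM form \eqref{eq:ddim_general}. With \(\kappa_t^2=\tau_t^2\), the residual coefficient there is \(\sqrt{1-\alphaf{\ttodataf{t}}^2-\tau_t^2}/\betaf{t}\), and this should equal \(B_t\). Squaring both sides and using the same identity confirms it.
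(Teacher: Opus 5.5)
Your proof is correct and matches the paper's argument: both reduce the claim to the coefficient identity \(A_t + B_t\alphaf{t} = \alphaf{\ttodataf{t}}\), verify it via \(\diffusioncoeff{t}\alphaf{\ttodataf{t}}^2 = \alphaf{t}^2\), and read the noise scales directly off the posterior variance and the Tweedie kernel. Your extra consistency check against the generalized DDIM form with \(\kappa_t^2=\tau_t^2\) is also correct, though the proof does not need it.
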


\begin{proof}
Following Appendix \ref{app:ddpm_derivation}, the DDPM ancestral mean can be written as
\[
\mu_t^{\mathrm{DDPM}}
=
A_t\xhatf{0,t}+B_t\xt,
\]
with
\[
A_t
=
\alphaf{\ttodataf{t}}
\frac{1-\diffusioncoeff{t}}{1-\alphaf{t}^2},
\qquad
B_t
=
\frac{
\sqrt{\diffusioncoeff{t}}
\left(1-\alphaf{\ttodataf{t}}^2\right)
}{
1-\alphaf{t}^2
}.
\]
Using
\[
\alphaf{t}
=
\sqrt{\diffusioncoeff{t}}\alphaf{\ttodataf{t}},
\]
we have
\[
A_t+B_t\alphaf{t}
=
\alphaf{\ttodataf{t}}
\frac{
1-\diffusioncoeff{t}
+\diffusioncoeff{t}\left(1-\alphaf{\ttodataf{t}}^2\right)
}{
1-\alphaf{t}^2
}
=
\alphaf{\ttodataf{t}}
\frac{
1-\diffusioncoeff{t}\alphaf{\ttodataf{t}}^2
}{
1-\alphaf{t}^2
}
=
\alphaf{\ttodataf{t}}.
\]
Therefore
\[
\begin{aligned}
\mu_t^{\mathrm{DDPM}}
&=
A_t\xhatf{0,t}+B_t\xt \\
&=
\left(A_t+B_t\alphaf{t}\right)\xhatf{0,t}
+
B_t\left(\xt-\alphaf{t}\xhatf{0,t}\right) \\
&=
\alphaf{\ttodataf{t}}\xhatf{0,t}
+
B_t\left(\xt-\alphaf{t}\xhatf{0,t}\right) \\
&=
c_t^{\mathrm{Tw}}
+
B_t\left(\xt-\alphaf{t}\xhatf{0,t}\right).
\end{aligned}
\]
The Tweedie kernel uses noise scale \(\betaf{\ttodataf{t}}\), while the DDPM ancestral
kernel uses posterior noise scale
\[
\tau_t^2
=
\frac{
\left(1-\diffusioncoeff{t}\right)
\left(1-\alphaf{\ttodataf{t}}^2\right)
}{
1-\alphaf{t}^2
}.
\]
This proves both the center decomposition and the variance difference.
\end{proof}

Proposition~\ref{prop:ddpm_anchored_tweedie} shows that DDPM differs from Tweedie
in two ways. First, it uses a different Gaussian noise scale. Second, its center contains the residual
\(
r_t=\xt-\alphaf{t}\xhatf{0,t}.
\)
If \(\xt\) carries corrupted cellwise preferences, this term is harmful and may even destabilize the trajectory. The next proposition formalizes this failure mode.

\begin{proposition}[State-memory corruption]
\label{prop:state_memory_corruption}
Let
\[
r_t=\xt-\alphaf{t}\xhatf{0,t}.
\]
Assume the denoiser prediction decodes to a valid grid \(v\in\mathcal V\):
\[
D(\xhatf{0,t})=v.
\]
For a cell \(i\) and competitor \(a\neq v_i\), define the denoiser margin
\[
M^\theta_{i,a}
=
\xhatf{0,t,i,v_i}-\xhatf{0,t,i,a},
\]
and the residual anti-margin
\[
M^r_{i,a}
=
r_{t,i,a}-r_{t,i,v_i}.
\]
If for some \(i,a\),
\[
B_tM^r_{i,a}
-
\alphaf{\ttodataf{t}}M^\theta_{i,a}
\ge
\gamma_{\mathrm{bad}}>0,
\]
then the DDPM center prefers the wrong symbol \(a\) over the valid symbol \(v_i\)
in cell \(i\) with margin at least \(\gamma_{\mathrm{bad}}\):
\[
c^{\mathrm{DDPM}}_{t,i,a}
-
c^{\mathrm{DDPM}}_{t,i,v_i}
\ge
\gamma_{\mathrm{bad}}.
\]
If every valid completion has cell \(i\) equal to \(v_i\), then
\[
D(c_t^{\mathrm{DDPM}})\notin\mathcal V.
\]
Moreover,
\[
\mathbb P\left(
D(\xf{\ttodataf{t}}^{\mathrm{DDPM}})\in\mathcal V
\mid \xt
\right)
\le
\Phi\left(
-\frac{\gamma_{\mathrm{bad}}}{\sqrt2\,\tau_t}
\right).
\]
\end{proposition}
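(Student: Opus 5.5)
The plan is to work entirely from the center decomposition of Proposition~\ref{prop:ddpm_anchored_tweedie}, reading it coordinatewise in cell \(i\) for the two symbols \(a\) and \(v_i\). Since \(c_t^{\mathrm{DDPM}}=\alphaf{\ttodataf{t}}\xhatf{0,t}+B_t r_t\), we have
\[
c^{\mathrm{DDPM}}_{t,i,a}-c^{\mathrm{DDPM}}_{t,i,v_i}
=
\alphaf{\ttodataf{t}}\bigl(\xhatf{0,t,i,a}-\xhatf{0,t,i,v_i}\bigr)
+B_t\bigl(r_{t,i,a}-r_{t,i,v_i}\bigr)
=
-\alphaf{\ttodataf{t}}M^\theta_{i,a}+B_tM^r_{i,a}.
\]
The hypothesis then gives the margin claim at once. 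This first step is pure algebra and needs no sign assumptions on \(B_t\) or \(\alphaf{\ttodataf{t}}\).

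Next comes the decoding claim. Because \(c^{\mathrm{DDPM}}_{t,i,a}>c^{\mathrm{DDPM}}_{t,i,v_i}\), the argmax in cell \(i\) cannot be \(v_i\), so \(D_i(c_t^{\mathrm{DDPM}})\neq v_i\). Under the assumption that every valid completion has cell \(i\) equal to \(v_i\), the decoded configuration lies outside \(\mathcal V\).

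For the probability bound, I would write \(\xf{\ttodataf{t}}^{\mathrm{DDPM}}=c_t^{\mathrm{DDPM}}+\tau_t\epsilon\) with \(\epsilon\sim\mathcal N(0,I)\), conditionally on \(\xt\), which fixes both \(\xhatf{0,t}\) and \(r_t\). Validity of the decoded sample forces \(D_i=v_i\), which in turn requires
\[
x_{i,v_i}\ge x_{i,a},
\quad\text{i.e.}\quad
\tau_t(\epsilon_{i,v_i}-\epsilon_{i,a})\ge c^{\mathrm{DDPM}}_{t,i,a}-c^{\mathrm{DDPM}}_{t,i,v_i}\ge\gamma_{\mathrm{bad}}.
\]
Hence \(\{D\in\mathcal V\}\subseteq\{\epsilon_{i,v_i}-\epsilon_{i,a}\ge\gamma_{\mathrm{bad}}/\tau_t\}\). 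The difference \(\epsilon_{i,v_i}-\epsilon_{i,a}\) is distributed as \(\mathcal N(0,2)\), so this event has probability \(\Phi(-\gamma_{\mathrm{bad}}/(\sqrt2\,\tau_t))\). This is the same two-coordinate Gaussian comparison used in Lemma~\ref{lem:argmax_stability}, but with only one competitor, so no union bound is needed. Ties occur with probability zero and can be ignored.

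The main subtlety is interpretive rather than computational. I need to make sure the probability statement uses the uniqueness hypothesis on cell \(i\), so that "valid" implies "cell \(i\) decodes to \(v_i\)". I also need to treat \(\tau_t>0\); at the final step, where \(\tau_t=0\), the bound degenerates to \(0\), consistent with the deterministic decoding claim. I would state these points explicitly. The remaining pieces are direct substitutions.
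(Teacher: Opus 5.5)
Your proposal is correct and follows the paper's proof essentially step for step: you expand $c_t^{\mathrm{DDPM}}=\alpha(t-1)\hat x_{0,t}+B_t r_t$ coordinatewise in cell $i$, you apply the argmax argument under the uniqueness hypothesis on that cell, and you use the single-competitor Gaussian comparison with variance $2\tau_t^2$. Your explicit remarks on ties and on the degenerate final step with $\tau_t=0$ are minor clarifications that the paper leaves implicit.
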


\begin{proof}
The DDPM center is
\[
c_t^{\mathrm{DDPM}}
=
\alphaf{\ttodataf{t}}\xhatf{0,t}+B_tr_t.
\]
Therefore
\[
\begin{aligned}
c^{\mathrm{DDPM}}_{t,i,a}
-
c^{\mathrm{DDPM}}_{t,i,v_i}
&=
\alphaf{\ttodataf{t}}
\left(
\xhatf{0,t,i,a}-\xhatf{0,t,i,v_i}
\right)
+
B_t\left(
r_{t,i,a}-r_{t,i,v_i}
\right)\\
&=
-\alphaf{\ttodataf{t}}M^\theta_{i,a}
+
B_tM^r_{i,a}.
\end{aligned}
\]
By assumption, this is at least \(\gamma_{\mathrm{bad}}\).

If all valid completions have symbol \(v_i\) in cell \(i\), then any decoded grid
with a different symbol in cell \(i\) is invalid. For the noisy DDPM step to
become valid, \(v_i\) must overtake \(a\). This requires
\[
\tau_t\noise_{i,v_i}-\tau_t\noise_{i,a}
\ge
\gamma_{\mathrm{bad}}.
\]
The left-hand side is Gaussian with variance \(2\tau_t^2\), giving
\[
\mathbb P(\text{repair})
\le
\Phi\left(
-\frac{\gamma_{\mathrm{bad}}}{\sqrt2\,\tau_t}
\right).
\]
\end{proof}

The proposition suggests that DDPM kernel can be centered on the wrong decoded symbol whenever the state residual exceeds the denoiser's correction:
\[
B_t
\left(r_{t,i,a}-r_{t,i,v_i}\right)
>
\alphaf{\ttodataf{t}}
\left(
\xhatf{0,t,i,v_i}
-
\xhatf{0,t,i,a}
\right).
\]
In that case, sampling more precisely around the DDPM center does not help: the sampler is concentrated around the wrong decoded state. We next visualize this
effect along actual reverse trajectories.

\begin{figure}[t]
\centering
\includegraphics[width=\textwidth]{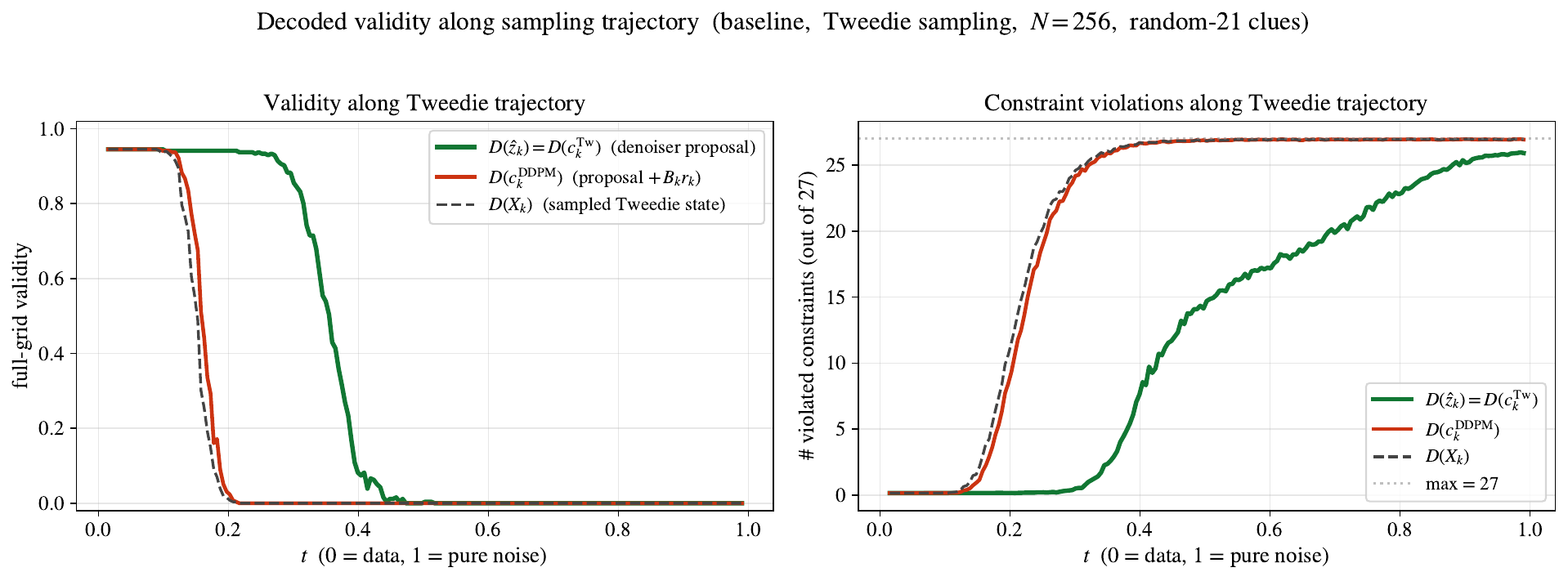}
\caption{\textbf{Decoded validity along a Tweedie sampling trajectory.}
For each reverse step, we decode the denoiser proposal \(\xhatf{0,t}\), the Tweedie
center \(c_t^{\mathrm{Tw}}=\alphaf{\ttodataf{t}}\xhatf{0,t}\), and the counterfactual DDPM
center \(c_t^{\mathrm{DDPM}}=c_t^{\mathrm{Tw}}+B_t r_t\). Left: full-grid Sudoku
validity. Right: number of violated constraints. The Tweedie center shares the
proposal's argmax, while the DDPM residual anchor delays validity.}
\label{fig:fig_latch_window_baseline_tw_traj}
\end{figure}

\begin{figure}[t]
\centering
\includegraphics[width=\textwidth]{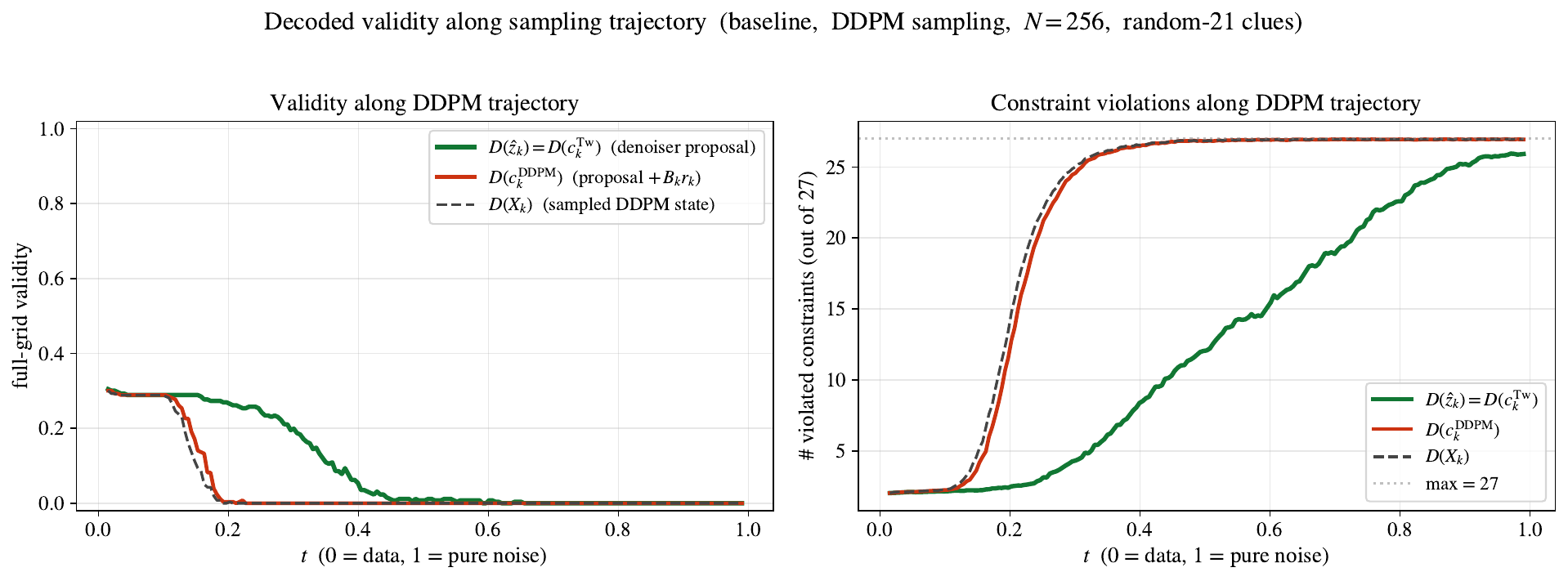}
\caption{\textbf{Decoded validity along a DDPM sampling trajectory.}
Same diagnostics as Fig.~\ref{fig:fig_latch_window_baseline_tw_traj}, but the
trajectory itself is generated by DDPM ancestral sampling. The number of violated
constraints decreases along the trajectory, showing that the sampler moves toward Sudoku structure. However, full-grid validity remains low because a small number of
persistent errors is enough to invalidate the decoded grid. This illustrates the state-anchor failure mode: DDPM can approach a near-valid solution while still
failing to make the needed cellwise corrections required for exact validity.}
\label{fig:fig_latch_window_baseline_ddpm_traj}
\end{figure}

Figures~\ref{fig:fig_latch_window_baseline_tw_traj}-\ref{fig:fig_latch_window_baseline_ddpm_traj}
show what happens during sampling. Along the Tweedie trajectory, the model begins to predict valid Sudoku grids before the end of the reverse process. The important point is that Tweedie does not anchor the next center to the current decoded grid.
Instead, it recenters directly at
\(
c_t^{\mathrm{Tw}}
=
\alphaf{\ttodataf{t}}\xhatf{0,t}.
\)
Since multiplication by the positive scalar \(\alphaf{\ttodataf{t}}\) does not change the argmax, the Tweedie center has the same decoded grid as the model prediction.
Therefore, if the current sample is near-valid but has a few wrong cells, and the denoiser predicts a valid completion, Tweedie can move directly to that valid
completion.

DDPM behaves differently. Its center also contains the residual term \(B_t r_t\),
which keeps part of the current state. This can be helpful when the current state is
already correct, but it can be harmful when the current state is near-valid with a
few persistent mistakes. In that case, the DDPM update averages the denoiser
correction with the current wrong cell preferences, so the sampler can reduce the
number of constraint violations while still failing to make the final argmax changes
needed for full validity.

Figure~\ref{fig:analysis_noise} shows that adding more exploration
does not fix the baseline DDPM sampler. Both samplers degrade when the noise is too
small or too large, but the best DDPM setting still remains far below the best
Tweedie setting. 

To check whether the residual term helps or hurts in practice, we compare the
decoded Tweedie and DDPM centers at each reverse step. Let \(E_+\) be the event that
the Tweedie center is valid but the DDPM center is invalid, and let \(E_-\) be the
opposite event. If the residual anchor were often useful for Sudoku validity, then
\(E_-\) should occur frequently. Instead, Figure~\ref{fig:analysis_eplus_eminus} shows that
\(E_+\) dominates.

\subsection{Sampler-induced states degrade denoiser proposals}
\label{sec:sampler_induced_proposal_mismatch}

The previous section showed that the DDPM anchor can corrupt a valid denoiser proposal. Figure~\ref{fig:fig_latch_window_baseline_ddpm_traj} shows a second effect: along DDPM trajectories, the baseline model often fails to produce valid
proposals at all. We now ask why this failure mode persists along full DDPM trajectories.

Let \(q_t\) denote the forward training distribution
\[
\xt=\alphaf{t}\xdata+\betaf{t}\noise,
\qquad
\xdata\in\mathcal V,\quad \noise\sim\mathcal N(0,I).
\]

\begin{lemma}[Forward noising of discrete grids]
\label{lem:forward_valid_tubes}
Let \(\mathcal V\subset\mathbb R^{m d}\) be the set of valid one-hot clean grids. Here \(B(a,r)=\{y\in\mathbb R^{m d}:\|y-a\|_2\le r\}\) denotes the closed Euclidean ball
of radius \(r\) centered at \(a\).
For
\(u>0\), define the typical forward region
\[
\mathcal R_t(u)
=
\bigcup_{v\in\mathcal V}
B(\alphaf{t}v,\betaf{t}u).
\]
If \(\xt\sim q_t\), then
\[
q_t(\mathcal R_t(u))
\ge
1-\mathbb P(\|G\|_2>u),
\qquad
G\sim\mathcal N(0,I_{md}).
\]
Moreover, for any one-hot grid \(z\), the ball \(B(\alphaf{t}z,\betaf{t}u)\) is
disjoint from the forward ball around a valid grid \(v\) whenever
\[
\|z-v\|_2>\frac{2u}{\mathrm{SNR}_t},
\qquad
\mathrm{SNR}_t=\frac{\alphaf{t}}{\betaf{t}}.
\]
For one-hot grids, this corresponds to the Hamming-depth condition
\[
d_{\mathrm{Ham}}(z,v)
>
\frac{2u^2}{\mathrm{SNR}_t^2}.
\]
\end{lemma}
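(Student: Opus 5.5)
The lemma has three claims, and I would prove them in order: a coverage bound for $\mathcal R_t(u)$, a disjointness criterion, and its Hamming translation.

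\textbf{Coverage.} Write $\xt=\alphaf{t}\xdata+\betaf{t}\noise$ with $\xdata\in\mathcal V$ and $\noise\sim\mathcal N(0,I_{md})$ independent of $\xdata$. If $\|\noise\|_2\le u$, then $\|\xt-\alphaf{t}\xdata\|_2=\betaf{t}\|\noise\|_2\le\betaf{t}u$. Hence $\xt\in B(\alphaf{t}\xdata,\betaf{t}u)\subseteq\mathcal R_t(u)$. Since the event $\{\|\noise\|_2\le u\}$ is contained in $\{\xt\in\mathcal R_t(u)\}$, we get $q_t(\mathcal R_t(u))\ge\mathbb P(\|G\|_2\le u)=1-\mathbb P(\|G\|_2>u)$. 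This holds conditionally on each $\xdata$, so averaging over the data law gives the claim. Nothing beyond the definition of $q_t$ is needed.

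\textbf{Disjointness.} The two balls $B(\alphaf{t}z,\betaf{t}u)$ and $B(\alphaf{t}v,\betaf{t}u)$ have equal radii. They are disjoint whenever the distance between their centers exceeds the sum of the radii. Suppose some $y$ lay in both. The triangle inequality would give $\alphaf{t}\|z-v\|_2\le\|y-\alphaf{t}z\|_2+\|y-\alphaf{t}v\|_2\le 2\betaf{t}u$. Dividing by $\alphaf{t}>0$, this says $\|z-v\|_2\le 2u/\mathrm{SNR}_t$, contradicting the hypothesis. I will note that this step needs $\alphaf{t}>0$, i.e.\ $t<T$; at $t=T$ the condition is vacuous anyway.

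\textbf{Hamming translation.} I will state explicitly that $d_{\mathrm{Ham}}$ counts cells whose symbols differ. For one-hot grids, each cell where $z$ and $v$ differ contributes exactly two coordinates of magnitude $1$ to $z-v$, and each agreeing cell contributes zero. Therefore $\|z-v\|_2^2=2\,d_{\mathrm{Ham}}(z,v)$. The condition $\|z-v\|_2>2u/\mathrm{SNR}_t$ then becomes $2d_{\mathrm{Ham}}>4u^2/\mathrm{SNR}_t^2$, i.e.\ $d_{\mathrm{Ham}}>2u^2/\mathrm{SNR}_t^2$, which is the stated form.

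The only point needing care is this factor-of-two bookkeeping: the stated constant $2u^2/\mathrm{SNR}_t^2$ comes out only with the cell-count convention, not a coordinate count. Optionally, I would remark that with $u\approx\sqrt{md}+O(1)$ (by $\chi$-concentration) the coverage bound is close to $1$.
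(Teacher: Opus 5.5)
Your proposal is correct and matches the paper's proof step for step: coverage via the event $\{\|\epsilon\|_2\le u\}$, disjointness by comparing the center distance $\alpha(t)\|z-v\|_2$ with the radius sum $2\beta(t)u$, and the one-hot identity $\|z-v\|_2^2=2\,d_{\mathrm{Ham}}(z,v)$. Your explicit triangle-inequality argument and your note that $\alpha(t)>0$ is needed are small clarifications that the paper leaves implicit.
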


\begin{proof}
Since
\[
\xt-\alphaf{t}\xdata=\betaf{t}\noise,
\]
we have
\[
\xt\in B(\alphaf{t}\xdata,\betaf{t}u)
\]
whenever
\[
\|\noise\|_2\le u.
\]
This gives the first claim.

For the second claim, the distance between the two centers is
\[
\|\alphaf{t}z-\alphaf{t}v\|_2
=
\alphaf{t}\|z-v\|_2.
\]
Each ball has radius \(\betaf{t}u\). The two balls are disjoint if
\[
\alphaf{t}\|z-v\|_2>2\betaf{t}u,
\]
which is equivalent to
\[
\|z-v\|_2>\frac{2u}{\mathrm{SNR}_t}.
\]
For one-hot grids,
\[
\|z-v\|_2^2=2d_{\mathrm{Ham}}(z,v),
\]
giving the final condition.
\end{proof}

This explains the exposure gap. Under standard training, the model is trained on
noisy versions of valid grids, so certain invalid regions receive little training
mass. During sampling, however, the model's own predictions can move trajectories
toward noisy versions of invalid or near-valid states. Self-correction training
therefore adds supervision on precisely these sampler-induced inputs.

\subsection{Proof and nearby-proposal extension}
\label{app:local_proposal_explained}

{
\paragraph{Proof of Proposition~\ref{prop:local_self_correction_matching}.}
For a fixed continuous proposal \(\bar x_0\), the exact DDPM posterior kernels are the reverse conditionals associated with the Gaussian marginals \(q_t(\cdot;\bar x_0)\). Hence they map \(q_t(\cdot;\bar x_0)\) to \(q_s(\cdot;\bar x_0)\) for every \(s<t\), which proves the claimed marginal over the local reverse window. The first KL divergence is then zero by construction of \(X_t^{\mathrm{SC}}\); the second follows from the standard KL formula for Gaussians with covariance \(\betaf{t}^2I\) and means \(\alphaf{t}x_0\) and \(\alphaf{t}\bar x_0\).

\paragraph{Extension to nearby proposals.}
Proposition~\ref{prop:local_self_correction_matching} assumes that the proposal of the denoiser remains unchanged which is an idealization. Consider instead reverse times \(t_0>t_1>\cdots>t_K=t>0\), and assume that the proposal at each step remains within radius \(\rho_k\) of \(\bar x_0^\star\) almost surely under the reverse process:
\[
\|\bar x_0^{(k)}-\bar x_0^\star\|_2\leq\rho_k.
\]
Let \(A_k\) be the coefficient multiplying the clean proposal in the DDPM posterior mean at step \(k\), and let \(\widetilde\beta_k I\) be the corresponding posterior covariance
(Eqs.~\eqref{eq:posterior_mean}--\eqref{eq:posterior_var}).
Conditioned on the same current state, replacing \(\bar x_0^\star\) by \(\bar x_0^{(k)}\) changes only the posterior mean, by \(A_k(\bar x_0^{(k)}-\bar x_0^\star)\). The resulting one-step KL divergence is therefore at most
\[
\frac{A_k^2\rho_k^2}{2\widetilde\beta_k}.
\]
We consider a reverse window with positive posterior variances \(\widetilde\beta_k>0\), then the KL chain rule and data processing inequality give:
\begin{equation}
D_{\mathrm{KL}}\!\left(
    \mathcal L(Y_t)
    \,\middle\|\,
    q_t(\cdot;\bar x_0^\star)
\right)
\leq
D_{\mathrm{KL}}\!\left(
    \mathcal L(Y_{t_0})
    \,\middle\|\,
    q_{t_0}(\cdot;\bar x_0^\star)
\right)
+
\sum_{k=1}^{K}
\frac{A_k^2\rho_k^2}{2\widetilde\beta_k}.
\label{eq:nearby_proposal_bound}
\end{equation}
Likewise, at a fixed time \(t\), if the proposal \(\bar x_0\) used to construct the
self-correction input also satisfies
\(\|\bar x_0-\bar x_0^\star\|_2\leq\rho\), then
\begin{equation}
D_{\mathrm{KL}}\!\left(
    \mathcal L(X_t^{\mathrm{SC}})
    \,\middle\|\,
    q_t(\cdot;\bar x_0^\star)
\right)
\leq
\frac{\alphaf{t}^2\rho^2}{2\betaf{t}^2}.
\label{eq:nearby_sc_input_bound}
\end{equation}

When the right-hand sides of Eqs.~\eqref{eq:nearby_proposal_bound} and~\eqref{eq:nearby_sc_input_bound} are small, both the sampler states and the self-correction inputs are close to \(q_t(\cdot;\bar x_0^\star)\). This explains why re-noising the model's proposal can produce training inputs similar to those encountered during this part of sampling, while retaining \(x_0\) as the target.

{
\paragraph{Empirical comparison with sampler states.}
We also compare the self-correction inputs directly with states encountered
during DDPM sampling. At each noise level, we measure the frequency of the
event \(E_+\), where the clean proposal gives a valid Tweedie center but the
corresponding DDPM center is invalid. We compare forward-noised training
inputs \(X_t^{\mathrm{std}}\), self-correction inputs
\(X_t^{\mathrm{SC}}\), and actual DDPM states \(Y_t\).
The respective event frequencies are
\[
(0.319,\;0.116,\;0.030)
\quad\text{on Sudoku-Extreme,}
\]
and
\[
(0.247,\;0.067,\;0.036)
\quad\text{on 21-clue Sudoku.}
\]
Thus, on this diagnostic, the gap between self-correction inputs and DDPM
states is \(3.4\)--\(6.8\times\) smaller than for standard forward-noised
inputs. This does not establish equality of the full distributions, but
supports the interpretation that re-noising the model's own proposal better
reflects the states relevant to its sampling errors.
}

\subsection{Self-correction as train-inference mismatch correction}
\label{sec:self_recovery_theory}

The previous section argued that DDPM can enter model-induced states on which
the baseline denoiser no longer proposes valid grids reliably. Self-correction
addresses this mismatch by changing the supervised input distribution. Instead
of training only on forward-noised valid grids, it also trains on noisy versions
of the model's own intermediate predictions, while keeping the original valid
grid as the regression target.

\begin{proposition}[Self-correction target]
\label{prop:self_correction_population_target}
Fix the stopped-gradient proposal generator \(f_{\bar\theta}\). Draw a valid
sample \(X_0\), form a standard noisy input
\[
Y_{t_1}^{\mathrm{std}}
=
\alphaf{t_1}X_0+\betaf{t_1}\epsilon_1,
\]
and define the stopped-gradient self-correction proposal
\[
\bar X_0
=
\operatorname{sg}\!\left(f_{\bar\theta}(Y_{t_1}^{\mathrm{std}},t_1)\right).
\]
The self-correction input at time \(t\) is
\[
Y_t^{\mathrm{sc}}
=
\alphaf{t}\bar X_0+\betaf{t}\epsilon',
\]
while the standard denoising input is
\[
Y_t^{\mathrm{std}}
=
\alphaf{t}X_0+\betaf{t}\epsilon .
\]
Both losses use the original valid sample \(X_0\) as target:
\[
\mathcal L(f)
=
\mathbb E\!\left[
\|f(Y_t^{\mathrm{sc}},t)-X_0\|^2
\right]
+
\lambda_{\mathrm{std}}
\mathbb E\!\left[
\|f(Y_t^{\mathrm{std}},t)-X_0\|^2
\right].
\]

Let \(p_{\mathrm{sc}}(y,t)\) and \(p_{\mathrm{std}}(y,t)\) be the densities of \(Y_t^{\mathrm{sc}}\) and \(Y_t^{\mathrm{std}}\) and their sampled times, respectively, and define
\[
m_{\mathrm{sc}}(y,t)
=
\mathbb E[X_0\mid Y_t^{\mathrm{sc}}=y],
\qquad
m_{\mathrm{std}}(y,t)
=
\mathbb E[X_0\mid Y_t^{\mathrm{std}}=y].
\]
Then, at any \((y,t)\) such that \[ p_{\mathrm{sc}}(y,t)+\lambda_{\mathrm{std}}p_{\mathrm{std}}(y,t)>0,
\]
the minimizer satisfies
\[
f^\star(y,t)
=
\frac{
p_{\mathrm{sc}}(y,t)m_{\mathrm{sc}}(y,t)
+
\lambda_{\mathrm{std}}p_{\mathrm{std}}(y,t)m_{\mathrm{std}}(y,t)
}{
p_{\mathrm{sc}}(y,t)+\lambda_{\mathrm{std}}p_{\mathrm{std}}(y,t)
}.
\]
In particular, whenever
\[
p_{\mathrm{sc}}(y,t)\gg
\lambda_{\mathrm{std}}p_{\mathrm{std}}(y,t),
\]
the learned target is dominated by the self-correction regression target
\[
m_{\mathrm{sc}}(y,t)
=
\mathbb E[X_0\mid Y_t^{\mathrm{sc}}=y].
\]
\end{proposition}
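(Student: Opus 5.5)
The approach is the standard pointwise minimization of a weighted squared loss. I will write both expectations as integrals against a common base measure over \((y,t)\), combine them into a single integral of a conditional risk, and minimize that integrand separately at each \((y,t)\).

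First, write the self-correction term as
\[
\int p_{\mathrm{sc}}(y,t)\,\mathbb E\!\left[\|f(y,t)-X_0\|^2 \,\middle|\, Y_t^{\mathrm{sc}}=y\right]\mathrm{d}y\,\mathrm{d}t ,
\]
and the standard term analogously with \(p_{\mathrm{std}}\). The disintegration with respect to \((y,t)\) is legitimate here. Since \(\betaf{t}>0\), both \(Y_t^{\mathrm{sc}}\) and \(Y_t^{\mathrm{std}}\) have Gaussian-convolved densities, whatever the law of \(\bar X_0\) or \(X_0\) is. Moreover \(X_0\) takes values in a bounded set (one-hot grids), so every conditional expectation is finite.

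Second, expand each conditional risk by the usual bias--variance identity:
\[
\mathbb E\!\left[\|f-X_0\|^2 \,\middle|\, Y=y\right]
=
\|f(y,t)-m(y,t)\|^2
+
\mathbb E\!\left[\|X_0-m\|^2 \,\middle|\, Y=y\right].
\]
The variance part does not depend on \(f\). So, up to an \(f\)-independent constant, the total loss equals
\[
\int \Bigl[\,p_{\mathrm{sc}}\|f-m_{\mathrm{sc}}\|^2+\lambda_{\mathrm{std}}\,p_{\mathrm{std}}\|f-m_{\mathrm{std}}\|^2\Bigr]\,\mathrm{d}y\,\mathrm{d}t .
\]

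Third, minimize the integrand pointwise. Wherever \(w\doteq p_{\mathrm{sc}}+\lambda_{\mathrm{std}}p_{\mathrm{std}}>0\), the integrand is a strictly convex quadratic in \(f(y,t)\). Setting its gradient to zero gives the stated weighted average of \(m_{\mathrm{sc}}\) and \(m_{\mathrm{std}}\). Completing the square shows the integrand is \(w\|f-f^\star\|^2\) plus an \(f\)-independent term, so any minimizer must coincide with \(f^\star\) almost everywhere on \(\{w>0\}\). Since \(f^\star\) is measurable, it is itself a minimizer.

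Finally, the ``in particular'' clause follows by rewriting
\[
f^\star=m_{\mathrm{sc}}+\frac{\lambda_{\mathrm{std}}p_{\mathrm{std}}}{w}\,(m_{\mathrm{std}}-m_{\mathrm{sc}}).
\]
The correction weight is at most \(\lambda_{\mathrm{std}}p_{\mathrm{std}}/p_{\mathrm{sc}}\), which is small by assumption, and \(\|m_{\mathrm{std}}-m_{\mathrm{sc}}\|\) is bounded because \(X_0\) is bounded.

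The main obstacle is interpretive rather than computational: making precise in what sense ``the minimizer'' is pointwise. It is unique only almost everywhere with respect to \(w\), and the statement at ``any \((y,t)\)'' should be read that way, or for the canonical continuous version of \(f^\star\). I will also note that \(t\) has different sampling densities in the two terms (\(t_2\sim\mathcal U[0,t_1]\) versus \(t\sim\mathcal U[0,T]\)). This is handled by folding the time densities into \(p_{\mathrm{sc}}\) and \(p_{\mathrm{std}}\), which is exactly why the statement defines them as joint densities in \((y,t)\).
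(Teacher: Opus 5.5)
Your proposal is correct and follows the paper's own argument. Both fix $(y,t)$, use the bias--variance identity to reduce the $f$-dependent part to $p_{\mathrm{sc}}\|a-m_{\mathrm{sc}}\|^2+\lambda_{\mathrm{std}}p_{\mathrm{std}}\|a-m_{\mathrm{std}}\|^2$, and set the gradient to zero. Your extra remarks go beyond what the paper writes without changing the route: almost-everywhere uniqueness on $\{p_{\mathrm{sc}}+\lambda_{\mathrm{std}}p_{\mathrm{std}}>0\}$, folding the differing time densities into the joint densities, and the explicit bound $\lambda_{\mathrm{std}}p_{\mathrm{std}}/p_{\mathrm{sc}}$ for the ``in particular'' clause.
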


\begin{proof}
Fix \((y,t)\) and write \(a=f(y,t)\). The terms of the objective that
depend on \(a\) are
\[
p_{\mathrm{sc}}(y,t)
\mathbb E[\|a-X_0\|^2\mid Y_t^{\mathrm{sc}}=y]
+
\lambda_{\mathrm{std}}p_{\mathrm{std}}(y,t)
\mathbb E[\|a-X_0\|^2\mid Y_t^{\mathrm{std}}=y].
\]
Using
\[
\mathbb E[\|a-X_0\|^2\mid Y=y]
=
\|a-\mathbb E[X_0\mid Y=y]\|^2+\mathrm{const}(y),
\]
this is, up to constants independent of \(a\),
\[
p_{\mathrm{sc}}(y,t)\|a-m_{\mathrm{sc}}(y,t)\|^2
+
\lambda_{\mathrm{std}}p_{\mathrm{std}}(y,t)
\|a-m_{\mathrm{std}}(y,t)\|^2.
\]
Setting the gradient with respect to \(a\) to zero gives
\[
p_{\mathrm{sc}}(y,t)(a-m_{\mathrm{sc}}(y,t))
+
\lambda_{\mathrm{std}}p_{\mathrm{std}}(y,t)
(a-m_{\mathrm{std}}(y,t))
=
0.
\]
Solving for \(a\) gives the claimed expression.
\end{proof}

Standard training learns the Bayes denoiser on forward-noised valid grids. Self-correction adds training mass around model-induced predictions \(\xtildef{0}\), which may be invalid or partially wrong, and trains the model to
map noisy versions of those predictions back to the original valid grid. Thus self-correction targets the exposure gap encountered by closed-loop samplers such as DDPM.

\section{Self-Correction Loss Ablations on Sudoku}
\label{app:loss_ablations}

We ablate two design dimensions of the self-correction loss \(\mathcal{L}_{\mathrm{SC}}\) on Sudoku: (i) the regularizer loss weight \(\lambda_{\mathrm{simple}}\); and (ii) alternative formulations of the self-correction training step, including different input constructions, target constructions, and multi-step rollouts. We additionally compare to input perturbations from DDPM-IP \cite{ning2023input} and self-conditioning from Analog Bits \cite{chen2022analog}.

We evaluate on Sudoku puzzles with 21 clues using the full checkpoint grid $ \{20\mathrm{k}, 60\mathrm{k}, 100\mathrm{k}, 200\mathrm{k}, 400\mathrm{k}, 800\mathrm{k}, 1.2\mathrm{M}, 1.6\mathrm{M}, 2.0\mathrm{M}\}.$ For certain runs we stopped the training earlier if loss was unstable. For the loss-weight sweep, we report both samplers: Tweedie reprojection (Tw) and DDPM.

\subsection{Simple loss weight}

\label{app:slw-weight}

\begin{figure}[t]
\centering
\includegraphics[
    width=\textwidth,
    trim={0cm 0cm 0 1.8cm},
    clip
]{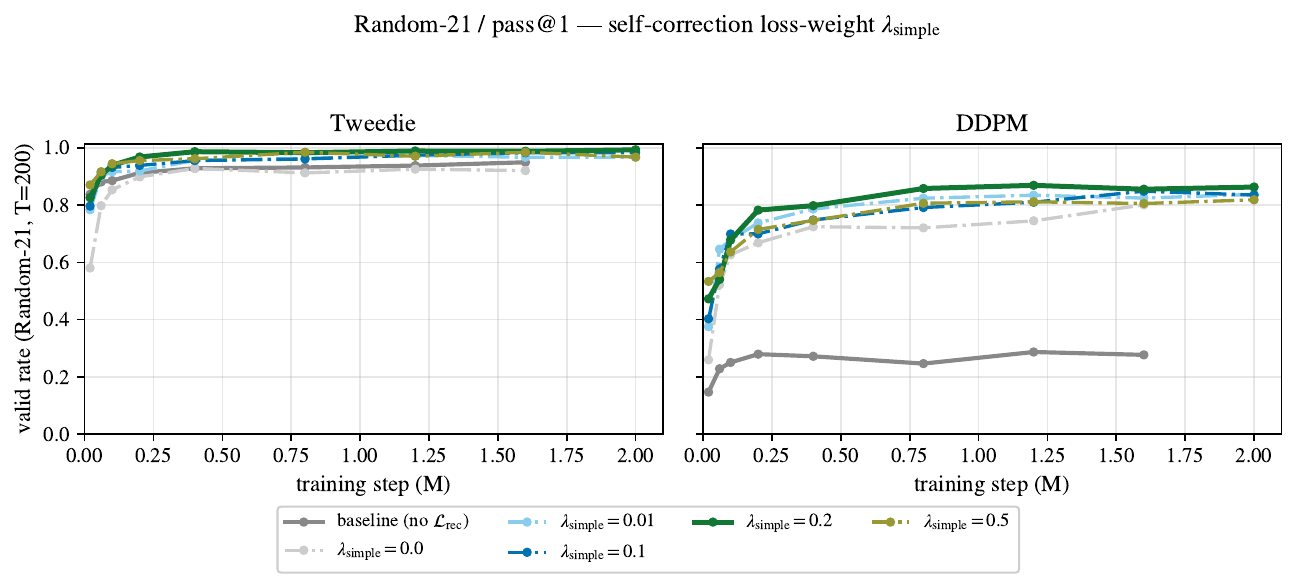}
\caption{\textbf{Self-correction loss-weight sensitivity on Sudoku puzzles with 21 clues.} We sweep \(\lambda_{\mathrm{simple}} \in \{0,0.01,0.1,0.2,0.5\}\) and report checkpoint trajectories under the standard \(T=200\), pass@1 evaluation.}
\label{fig:slw-sensitivity}
\end{figure}

Figure~\ref{fig:slw-sensitivity} shows the sweep over \(\lambda \in \{0, 0.01, 0.1, 0.2, 0.5\}\). The baseline corresponds to standard training without the self-correction loss. Tweedie reprojection already performs strongly across settings, with valid rates in a relatively narrow range. The main effect of the self-correction loss is on DDPM. As discussed in Section~\ref{app:subsection:tweedie_ddpm}, DDPM preserves information from the current state \(x_t\) directly. This is appropriate for continuous denoising, but in discrete constraint problems it can preserve an early incorrect commitment. Self-correction training mitigates this failure mode, improving DDPM from roughly \(29\%\) validity to approximately \(85\%\) on Sudoku puzzles with 21 clues.

Overall, the method is not sensitive to the exact value of \(\lambda\). In the main experiments we use \(\lambda_{\mathrm{simple}}=0.1\), which was chosen as the initial default and lies in the stable high-performing region of the sweep.

\subsection{Loss-Formulation Variants}
\label{app:loss-variants}

We next compare alternative ways of constructing the self-correction training step. 

\begin{description}
\item[\textbf{Baseline.}] No self-correction loss. This is standard single-pass denoising training (as in \ref{app:slw-weight}).

\item[\textbf{Self-correction.}] Our headline recipe described in Algorithm~\ref{alg:self_correction}. The model first predicts \(\hat x_0\), then receives a corrupted version of this previous prediction and is trained to recover the original clean target \(x_0\).

\item[\textbf{Input perturbation.}]
A DDPM-IP-style input regularization baseline from \cite{ning2023input}. The input perturbation baseline adds extra Gaussian noise to the input:
\[
\tilde x_t = x_t + \gamma \epsilon'',
\qquad
\gamma=0.1,
\qquad
\epsilon'' \sim \mathcal{N}(0,I),
\]
and trains on
\[
\left\| f_\theta(\tilde x_t,t) - x_0 \right\|^2.
\]
This encourages robustness to local perturbations of \(x_t\), but it does not specifically train the model to correct structured errors arising from its own
previous predictions.

\item[\textbf{DDPM-step input.}]
Instead of constructing the second input by forward-noising \(\hat x_0\), we construct it using one DDPM ancestral step from \(x_{t_1}\). Concretely, after
computing
\(
\hat x_0 = f_\theta(x_{t_1}, t_1),
\)
we set
\[
x_{t_2}
=
\mu_{\mathrm{DDPM}}(x_{t_1}, \hat x_0; t_1 \!\to\! t_2)
+
\tau_{t_1,t_2}\xi,
\qquad
\xi \sim \mathcal N(0,I),
\]
where \(\mu_{\mathrm{DDPM}}\) is the ancestral DDPM mean and
\(\tau_{t_1,t_2}\) is the corresponding posterior noise scale. This makes the
self-correction input distribution closer to the test-time DDPM trajectory.  The loss itself is unchanged from the standard self-correction loss $\xhatf{\timedata}'' = f_\theta(x_{t_2}, t_2)$, $\Vert \xhatf{\timedata}'' - x_0\Vert^2$ only the input distribution is modified.

\item[\textbf{DDPM-mean target.}] Instead of supervising the second prediction directly with \(\Vert \xhatf{\timedata}' - x_0\Vert^2\), we supervise the resulting DDPM mean:
  \[
  \left\| \mu_{\mathrm{DDPM}}(x_{t_2}, \xhatf{\timedata}'; t_2 \to t_3) - \alpha(t_3) x_0 \right\|^2.
  \] This asks the model to make predictions whose downstream DDPM update lands on the clean manifold.

\item[\textbf{DDPM-step input + DDPM-mean target.}] Combines the previous two modifications: the self-correction input is generated by a DDPM step, and the loss supervises the downstream DDPM mean.

\item[\textbf{Random-\(N\) denoise.}] Performs a random number of no-gradient self-correction rollouts before the final gradient pass. With parameter \(n\), the number of rollouts is sampled from \(\{1,\ldots,n\}\). This exposes the model to deeper unrolled trajectories.

\item[\textbf{Self-conditioning without self-correction loss.}] Following \cite{chen2022analog}, the model receives its previous prediction as an additional input channel, \(f_\theta(\mathrm{cat}([x_t, \xhatf{\timedata}])_\mathrm{dim=-1}, t)\), no self-correction loss.
\end{description}

\begin{figure}[t]
\centering
\includegraphics[
    width=\textwidth,
    trim={0cm 0cm 0 1.8cm},
    clip
]{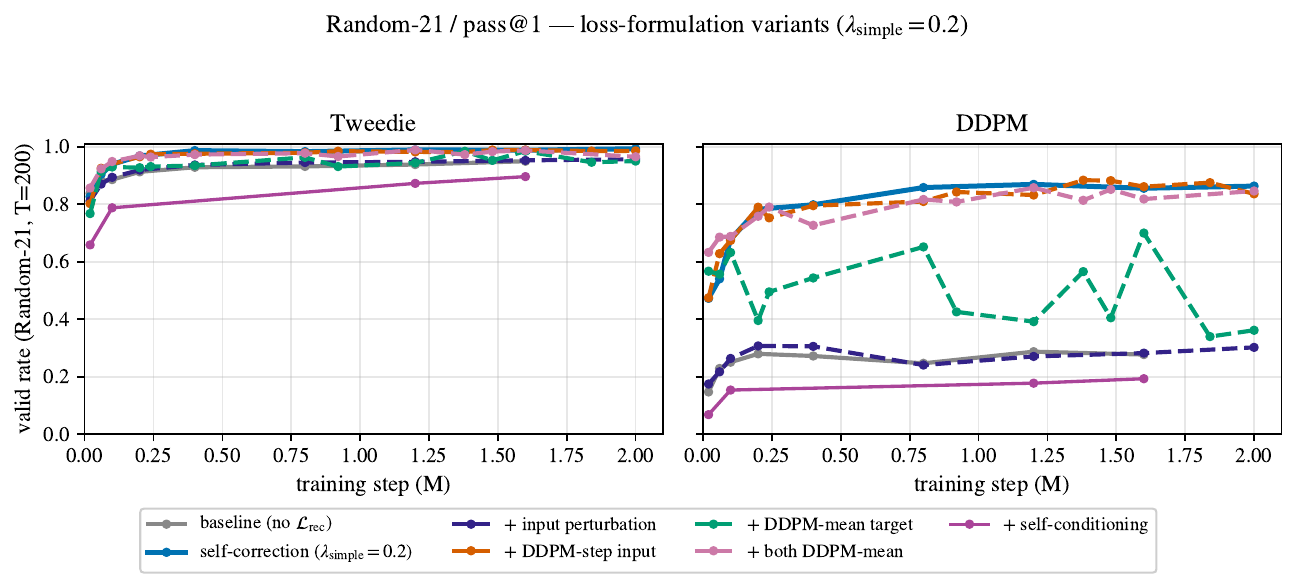}
\caption{\textbf{Loss ablations on Sudoku puzzles with 21 clues.} Self-correction and DDPM-step input give the strongest and most stable improvements, while input perturbation and self-conditioning are substantially weaker, especially for DDPM.}
\label{fig:lossvariants_ablation}
\end{figure}

Figure~\ref{fig:lossvariants_ablation} compares alternative ways of constructing the recovery signal on Sudoku puzzles with 21 clues. Input perturbation gives little improvement over the baseline, suggesting that generic Gaussian noise does not reproduce the structured errors created by closed-loop sampling. Self-conditioning is also weaker, especially for DDPM, indicating that simply providing an additional memory channel is not enough. The strongest variants are those that expose the model to its own intermediate predictions, either through the simple self-correction loss or DDPM-aware variants. This supports our main interpretation: the issue is not only robustness to noise or lack of conditioning, but a mismatch between the forward-noised training inputs and the model-induced states visited during sampling.

\paragraph{Random-symbol corruption.}
We also test whether self-correction helps only by exposing the model to invalid discrete inputs. As a simpler baseline, we randomly select either $20\%$ or $40\%$ of Sudoku cells, replacing each selected cell with a random symbol. We then train the model to recover the original valid grid. This improves DDPM from $34.5\%$ to $64.0\%$ with $20\%$ corrupted cells, showing that invalid-state exposure is useful. However, self-correction improves DDPM further to $78.2\%$, suggesting that model-induced recovery states are better matched to the reverse-sampling distribution than arbitrary symbol corruptions. Tweedie reprojection is already near saturation in this setting, so these augmentations mainly affect DDPM.

{
\paragraph{Training on multi-step DDPM trajectories.}
We additionally tested whether training on states from an actual trajectory improves over the one-step self-correction loss. Starting from a converged denoiser, we maintained a pool of 16-step DDPM trajectories and advanced each trajectory by two differentiable reverse steps per update using truncated backpropagation through time. On 21-clue Sudoku, the best rollout-trained model reaches \(0.78\) DDPM validity, compared with \(0.31\) for standard training and \(0.87\) for one-step self-correction. Thus, training on trajectory states helped, but did not outperform one-step self-correction.
}

\begin{table}[t]
\centering
\caption{\textbf{Ablation on invalid-state exposure for Sudoku conditional generation. }
We evaluate 21-given-cell Sudoku with 500 sampling steps on 1000 puzzles and 3 seeds (checkpoint at 0.2M steps). Values are valid rates in percent, mean $\pm$ std.}
\label{tab:invalid_state_ablation}
\small
\setlength{\tabcolsep}{7pt}
\renewcommand{\arraystretch}{1.12}
\begin{tabular}{lcc}
\toprule
Training variant & Tweedie reprojection & DDPM \\
\midrule
Baseline
& $99.23 \pm 0.35$
& $34.47 \pm 1.75$ \\
Random-symbol corruption, $20\%$ cells
& $99.60 \pm 0.10$
& $64.00 \pm 2.33$ \\
Random-symbol corruption, $40\%$ cells
& $99.07 \pm 0.31$
& $56.60 \pm 1.23$ \\
Self-correction
& $99.07 \pm 0.31$
& $78.20 \pm 0.98$ \\
\bottomrule
\end{tabular}
\end{table}

\subsection{Rectified-flow experiment on one-hot Sudoku}
\label{app:rectified_flow_sudoku}

In addition to evaluating the released SRM checkpoint, we train a
rectified-flow model on one-hot Sudoku with random conditioning masks.
Table~\ref{tab:rectified_flow_sudoku} compares sampling with and without
self-correction training. The same qualitative pattern appears:
Tweedie reprojection improves over Euler without retraining, while
self-correction substantially improves Euler.

\begin{table}[t]
\centering
\caption{\textbf{Sudoku validity with rectified flow.}
Entries report baseline \(\rightarrow\) self-correction for the
additional rectified-flow experiment.}
\label{tab:rectified_flow_sudoku}
\small
\begin{tabular}{lccc}
\toprule
Regime & Euler & EM decay & Tweedie \\
\midrule
21 clues
& \(0.155\to0.630\)
& \(0.427\to0.926\)
& \(0.926\to0.988\) \\
Medium
& \(0.759\to0.909\)
& \(0.864\to0.960\)
& \(0.983\to0.993\) \\
Hard
& \(0.121\to0.735\)
& \(0.417\to0.963\)
& \(0.911\to0.995\) \\
\bottomrule
\end{tabular}
\end{table}

\subsection{Consistency-model experiment}
\label{app:consistency_models}

We additionally evaluate a consistency model on conditional one-hot Sudoku-Extreme. We train the model from scratch using an adaptation of improved consistency training~\cite{song2024improved}. It uses the same VP-cosine path and \(0.82\)M-parameter Transformer architecture as our continuous baseline, with hidden dimension \(128\) and depth \(4\). Training uses the dataset's given-clue masks, a progressively increasing number of noise discretization levels, a stopped-gradient target without exponential moving averaging, and a noise-weighted pseudo-Huber loss. We train for \(2\) million optimizer steps with a learning rate of \(10^{-4}\) and warmup. This consistency model achieves approximately \(3\%\) pass@1 validity with \(5\) sampling steps and \(16\%\) with \(1{,}000\) steps. For context, at \(1{,}000\) steps, our baseline diffusion model achieves \(18\%\) with EM decay and \(26\%\) with Tweedie reprojection. 

\begin{figure}[t]
    \centering
    \includegraphics[
        width=0.65\linewidth,
        trim={0.2cm 0.2cm 0.2cm 0.84cm},
        clip
    ]{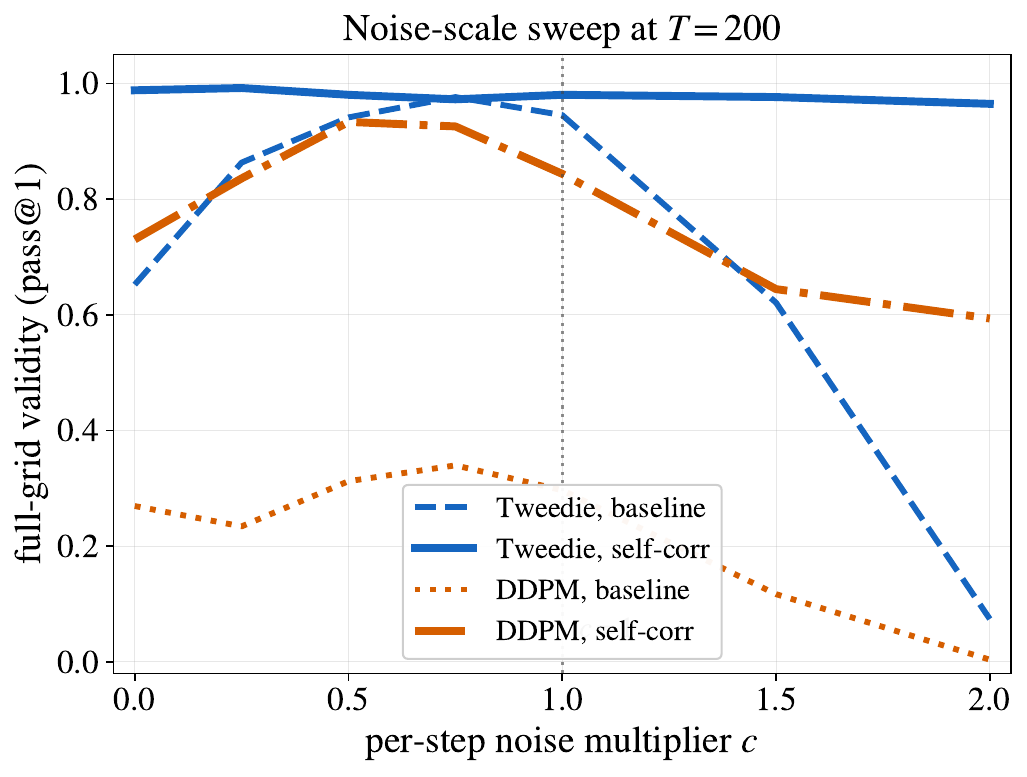}
    \caption{\textbf{Effect of DDPM sampling variance on Sudoku.} Final validity when multiplying the per-step sampling variance. Changing the variance alone does not close the baseline
    DDPM--Tweedie gap.}
    \label{fig:analysis_noise}
\end{figure}

\section{Sampling step-count ablation}

As an additional sanity check, we vary the number of reverse sampling steps \(T\) for Sudoku puzzles with 21 clues.
This tests whether the DDPM gap is simply due to insufficient discretization or too
few opportunities to correct errors.

As Table~\ref{tab:step_count_sweep} shows, increasing \(T\) drives Tweedie sampling to near-perfect validity, but baseline DDPM
remains around \(0.29\)--\(0.31\). Thus the baseline DDPM failure is not fixed by
using more reverse steps. 

\begin{table}[t]
\centering
\caption{\textbf{Sampling step-count ablation.}
Full-grid Sudoku validity for baseline and self-correction models under Tweedie
and DDPM sampling as the number of sampling steps \(T\) varies.
Values are mean \(\pm\) standard error over runs (different seeds for random masks generation).}
\label{tab:step_count_sweep}
\setlength{\tabcolsep}{5pt}
\begin{adjustbox}{max width=\textwidth}
\begin{tabular}{ccccc}
\toprule
& \multicolumn{2}{c}{Tweedie} & \multicolumn{2}{c}{DDPM} \\
\cmidrule(lr){2-3}\cmidrule(lr){4-5}
\(T\) & Baseline & Self-correction & Baseline & Self-correction \\
\midrule
25   & \(0.488 \pm 0.014\) & \(0.355 \pm 0.022\) & \(0.212 \pm 0.036\) & \(0.441 \pm 0.034\) \\
50   & \(0.737 \pm 0.018\) & \(0.887 \pm 0.027\) & \(0.272 \pm 0.013\) & \(0.745 \pm 0.030\) \\
100  & \(0.853 \pm 0.011\) & \(0.965 \pm 0.007\) & \(0.249 \pm 0.020\) & \(0.848 \pm 0.014\) \\
200  & \(0.952 \pm 0.019\) & \(0.978 \pm 0.006\) & \(0.296 \pm 0.033\) & \(0.852 \pm 0.017\) \\
500  & \(0.987 \pm 0.005\) & \(0.992 \pm 0.007\) & \(0.292 \pm 0.062\) & \(0.861 \pm 0.025\) \\
1000 & \(0.999 \pm 0.002\) & \(0.999 \pm 0.002\) & \(0.311 \pm 0.020\) & \(0.854 \pm 0.024\) \\
2000 & \(1.000 \pm 0.000\) & \(1.000 \pm 0.000\) & \(0.298 \pm 0.022\) & \(0.852 \pm 0.000\) \\
5000 & \(1.000 \pm 0.000\) & \(1.000 \pm 0.000\) & \(0.292 \pm 0.033\) & \(0.863 \pm 0.036\) \\
\bottomrule
\end{tabular}
\end{adjustbox}
\end{table}

\section{Beyond one-hot encodings}
\label{app:other_representations}

To verify that our findings are not specific to one-hot representations and argmax decoding, we vary both the representation and the decoder while using models of comparable size. We evaluate uint4 analog codes with threshold decoding (adapted from uint8 Analog Bits~\cite{chen2022analog}), fixed random embeddings with nearest-neighbour decoding, and mini MNIST-Sudoku with a learned CNN decoder (different from the dataset used by SRM \cite{wewer2025spatial}). For mini MNIST-Sudoku, we use \(8\times 8\) MNIST images, whereas SRM uses \(28\times 28\) images; our model has \(0.8\)M parameters compared with approximately \(130\)M for SRM. Each image is high-dimensional and continuous, and each cell is decoded by a learned classifier, following SRM. Table \ref{tab:representation_comparison} contains relevant within-representation comparisons.

\begin{table*}[t]
\centering
\caption{Comparison across Sudoku representations. Results report baseline $\rightarrow$ self-corrected validity (one-seed results).}
\label{tab:representation_comparison}
\begin{tabular}{l r c c c}
\hline
Experiment
& Parameters
& DDPM
& EM-decay
& Tweedie \\
\hline
One-hot Sudoku
& 824,073
& $0.31 \rightarrow 0.87$
& $0.84 \rightarrow 0.98$
& $0.94 \rightarrow 0.98$ \\

Analog-bit Sudoku
& 822,788
& $0.19 \rightarrow 0.79$
& $0.34 \rightarrow 0.87$
& $0.89 \rightarrow 0.98$ \\

Random-embedding Sudoku
& 824,073
& $0.26 \rightarrow 0.76$
& $0.43 \rightarrow 0.90$
& $0.89 \rightarrow 0.93$ \\

mini MNIST--Sudoku
& 838,208
& $0.01 \rightarrow 0.39$
& $0.01 \rightarrow 0.44$
& $0.42 \rightarrow 0.78$ \\

\hline
\end{tabular}
\end{table*}

We observe consistent improvements from DDPM to Tweedie for both the standard Sudoku checkpoint and the self-correction-trained model, while DDPM itself also benefits from self-correction training.

\section{Other metrics}
\label{app:other_metrics}

\subsection{Soft metrics.}

The main results use exact validity, which is a strict binary metric: a solution with a single violated constraint is counted as invalid. We therefore report two additional diagnostics.  

\emph{Distance to one-hot} (Table~\ref{tab:commitment}) is the per-puzzle RMSE between the final continuous output and its nearest one-hot encoding, computed over predicted cells. Lower values indicate that the final continuous state is closer to an exact discrete representation, but do not imply that the decoded configuration is valid. We report this metric for all settings except Sudoku-Extreme pass@10, which reuses the pass@1 boards. 

We also report the \emph{constraint-violation rate} (Table~\ref{tab:constraint-violations}): the fraction of predicted cells involved in a violated task constraint, using row/column/box constraints for Sudoku, row/column constraints for Latin squares, and attacking-queen constraints for \(N\)-queens. Overall, constraint-violation rate shows largely the same trends as exact validity.

\begin{table}[t]
\centering
\caption{\textbf{Distance to one-hot representations across samplers and tasks.}
Per-puzzle RMSE between the sampler output $x$ and its nearest one-hot codeword $\mathrm{onehot}(\arg\max x)$, over predicted (non-clue) cells, $\times 10^{2}$ (lower means closer to one-hot). Mean $\pm$ std over training seeds. Baselines use no self-correction loss; self-correction uses $\lambda_{\mathrm{simple}}=0.1$. EM uses a constant noise scale along the trajectory, so its final prediction contains noise in our implementation.}
\label{tab:commitment}
\setlength{\tabcolsep}{3.8pt}
\renewcommand{\arraystretch}{1.15}
\begin{adjustbox}{max width=\textwidth}
\begin{tabular}{lccccccccccc}
\toprule
\multirow{2}{*}{\makecell[c]{Sampler}}
& \multicolumn{3}{c}{Sudoku}
& \multicolumn{1}{c}{Sudoku-Extreme}
& \multicolumn{2}{c}{N-Queens}
& \multicolumn{2}{c}{Latin}
& \multicolumn{2}{c}{GC} \\
\cmidrule(lr){2-4}\cmidrule(lr){5-5}\cmidrule(lr){6-7}\cmidrule(lr){8-9}\cmidrule(lr){10-11}
& \makecell{21 clues} & \makecell{Medium} & \makecell{Hard}
& \makecell{pass@1}
& \makecell{random} & \makecell{gen}
& \makecell{random} & \makecell{gen}
& \makecell{$N=12$} & \makecell{$N=18$} \\
\midrule
\multicolumn{11}{c}{\textbf{Baseline} \textnormal{(no self-correction loss)}} \\
\addlinespace[1pt]
DDPM
& \pmv{1.5}{0.2} & \pmv{0.5}{0.1} & \pmv{1.6}{0.1} & \pmv{2.3}{1.0} & \pmv{0.5}{0.0} & \pmv{0.5}{0.0} & \pmv{1.2}{0.1} & \pmv{1.0}{0.1} & \pmv{1.1}{0.1} & \pmv{1.1}{0.1} \\

Euler
& \pmv{2.1}{0.3} & \pmv{0.6}{0.1} & \pmv{2.4}{0.1} & \pmv{3.3}{0.5} & \pmv{0.5}{0.0} & \pmv{0.5}{0.0} & \pmv{1.8}{0.1} & \pmv{1.6}{0.0} & \pmv{1.4}{0.1} & \pmv{1.3}{0.1} \\

EM
& \pmv{9.7}{0.0} & \pmv{8.9}{0.0} & \pmv{10.2}{0.1} & \pmv{6.3}{0.1} & \pmv{8.7}{0.0} & \pmv{8.7}{0.0} & \pmv{9.9}{0.1} & \pmv{10.2}{0.1} & \pmv{9.7}{0.2} & \pmv{9.5}{0.1} \\

\addlinespace[2pt]
\cdashline{1-11}
\addlinespace[1pt]

EM decay
& \pmv{0.4}{0.0} 
& \best{\pmv{0.3}{0.0}}
& \pmv{0.7}{0.1} 
& \pmv{1.6}{0.5} 
& \pmv{0.5}{0.0} 
& \best{\pmv{0.5}{0.1}}
& \pmv{0.4}{0.0} 
& \pmv{0.4}{0.0} 
& \pmv{0.6}{0.3} 
& \best{\pmv{0.6}{0.3}} \\

\makecell[l]{Tweedie\\ reprojection\\(ours)}
& \best{\pmv{0.3}{0.0}}
& \best{\pmv{0.3}{0.0}}
& \best{\pmv{0.4}{0.0}}
& \best{\pmv{1.2}{0.4}}
& \best{\pmv{0.4}{0.1}}
& \best{\pmv{0.5}{0.0}}
& \best{\pmv{0.3}{0.0}}
& \best{\pmv{0.3}{0.0}}
& \best{\pmv{0.5}{0.2}}
& \best{\pmv{0.6}{0.2}} \\

\midrule
\multicolumn{11}{c}{\textbf{Self-correction loss} \textnormal{(with $\lambda_{\mathrm{simple}}=0.1$)}} \\
\addlinespace[1pt]
DDPM
& \pmv{0.9}{0.1} & \pmv{0.4}{0.1} & \pmv{1.6}{0.3} & \pmv{3.0}{0.3} & \pmv{1.2}{0.1} & \pmv{4.3}{0.1} & \pmv{0.7}{0.1} & \pmv{1.6}{0.3} & \pmv{0.5}{0.4} & \pmv{0.6}{0.2} \\

Euler
& \pmv{1.8}{0.1} & \pmv{0.7}{0.1} & \pmv{3.2}{0.1} & \pmv{4.1}{0.5} & \pmv{1.8}{0.2} & \pmv{6.1}{0.3} & \pmv{1.4}{0.2} & \pmv{5.4}{0.3} & \pmv{0.5}{0.2} & \pmv{0.6}{0.2} \\

EM
& \pmv{9.7}{0.1} 
& \pmv{9.0}{0.0} 
& \pmv{11.4}{0.3} 
& \pmv{5.1}{0.4} 
& \pmv{9.5}{0.1} 
& \pmv{13.8}{0.4} 
& \pmv{9.3}{0.1} 
& \pmv{13.2}{1.4} 
& \pmv{1.1}{0.1} 
& \pmv{9.0}{0.1} \\

\addlinespace[2pt]
\cdashline{1-11}
\addlinespace[1pt]

EM decay
& \best{\pmv{0.3}{0.1}}
& \best{\pmv{0.3}{0.0}}
& \best{\pmv{0.4}{0.1}}
& \pmv{1.9}{0.2} 
& \pmv{0.7}{0.1} 
& \best{\pmv{2.0}{0.4}}
& \best{\pmv{0.4}{0.1}}
& \best{\pmv{0.5}{0.1}}
& \best{\pmv{0.5}{0.3}}
& \best{\pmv{0.4}{0.2}} \\

\makecell[l]{Tweedie\\ reprojection\\(ours)}
& \best{\pmv{0.3}{0.1}}
& \pmv{0.4}{0.1} 
& \pmv{0.7}{0.2} 
& \best{\pmv{1.1}{0.2}}
& \best{\pmv{0.6}{0.0}}
& \pmv{2.9}{0.2} 
& \best{\pmv{0.4}{0.1}}
& \pmv{0.8}{0.3} 
& \pmv{0.7}{0.3} 
& \pmv{0.7}{0.3} \\

\bottomrule
\end{tabular}
\end{adjustbox}
\end{table}

\begin{table}[t]
\centering
\caption{\textbf{Constraint violations across samplers and tasks.}
Percentage of denoised cells that violate a task constraint (lower is better):
row, column, or box constraints for Sudoku; row or column constraints for Latin
squares; and attacking-queen constraints for N-Queens. Entries are mean
$\pm$ standard deviation across training seeds. Baselines use no
self-correction loss; self-correction uses
$\lambda_{\mathrm{simple}}=0.1$.}
\label{tab:constraint-violations}
\setlength{\tabcolsep}{3.8pt}
\renewcommand{\arraystretch}{1.15}

\begin{adjustbox}{max width=\textwidth}
\begin{tabular}{lcccccccc}
\toprule
\multirow{2}{*}{\makecell[c]{Sampler}}
& \multicolumn{3}{c}{Sudoku}
& \multicolumn{1}{c}{Sudoku-Extreme}
& \multicolumn{2}{c}{N-Queens}
& \multicolumn{2}{c}{Latin} \\
\cmidrule(lr){2-4}
\cmidrule(lr){5-5}
\cmidrule(lr){6-7}
\cmidrule(lr){8-9}
& \makecell{21 clues}
& \makecell{Medium}
& \makecell{Hard}
& \makecell{pass@1}
& \makecell{random}
& \makecell{gen}
& \makecell{random}
& \makecell{gen}\\
\midrule

\multicolumn{9}{c}{
  \textbf{Baseline}
  \textnormal{(no self-correction loss)}
} \\
\addlinespace[1pt]

DDPM
& \pmv{4.5}{0.1}
& \pmv{1.3}{0.2}
& \pmv{3.7}{0.1}
& \pmv{8.7}{0.4}
& \pmv{13.1}{0.0}
& \pmv{24.6}{0.7}
& \pmv{2.8}{0.6}
& \pmv{1.1}{0.4} \\

Euler
& \pmv{6.0}{0.2}
& \pmv{1.8}{0.1}
& \pmv{5.3}{0.0}
& \pmv{9.6}{0.9}
& \pmv{17.6}{0.8}
& \pmv{29.1}{2.8}
& \pmv{3.9}{0.1}
& \pmv{2.1}{0.3} \\

EM
& \pmv{5.8}{0.0}
& \pmv{1.8}{0.0}
& \pmv{4.6}{0.0}
& \pmv{10.0}{0.3}
& \pmv{17.3}{1.0}
& \pmv{27.4}{2.4}
& \pmv{3.4}{0.3}
& \pmv{1.9}{0.5} \\

\addlinespace[2pt]
\cdashline{1-9}
\addlinespace[1pt]

EM decay
& \pmv{0.9}{0.0}
& \pmv{0.3}{0.0}
& \pmv{1.0}{0.1}
& \pmv{4.7}{0.1}
& \pmv{7.1}{0.3}
& \pmv{18.9}{3.0}
& \pmv{0.3}{0.1}
& \pmv{0.1}{0.1} \\

\makecell[l]{Tweedie reprojection\\(ours)}
& \best{\pmv{0.3}{0.1}}
& \best{\pmv{0.1}{0.0}}
& \best{\pmv{0.4}{0.0}}
& \best{\pmv{4.0}{0.1}}
& \best{\pmv{4.7}{1.0}}
& \best{\pmv{13.6}{0.3}}
& \best{\pmv{0.1}{0.1}}
& \best{\pmv{0.0}{0.0}} \\

\midrule

\multicolumn{9}{c}{
  \textbf{Self-correction loss}
  \textnormal{(with $\lambda_{\mathrm{simple}}=0.1$)}
} \\
\addlinespace[1pt]

DDPM
& \pmv{0.8}{0.1}
& \pmv{0.3}{0.0}
& \pmv{1.3}{0.3}
& \pmv{6.5}{0.3}
& \pmv{6.5}{0.1}
& \pmv{12.5}{0.2}
& \pmv{0.2}{0.1}
& \pmv{0.9}{0.4} \\

Euler
& \pmv{1.6}{0.2}
& \pmv{0.5}{0.0}
& \pmv{3.8}{0.3}
& \pmv{8.1}{0.7}
& \pmv{9.6}{1.0}
& \pmv{17.5}{1.9}
& \pmv{0.7}{0.3}
& \pmv{5.2}{0.6} \\

EM
& \pmv{1.4}{0.2}
& \pmv{0.5}{0.0}
& \pmv{2.5}{0.5}
& \pmv{7.7}{0.3}
& \pmv{8.7}{1.1}
& \pmv{17.1}{0.1}
& \pmv{0.4}{0.2}
& \pmv{3.2}{1.2} \\

\addlinespace[2pt]
\cdashline{1-9}
\addlinespace[1pt]

EM decay
& \best{\pmv{0.1}{0.0}}
& \best{\pmv{0.0}{0.0}}
& \best{\pmv{0.1}{0.0}}
& \pmv{3.5}{0.1}
& \pmv{3.1}{0.2}
& \best{\pmv{7.5}{0.8}}
& \best{\pmv{0.0}{0.0}}
& \best{\pmv{0.0}{0.0}} \\

\makecell[l]{Tweedie reprojection\\(ours)}
& \best{\pmv{0.1}{0.1}}
& \best{\pmv{0.0}{0.0}}
& \pmv{0.4}{0.1}
& \best{\pmv{2.7}{0.2}}
& \best{\pmv{2.5}{0.2}}
& \pmv{9.3}{0.5}
& \best{\pmv{0.0}{0.0}}
& {\pmv{0.3}{0.3}} \\

\bottomrule
\end{tabular}
\end{adjustbox}
\end{table}

\subsection{Uniqueness and coverage.}

High validity alone does not rule out repeatedly generating the same
solutions. We therefore measure \textit{valid uniqueness}: the number of distinct
valid decoded boards divided by the number of valid generated boards.
We compute this ratio separately for each training seed and then average
across seeds. Table~\ref{tab:valid_uniqueness} reports the results.
Duplicates are uncommon in these evaluations. For conditional tasks,
however, different samples have different masks and clues, so uniqueness
across the batch does not establish diversity for a fixed conditioning
input.

\begin{table}[t]
\centering
\caption{\textbf{Uniqueness among valid generated boards.}
Entries are percentages, shown as baseline
\(\rightarrow\) self-correction and averaged across training seeds.
Standard deviations are omitted. Random denotes random conditioning;
generation denotes unconditional sampling.}
\label{tab:valid_uniqueness}
\small
\begin{tabular}{lcccc}
\toprule
Sampler & Sudoku 21 clues & Sudoku Medium & Sudoku Hard
& Sudoku-Extreme \\
\midrule
DDPM
& \(100.0\to100.0\) & \(100.0\to99.9\)
& \(100.0\to100.0\) & \(100.0\to99.8\) \\
Euler
& \(100.0\to100.0\) & \(99.9\to99.9\)
& \(100.0\to100.0\) & \(100.0\to100.0\) \\
EM
& \(100.0\to100.0\) & \(100.0\to99.9\)
& \(100.0\to100.0\) & \(100.0\to99.9\) \\
EM decay
& \(100.0\to100.0\) & \(99.8\to99.9\)
& \(100.0\to100.0\) & \(99.7\to99.7\) \\
Tweedie
& \(100.0\to100.0\) & \(99.9\to99.9\)
& \(100.0\to100.0\) & \(99.9\to99.7\) \\
\midrule
Sampler & \(N\)-Queens random & \(N\)-Queens generation
& Latin random & Latin generation \\
\midrule
DDPM
& \(97.0\to96.5\) & \(100.0\to99.8\)
& \(100.0\to100.0\) & \(100.0\to100.0\) \\
Euler
& \(98.2\to97.0\) & \(100.0\to100.0\)
& \(100.0\to100.0\) & \(100.0\to100.0\) \\
EM
& \(100.0\to100.0\) & \(100.0\to100.0\)
& \(100.0\to100.0\) & \(100.0\to100.0\) \\
EM decay
& \(100.0\to100.0\) & \(100.0\to100.0\)
& \(100.0\to100.0\) & \(100.0\to100.0\) \\
Tweedie
& \(96.0\to95.7\) & \(99.8\to99.5\)
& \(100.0\to100.0\) & \(100.0\to100.0\) \\
\bottomrule
\end{tabular}
\end{table}

For \(N=14\), the complete solution space contains \(365{,}596\) boards. The training set contains \(168{,}000\) distinct boards, leaving \(197{,}596\) solutions outside the training set. For each configuration, we generate \(500{,}000\) unconditional samples and count the distinct valid boards. We report total coverage as well as coverage within and outside the training set, using the size of each set as the corresponding denominator.

Self-correction increases total coverage from \(20.0\%\) to \(44.8\%\) with EM decay and from \(24.6\%\) to \(33.8\%\) with Tweedie. Coverage is similar within and outside the training set. Thus, the validity gains are accompanied by broader coverage of both training and unseen solutions, rather than repeated generation of a small set of training boards.

\begin{table}[t]
\centering
\caption{\textbf{Coverage of the complete \(N=14\) Queens solution space.}
Each configuration uses \(500{,}000\) unconditional samples.
The two count columns report distinct valid generated boards.
Coverage percentages use denominators \(365{,}596\), \(168{,}000\),
and \(197{,}596\) for the full, training, and outside-training sets,
respectively.}
\label{tab:nqueens_coverage}
\small
\setlength{\tabcolsep}{4pt}
\begin{tabular}{llrrrrr}
\toprule
Sampler & Training
& \makecell{Total\\coverage (\%)}
& \makecell{Distinct\\in training}
& \makecell{Distinct\\outside training}
& \makecell{Training\\coverage (\%)}
& \makecell{Outside\\coverage (\%)} \\
\midrule
EM decay & Baseline
& 20.0 & 33{,}523 & 39{,}629 & 20.0 & 20.1 \\
EM decay & Self-correction
& 44.8 & 75{,}360 & 88{,}436 & 44.9 & 44.8 \\
Tweedie & Baseline
& 24.6 & 41{,}274 & 48{,}572 & 24.6 & 24.6 \\
Tweedie & Self-correction
& 33.8 & 57{,}362 & 66{,}063 & 34.1 & 33.4 \\
\bottomrule
\end{tabular}
\end{table}

\subsection{Multiple completions for the same Sudoku clues.}

We also test whether the model can generate different valid completions for a fixed conditioning input. We select \(50\) Sudoku puzzles with \(32\)--\(38\) revealed cells and enumerate all valid completions by backtracking. Each puzzle admits between \(2\) and \(12\) solutions, with \(297\) solutions in total. For each puzzle, we run Tweedie reprojection with \(1{,}000\) independent noise realizations under baseline and self-correction training. Both configurations recover \(139\) of the \(297\) solutions, or \(46.8\%\). Thus, training against individual clean targets does not restrict sampling to a single completion for each set of clues. However, neither configuration recovers all possible completions, and self-correction does not improve the aggregate coverage in this experiment.

\section{Trajectories under different samplers}
\label{app:visualization}

{
\paragraph{When do decoded predictions stop changing?}

We define the commitment time \(\tau^\star\) as the earliest recorded reverse-progress value after which the decoded prediction remains equal to its final decoded output, with \(\tau=0\) at noise and \(\tau=1\) at data. In the pooled results of Table~\ref{tab:commitment_time}, final-valid predictions stabilize earlier than final-invalid ones. Self-correction is associated with later stabilization in both groups: for final-invalid trajectories, the mean commitment time increases from \(0.92\) to \(0.98\) under DDPM and from \(0.89\) to \(0.99\) under Tweedie. This is consistent with self-correction allowing the model to revise its predictions for longer, although these revisions do not necessarily lead to a valid solution.

\begin{table}[t]
\centering
\caption{\textbf{Commitment time by final validity.}
We averaged results across nine tasks with \(256\) samples each (separately for final-valid and final-invalid outputs).}
\label{tab:commitment_time}
\small
\begin{tabular}{llcrcr}
\toprule
Sampler & Training
& \(\tau^\star_{\mathrm{valid}}\)
& \(n_{\mathrm{valid}}\)
& \(\tau^\star_{\mathrm{invalid}}\)
& \(n_{\mathrm{invalid}}\) \\
\midrule
DDPM & Baseline
& 0.72 & 1229 & 0.92 & 1075 \\
DDPM & Self-correction
& 0.88 & 1755 & 0.98 & 549 \\
\midrule
Tweedie & Baseline
& 0.69 & 1909 & 0.89 & 395 \\
Tweedie & Self-correction
& 0.83 & 2019 & 0.99 & 285 \\
\bottomrule
\end{tabular}
\end{table}

\paragraph{When does Tweedie reprojection fail?}
\label{app:tweedie_failures}
We distinguish failures according to whether a valid clean proposal appears during sampling. A failure is denoiser-related when the final Tweedie output is invalid and the decoded clean proposal was never valid at any recorded reverse step. In a diagnostic batch of \(256\) unconditional \(N\)-Queens trajectories, all failures are of this type. On Sudoku-Extreme, \(38\%\) of Tweedie failures are denoiser-related. In the remaining \(62\%\), a valid proposal appears at least once, but the trajectory still ends in an invalid sample. Poor proposal quality is therefore one limitation of Tweedie reprojection, but finding a valid proposal during sampling does not by itself guarantee a valid final output.

\paragraph{Visualizations}

We show example sampling trajectories for Latin squares (Figure~\ref{fig:app:vis_latin}), \(N\)-queens
(Figure~\ref{fig:app:vis_nqueens}), graph connectivity (Figure~\ref{fig:app:gc}), and Sudoku (Figure~\ref{fig:app:sudoku}). Within each task, all sampler rows use the same trained checkpoint; different tasks use different models.
For MNIST-Sudoku, we also show trajectories from the released SRM checkpoint
\cite{wewer2025spatial} under Euler sampling (equivalently, deterministic DDIM on the linear path;
bottom) and Tweedie reprojection (top),
visualizing both \(x_t\) (Figure~\ref{fig:app:sudoku_mnist_xt}) and the corresponding clean predictions \(\hat x_0\) (Figure~\ref{fig:app:sudoku_mnist_x0}).
In both cases, individual cells resemble recognizable MNIST digits, which we
refer to as local correctness. However, the DDIM trajectory does not produce a
globally valid Sudoku grid; for example, the central \(3\times3\) block contains
the digit \(1\) twice. In the same example, Tweedie reprojection ends in a
valid solution.
}

\begin{figure}[ht]
    \centering
    \includegraphics[
    width=\textwidth,
    trim={0cm 0cm 0 3.1cm},
    clip
    ]{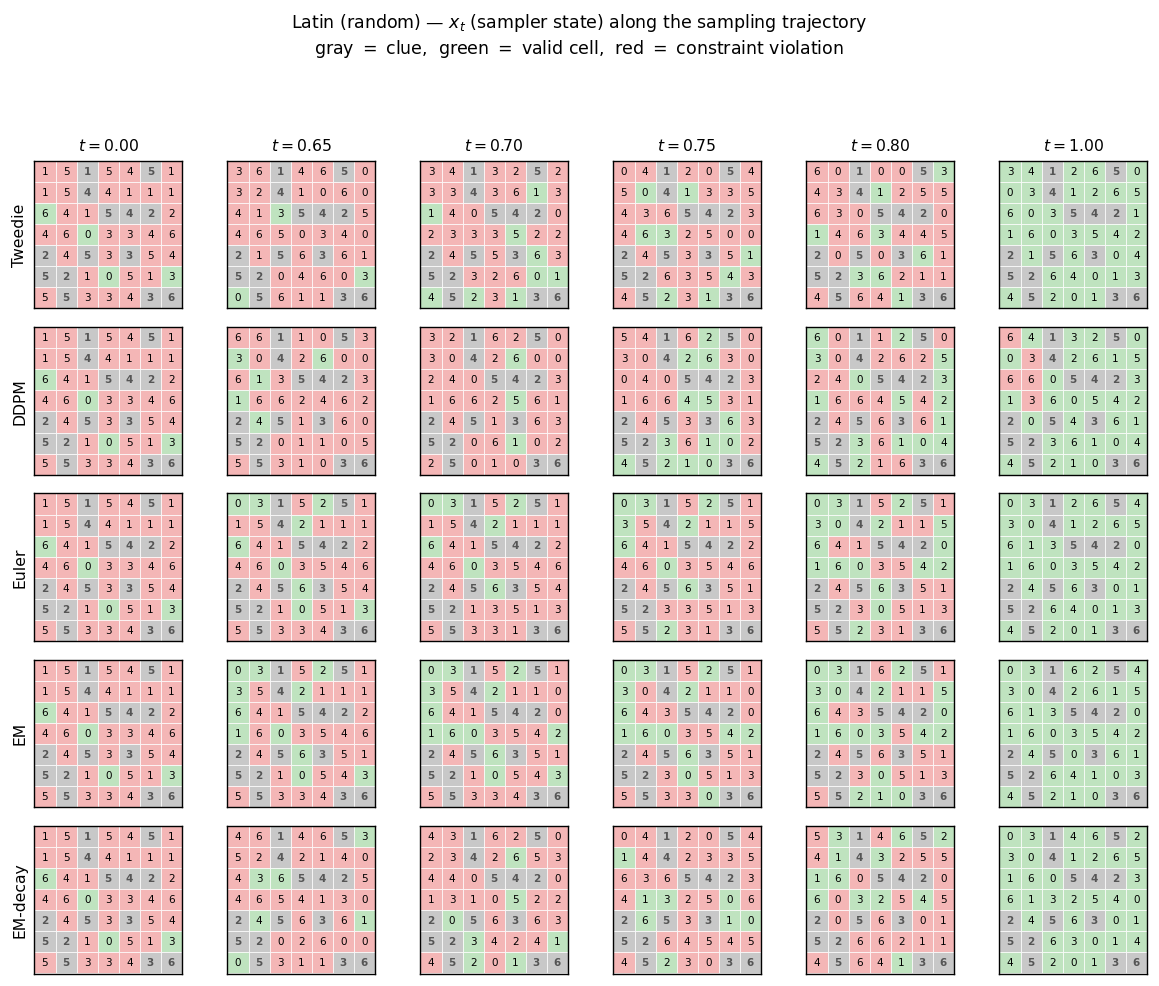}
\caption{Latin squares trajectory}
    \label{fig:app:vis_latin}
\end{figure}

\begin{figure}[ht]
    \centering
    \includegraphics[
    width=\textwidth,
    trim={0cm 0cm 0 3.1cm},
    clip
    ]{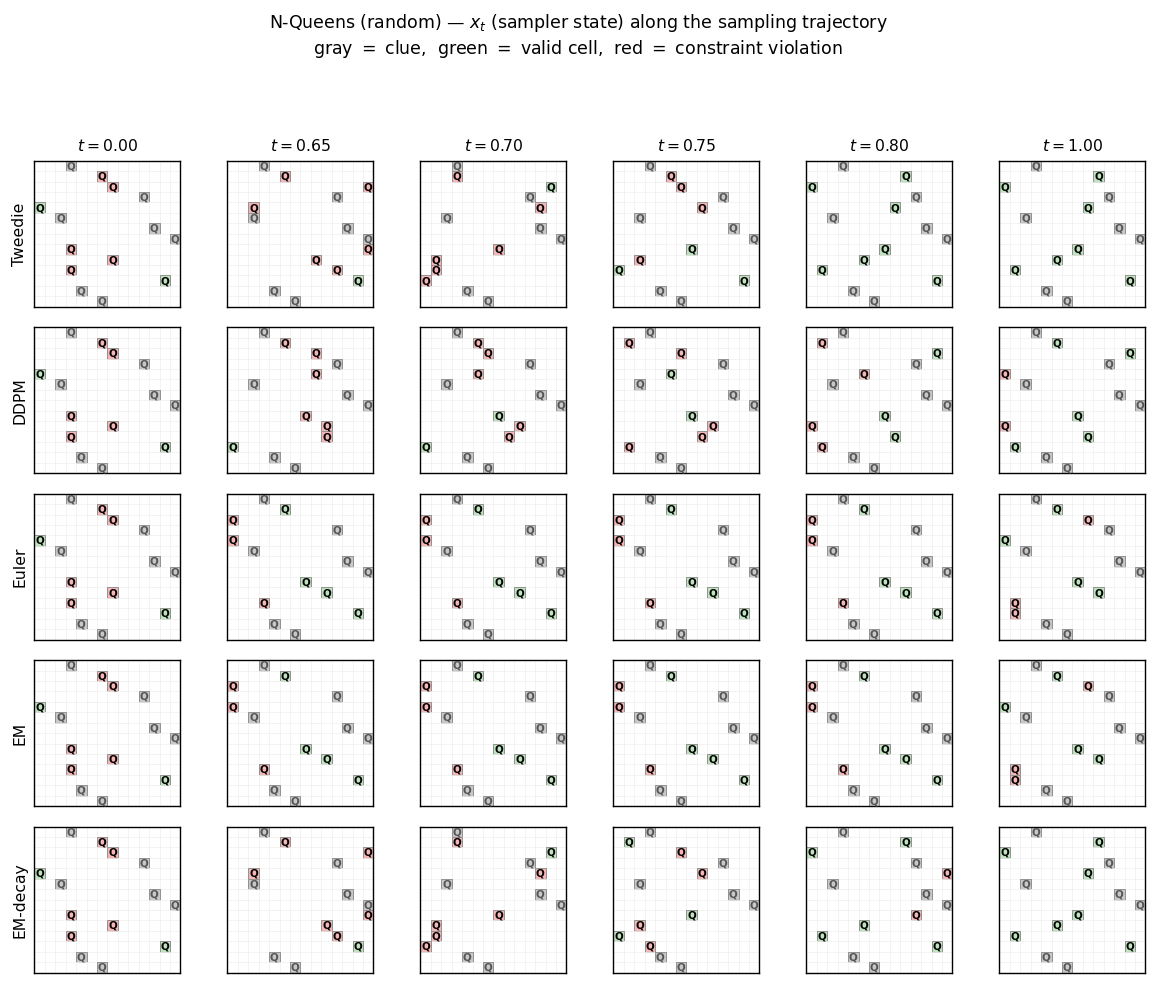}
\caption{N-Queens trajectory}
    \label{fig:app:vis_nqueens}
\end{figure}

\begin{figure}[ht]
    \centering
    \includegraphics[
    width=\textwidth,
    trim={0cm 0cm 0 2.9cm},
    clip
    ]{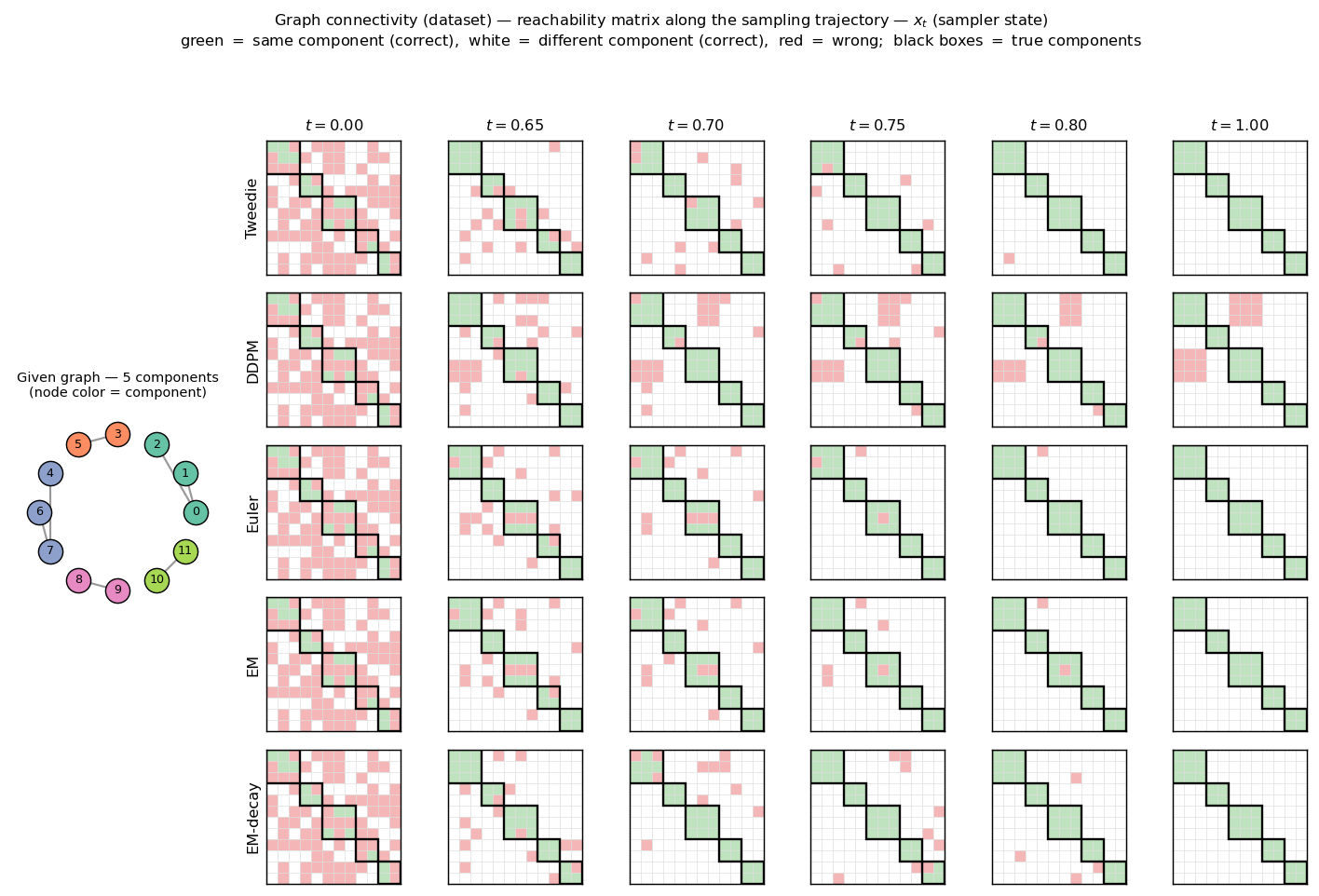}
\caption{Graph connectivity trajectory}
    \label{fig:app:gc}
\end{figure}

\begin{figure}[ht]
    \centering
    \includegraphics[
    width=\textwidth,
    trim={0cm 0cm 0 3.1cm},
    clip
    ]{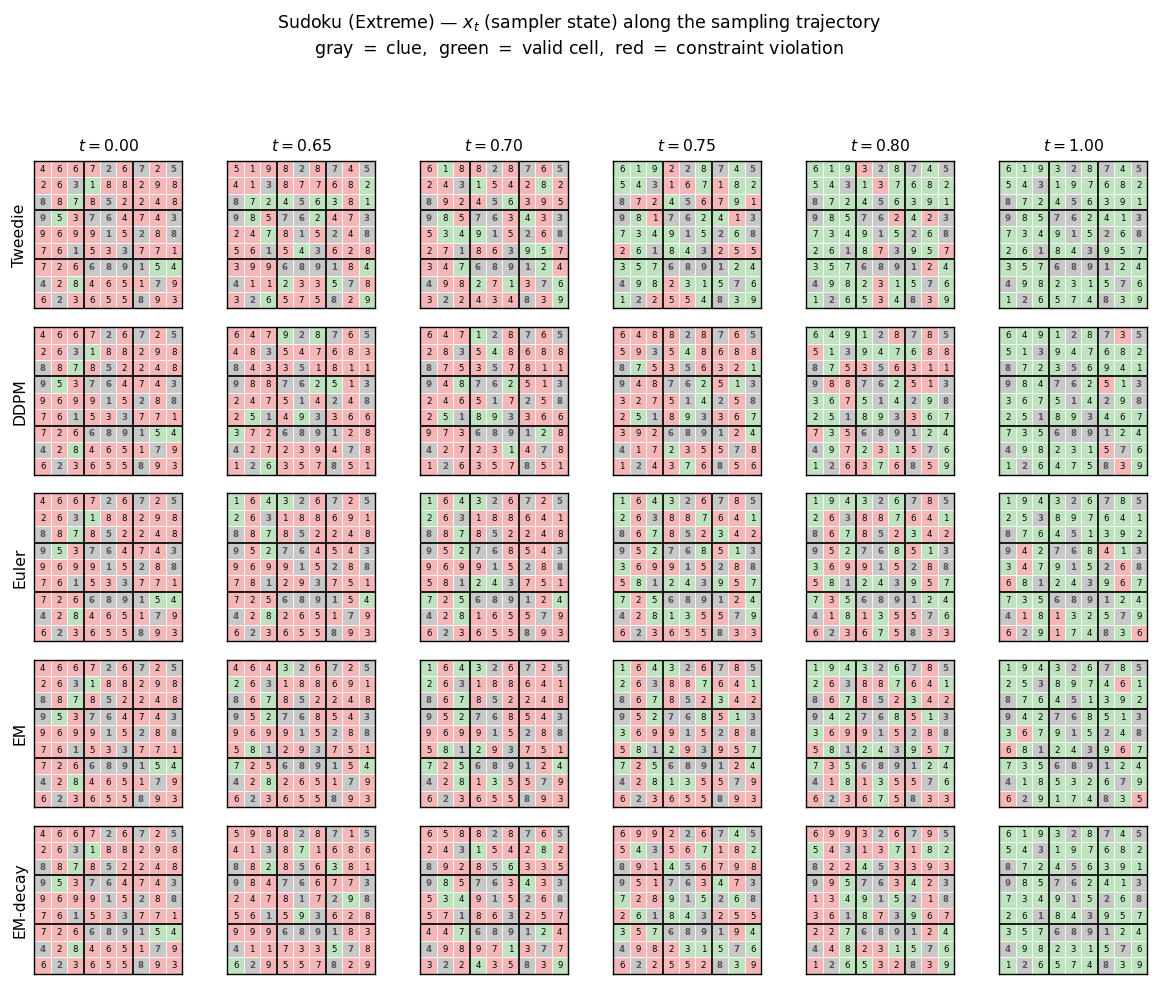}
\caption{Sudoku trajectory}
    \label{fig:app:sudoku}
\end{figure}

\begin{figure}[ht]
    \centering
    \includegraphics[
    width=\textwidth,
    trim={0cm 0cm 0 0cm},
    clip
    ]{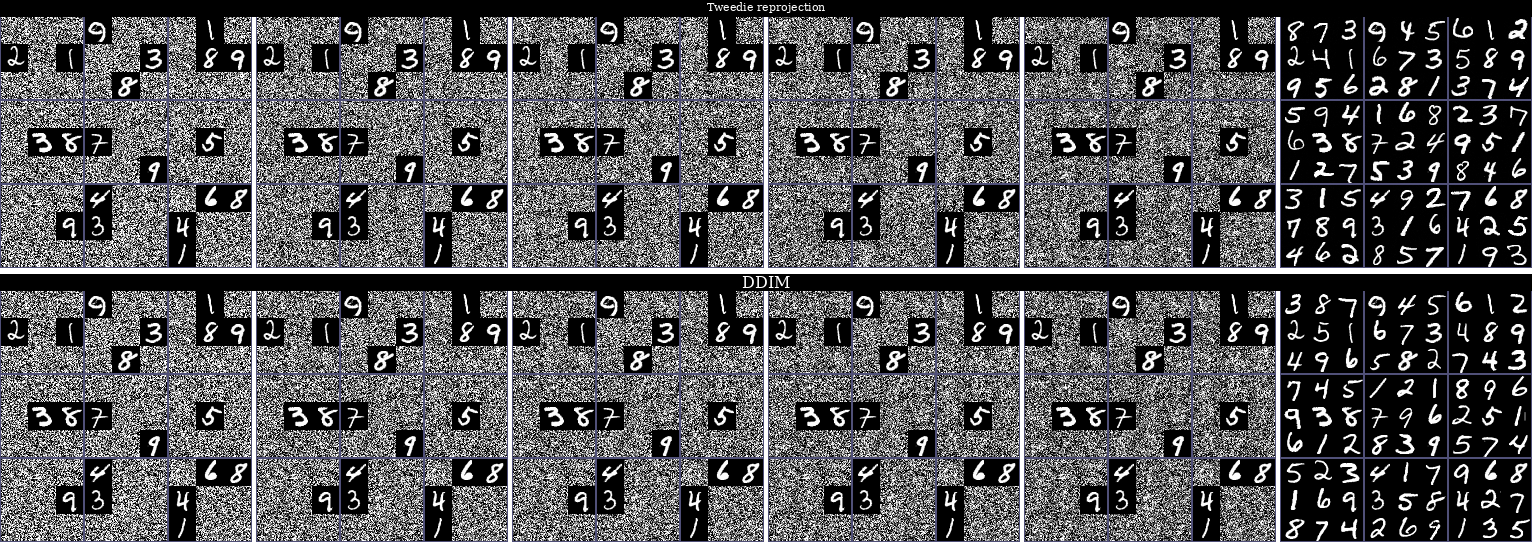}
\caption{Sudoku-MNIST trajectory ($x_t$)}
    \label{fig:app:sudoku_mnist_xt}
\end{figure}

\begin{figure}[ht]
    \centering
    \includegraphics[
    width=\textwidth,
    trim={0cm 0cm 0 0cm},
    clip
    ]{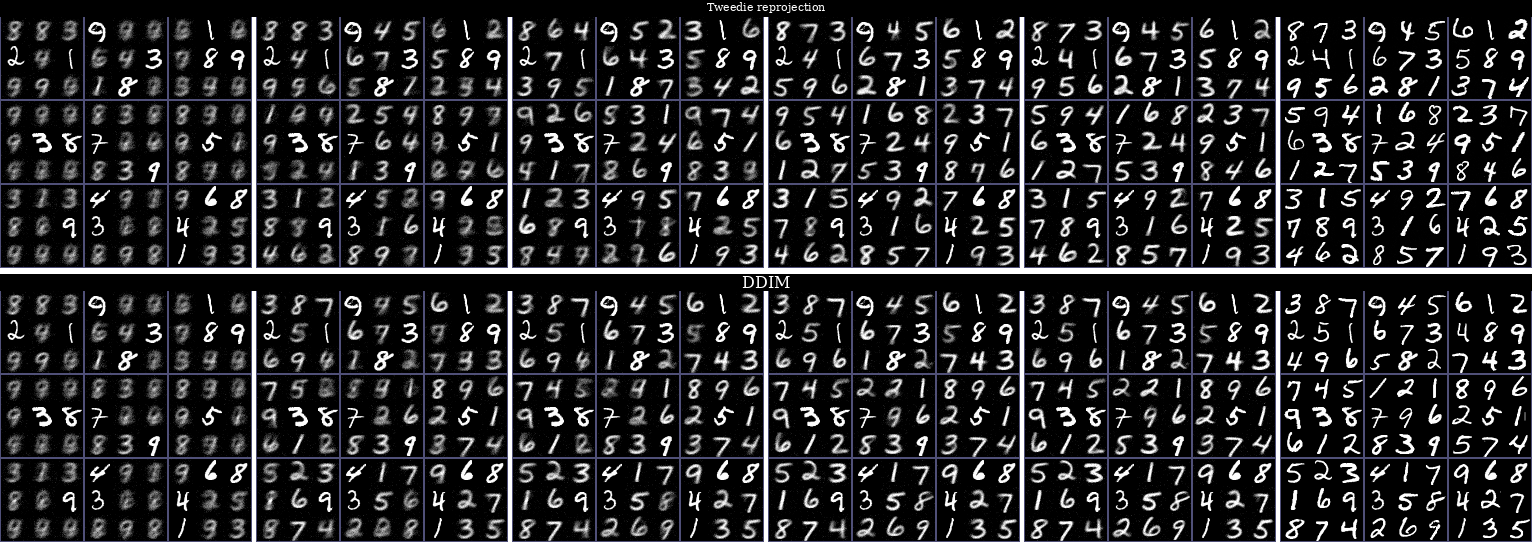}
\caption{Sudoku-MNIST trajectory ($\hat{x}_0$)}
    \label{fig:app:sudoku_mnist_x0}
\end{figure}


\end{document}